\def\draft{0} 
\newif\ifdraft
\setlist{leftmargin=10mm}
\def\draft{0} 
\newif\ifdraft
\newcommand{\add}[1]{\textcolor{purple}{#1}}
\newcommand{\revise}[1]{\textcolor{red}{#1}}
\newcommand{\add}[1]{#1}
\newcommand{\revise}[1]{#1}
\newenvironment{packeditemize}{
\begin{list}{$\bullet$}{
\setlength{\labelwidth}{8pt}
\setlength{\itemsep}{0pt}
\setlength{\leftmargin}{\labelwidth}
\addtolength{\leftmargin}{\labelsep}
\setlength{\parindent}{0pt}
\setlength{\listparindent}{\parindent}
\setlength{\parsep}{0pt}
\setlength{\topsep}{3pt}}}{\end{list}}
\theoremstyle{plain}
\newtheorem{theorem}{Theorem}[section]
\newtheorem{lemma}[theorem]{Lemma}
\theoremstyle{definition}
\newtheorem{definition}[theorem]{Definition}
\theoremstyle{remark}
\newtheorem{remark}[theorem]{Remark}
\newenvironment{customthm}[1]
  {\innercustomthm}
  {\endinnercustomthm}
\newcommand{\E}{\mathbb{E}}
\newcommand{\Var}{\mathrm{Var}}
\newcommand{\R}{\mathbb{R}}
\newcommand{\eps}{\varepsilon}
\newcommand{\D}{\mathcal{D}}
\newcommand{\norm}[1]{\left\lVert#1\right\rVert}
\newcommand{\N}{\mathcal{N}}
\newcommand{\I}{\mathcal{I}}
\DeclareMathOperator*{\argmin}{\arg\!\min}
\newcommand{\g}{\nabla}
\newcommand{\iden}{\bm{1}}
\newcommand{\mS}{\mathcal{S}}
\newcommand{\own}{\ni}
\newcommand{\notown}{\not\own}
\newcommand{\A}{\mathcal{A}}
\newcommand{\metric}{\texttt{acc}}
\newcommand{\std}{\texttt{Std}}
\newcommand{\unif}{\mathrm{Unif}}
\newcommand{\semi}{\mathrm{semi}}
\newcommand{\shap}{\mathrm{shap}}
\newcommand{\loo}{\mathrm{loo}}
\newcommand{\banz}{\mathrm{banz}}
\newcommand{\mc}{\mathrm{MC}}
\newcommand{\gt}{\mathrm{MSR}}
\newcommand{\Sowni}{\mS_{\own i}}
\newcommand{\Snotowni}{\mS_{\notown i}}
\newcommand{\bin}{\mathrm{Bin}}
\newcommand{\Uhat}{\widehat{U}}
\newcommand{\U}{\mathcal{U}}
\newcommand{\Utau}{\U_{i, j}^{(\tau)}}
\newcommand{\safe}{\texttt{Safe}}
\newcommand{\tildephi}{\widetilde \phi}
\newcommand{\tildeO}{\widetilde{O}}
\newcommand{\lc}{\mathrm{LC}}
\newcommand{\ind}{\mathds{1}}
\author[1]{Jiachen T. Wang}
\author[2]{Ruoxi Jia}
\affil[1]{Princeton University}
\affil[2]{Virginia Tech\protect\\
\texttt{\small tianhaowang@princeton.edu}, 
\texttt{\small ruoxijia@vt.edu}
}
\date{}
\title{Data Banzhaf: A Robust Data Valuation Framework for Machine Learning}
\begin{document}



\maketitle

\begin{abstract}
Data valuation has wide use cases in machine learning, including improving data quality and creating economic incentives for data sharing. This paper studies the robustness of data valuation to noisy model performance scores. Particularly, we find that the inherent randomness of the widely used stochastic gradient descent can cause existing data value notions (e.g., the Shapley value and the Leave-one-out error) to produce inconsistent data value rankings across different runs. To address this challenge, we introduce the concept of safety margin, which measures the robustness of a data value notion. We show that the Banzhaf value, a famous value notion that originated from cooperative game theory literature, achieves the largest safety margin among all semivalues (a class of value notions that satisfy crucial properties entailed by ML applications and include the famous Shapley value and Leave-one-out error). We propose an algorithm to efficiently estimate the Banzhaf value based on the Maximum Sample Reuse (MSR) principle. Our evaluation demonstrates that the Banzhaf value outperforms the existing semivalue-based data value notions on several ML tasks such as learning with weighted samples and noisy label detection. Overall, our study suggests that when the underlying ML algorithm is stochastic, the Banzhaf value is a promising alternative to the other semivalue-based data value schemes given its computational advantage and ability to robustly differentiate data quality.\footnote{Code is available at \url{https://github.com/Jiachen-T-Wang/data-banzhaf}.} 
\end{abstract}

\section{ Introduction }
\label{sec:intro}


\revise{
Data valuation, i.e., quantifying the usefulness of a data source, is an essential component in developing machine learning (ML) applications. For instance, evaluating the worth of data plays a vital role in cleaning bad data \citep{tang2021data, karlavs2022data} and understanding the model's test-time behavior \citep{koh2017understanding}. Furthermore, determining the value of data is crucial in creating incentives for data sharing and in implementing policies regarding the monetization of personal data \citep{ghorbani2019data, zhu2019incentive}. 
}


\revise{Due to the great potential in real applications,}
there has been a surge of research efforts on developing data value notions for supervised ML~\citep{jia2019towards,ghorbani2019data,yan2020ifyoulike,ghorbani2021data,kwon2021beta,yoon2020data}. 
In the ML context, a data point's value 
depends on other data points used in model training. 
For instance, a data point's value will decrease if we add extra data points that are similar to the existing one into the training set. 
To accommodate this interplay, current data valuation techniques typically start by defining the ``utility'' of a \emph{set} of data points, and then measure the value of an \emph{individual} data point based on the change of utility when the point is added to an existing dataset. 
For ML tasks, the utility of a dataset is naturally chosen to be the performance score (e.g., test accuracy) of a model trained on the dataset. 

\revise{However, the utility scores can be noisy and unreliable.} 
Stochastic training methods such as stochastic gradient descent (SGD) are widely adopted in ML, especially for deep learning. The models trained with stochastic methods are inherently random, and so are their performance scores. This, in turn, makes the data values calculated from the performance scores \emph{noisy}. Despite being ignored in past research, we find that the noise in a typical learning process is actually substantial enough to make different runs of the same data valuation algorithm produce inconsistent value rankings. Such inconsistency can pose challenges for building reliable applications based on the data value scores and rankings, e.g., low-quality data identification.

In this paper, we initiate the study of the robustness of data valuation to noisy model performance scores. Our technical contributions are listed as follows.


\underline{\textbf{Theoretical framework for quantifying robustness.}} 
We start by formalizing what it means mathematically for a data value notion to be robust. We introduce the concept of \emph{safety margin}, which is the magnitude of the largest perturbation of model performance scores that can be tolerated so that the value order of every pair of data points remains unchanged. We consider the two most popular data valuation schemes---the Shapley value and the Leave-one-out (LOO) error and show that the safety margin of the Shapley value is greater than that of the LOO error. Our results shed light on a common observation in the past works~\citep{ghorbani2019data,jia2019scalability} that the Shapley value often outperforms the LOO error in identifying low-quality training data.


\underline{\textbf{Banzhaf value: a robust data value notion.}}
Surprisingly, we found that the Banzhaf value \citep{banzhaf1964weighted}, a classic value notion from cooperative game theory that was proposed more than half a century ago, achieves the largest safety margin among all semivalues --- a collection of value notions (including LOO error and the Shapley value) that satisfy essential properties of a proper data value notion in the ML context~\citep{kwon2021beta}. 
Particularly, the safety margin of the Banzhaf value is exponentially larger than that of the Shapley value and the LOO error. 

\underline{\textbf{Efficient Banzhaf value estimation algorithm.}}
Similar to the Shapley value, the Banzhaf value is also costly in computation. 
We present an efficient estimation algorithm based on the Maximum Sample Reuse (MSR) principle, which can achieve $\ell_\infty$ and $\ell_2$ error guarantees for approximating the Banzhaf value with logarithmic and nearly linear sample complexity, respectively. 
We show that the existence of an efficient MSR estimator is \emph{unique} for the Banzhaf value among all existing semivalue-based data value notions. 
\revise{
We derive a lower bound of sample complexity for the Banzhaf value estimation, and show that our MSR estimator's sample complexity \emph{is close to}
this lower bound.} 
Additionally, we show that the MSR estimator is robust against the noise in performance scores. 

\underline{\textbf{Empirical evaluations.}} 
Our evaluation demonstrates the ability of the Banzhaf value in preserving value rankings given noisy model performance scores. We also empirically validate the sample efficiency of the MSR estimator for the Banzhaf value. We show that the Banzhaf value outperforms the state-of-the-art semivalue-based data value notions (including the Shapley value, the LOO error, and the recently proposed Beta Shapley~\citep{kwon2021beta}) on several ML tasks including bad data detection and data reweighting, when the underlying learning algorithm is SGD.

We call the suite of our data value notion and the associated estimation algorithm as the \emph{Data Banzhaf} framework. 
Overall, our work suggests that Data Banzhaf is a promising alternative to the existing semivalue-based data value notions given its computational advantage and the ability to robustly distinguish data quality in the presence of learning stochasticity. 
\section{ Background: From Leave-One-Out to Shapley to Semivalue }
\label{sec:background}

In this section, we formalize the data valuation problem for ML. Then, we review the concept of LOO and Shapley value---the most popular data value notions in the existing literature, as well as the framework of \emph{semivalues}, which are recently introduced as a natural relaxation of Shapley value in the ML context.


\textbf{Data Valuation Problem Set-up.} Let $N=\{1,\ldots,n\}$ denotes a training set of size $n$. The objective of data valuation is to assign a score to each training data point in a way that reflects their contribution to model training. 
We will refer to these scores as \emph{data values}. 
To analyze a point's ``contribution'', we define a \emph{utility function} $U: 2^N \rightarrow \R$, which maps any subset of the training set to a score indicating the usefulness of the subset. $2^N$ represents the power set of $N$, i.e., the collection of all subsets of $N$, including the empty set and $N$ itself. For classification tasks, a common choice for $U$ is the validation accuracy of a model trained on the input subset. Formally, we have $U(S) = \metric(\A(S))$, where $\A$ is a learning algorithm that takes a dataset $S$ as input and returns a model, and $\metric$ is a metric function to evaluate the performance of a given model, e.g., the accuracy of a model on a hold-out test set. Without loss of generality, we assume throughout the paper that $U(S) \in [0, 1]$. For notational simplicity, we sometimes denote $S \cup i := S \cup \{i\}$ and $S \setminus i := S \setminus \{i\}$ for singleton $\{i\}$, where $i \in N$ represents a single data point. 

We denote the data value of data point $i \in N$ computed from $U$ as $\phi(i; U)$. We review the famous data value notions in the following.


\textbf{LOO Error.} A simple data value measure is leave-one-out (LOO) error, which calculates the change of model performance when the data point $i$ is excluded from the training set $N$:
\begin{align}
    \phi_\text{loo}(i;U) := U(N)-U(N\setminus i)
\end{align}
However, many empirical studies~\citep{ghorbani2019data,jia2019scalability} suggest that it underperforms other alternatives in differentiating data quality.

\textbf{Shapley Value.} The Shapley value is arguably the most widely studied scheme for data valuation. 
At a high level, it appraises each point based on the (weighted) average utility change caused by adding the point into different subsets. The Shapley value of a data point $i$ is defined as 

{\small
\begin{align}
&\phi_{\shap}\left(i;U\right) \nonumber \\
&:= \frac{1}{n} \sum_{k=1}^{n} {n-1 \choose k-1}^{-1} \sum_{S \subseteq N \setminus \{i\}, |S|=k-1} \left[ U(S \cup i) - U(S) \right] \nonumber
\end{align}
}

The popularity of the Shapley value is attributable to the fact that it is the \emph{unique} data value notion satisfying the following four axioms~\citep{shapley1953value}:
\begin{packeditemize}
    \item Dummy player: if $U\left(S \cup i\right)=U(S)+c$ for all $S \subseteq N \setminus i$ and some $c \in \mathbb{R}$, then $\phi\left(i ; U\right)=c$.
    \item Symmetry: if $U(S \cup i) = U(S \cup j)$ for all $S \subseteq N \setminus \{i, j\}$, then $\phi(i; U)=\phi(j; U)$. 
        \item Linearity: For utility functions $U_1, U_2$ and any $\alpha_1, \alpha_2 \in \R$, $\phi \left(i; \alpha_{1} U_{1}+\alpha_{2} U_{2}\right)=\alpha_{1} \phi\left(i ; U_{1}\right)+$ $\alpha_{2} \phi\left(i ; U_{2}\right)$.
    \item Efficiency: for every $U, \sum_{i \in N} \phi(i ; U)=U(N)$.
\end{packeditemize}
The difference $U(S \cup i) - U(S)$ is often termed the \emph{marginal contribution} of data point $i$ to subset $S \subseteq N \setminus i$. 
We refer the readers to \citep{ghorbani2019data,jia2019towards} for a detailed discussion about the interpretation of dummy player, symmetry, and linearity axioms in ML. 
The \emph{efficiency} axiom, however, receives more controversy than the other three. 
The efficiency axiom requires the total sum of data values to be equal to the utility of full dataset $U(N)$. 
Recent work~\citep{kwon2021beta} argues that this axiom is considered not essential in ML. Firstly, the choice of utility function in the ML context is often not directly related to monetary value so it is unnecessary to ensure the sum of data values matches the total utility. Moreover, many applications of data valuation, such as bad data detection, are performed based only on the ranking of data values. For instance, multiplying the Shapley value by a positive constant does not affect the ranking of the data values. Hence, there are many data values that do not satisfy the efficiency axiom, but can still be used for differentiating data quality, just like the Shapley value.


\textbf{Semivalue.} The class of data values that satisfy all the Shapley axioms except efficiency is called \emph{semivalues}. It was originally studied in the field of economics and recently proposed to tackle the data valuation problem~\citep{kwon2021beta}.
Unlike the Shapley value, semivalues are \emph{not} unique. The following theorem by the seminal work of \citep{dubey1981value} shows that every semivalue of a player $i$ (in our case the player is a data point) can be expressed as the weighted average of marginal contributions $U(S \cup i)-U(S)$ across different subsets $S \subseteq N \setminus i$. 

\begin{theorem}[Representation of Semivalue \citep{dubey1981value}]
A value function $\phi_{\semi}$ is a semivalue, if and only if, there exists a \emph{weight function} $w:[n] \rightarrow \mathbb{R}$ such that $\sum_{k=1}^{n}{n-1 \choose k-1} w(k)=n$ and the value function $\phi_{\semi}$ can be expressed as follows:

{\small
\begin{equation}
\begin{split}
\phi_{\semi}\left(i ; U, w\right) := \sum_{k=1}^{n} \frac{w(k)}{n} \sum_{\substack{S \subseteq N \setminus \{i\},\\ |S|=k-1}} \left[ U(S \cup i) - U(S) \right] \label{eq:semivalue}
\end{split}
\end{equation}
}
\end{theorem}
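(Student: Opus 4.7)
The plan is to prove the two directions of the equivalence separately, with the nontrivial direction proceeding by reducing to a canonical basis of set functions (the unanimity games) and then inverting a triangular linear system to recover $w$.

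First I would handle the easy direction: assume $\phi_{\semi}(i;U,w)$ is defined by the weighted sum of marginal contributions in \eqref{eq:semivalue} with $\sum_{k=1}^{n}\binom{n-1}{k-1}w(k)=n$, and verify the three axioms of a semivalue. Linearity is immediate because $U \mapsto U(S \cup i) - U(S)$ is linear and $\phi_{\semi}$ is a linear combination of such maps. Symmetry follows by the natural bijection $S \mapsto S$ between subsets $S \subseteq N \setminus \{i,j\}$ (the weights depend only on $|S|$). For the dummy axiom, if $U(S \cup i) = U(S) + c$ for every $S \subseteq N \setminus i$, then substituting into \eqref{eq:semivalue} yields $\phi_{\semi}(i;U,w) = \frac{c}{n}\sum_{k=1}^{n}\binom{n-1}{k-1}w(k) = c$, exactly using the normalization constraint.

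For the harder direction, I would use the fact that the set of \emph{unanimity games} $\{U_T : \emptyset \neq T \subseteq N\}$, where $U_T(S) = \ind[T \subseteq S]$, forms a basis of all set functions vanishing at $\emptyset$ (via Möbius inversion, $U = \sum_T a_T U_T$ with unique coefficients $a_T$). By linearity it suffices to compute $\phi_{\semi}(i;U_T)$. A direct check shows that when $i \notin T$ the marginal contribution $U_T(S\cup i) - U_T(S)$ is identically zero, so by the dummy axiom $\phi_{\semi}(i;U_T)=0$. When $i \in T$, the symmetry axiom forces $\phi_{\semi}(i;U_T)$ to depend only on $t := |T|$; denote this common value by $p(t)$. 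Applying the dummy axiom to the particular utility $U(S) = \ind[i \in S] = U_{\{i\}}(S)$ yields $p(1) = 1$.

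Next I would define $w$ by matching the representation formula on unanimity games. For $i \in T$ with $|T|=t$, a short counting argument gives that $U_T(S \cup i) - U_T(S) = 1$ precisely when $T \setminus i \subseteq S$, yielding the triangular system
\begin{equation*}
n\,p(t) \;=\; \sum_{k=t}^{n} \binom{n-t}{k-t}\, w(k), \qquad t=1,\ldots,n.
\end{equation*}
Because the coefficient of $w(t)$ in the $t$-th equation is $\binom{n-t}{0}=1$, this system is upper triangular with unit diagonal, so it has a unique solution $w(1),\ldots,w(n)$ obtainable by back-substitution from $t=n$ down to $t=1$. Finally, the equation at $t=1$ is exactly $\sum_{k=1}^{n}\binom{n-1}{k-1}w(k) = n\,p(1) = n$, so $w$ automatically satisfies the normalization. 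Expanding any $U$ in the unanimity basis and applying linearity then shows $\phi_{\semi}(i;U)$ coincides with the right-hand side of \eqref{eq:semivalue}, completing the proof.

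The main obstacle is the second direction, specifically setting up and inverting the triangular system in a way that makes the normalization $\sum_k \binom{n-1}{k-1} w(k) = n$ emerge cleanly from the dummy axiom rather than being an extra assumption. Everything else is routine once the unanimity-game reduction is in place.
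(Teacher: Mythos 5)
The paper itself offers no proof of this statement: it is quoted as a known representation theorem from \citet{dubey1981value}, so there is no internal argument to compare yours against, and I assess your plan on its own terms. Your route is the standard one and is essentially sound: the axiom verification in the easy direction (with the dummy axiom consuming the normalization $\sum_k \binom{n-1}{k-1}w(k)=n$), the reduction to unanimity games $U_T(S)=\ind[T\subseteq S]$ via M\"obius inversion, the count that the marginal contribution of $i\in T$ equals $1$ exactly when $T\setminus i\subseteq S$ (giving the coefficient $\binom{n-t}{k-t}$), the unit-diagonal triangular system $n\,p(t)=\sum_{k=t}^n\binom{n-t}{k-t}w(k)$ solved by back-substitution, and the observation that the $t=1$ equation together with $p(1)=1$ yields the normalization for free, are all correct.

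One step is genuinely under-justified: your claim that ``the symmetry axiom forces $\phi_{\semi}(i;U_T)$ to depend only on $t=|T|$.'' Applied to the single game $U_T$, the paper's equal-treatment symmetry axiom only yields $\phi(i;U_T)=\phi(j;U_T)$ for $i,j\in T$ (for $S\subseteq N\setminus\{i,j\}$ both $U_T(S\cup i)$ and $U_T(S\cup j)$ vanish, since the other of $i,j$ lies in $T$ but not in the argument); it says nothing directly about two \emph{distinct} coalitions $T\neq T'$ of the same size, and that cross-coalition equality is precisely what a weight function of cardinality alone requires. The fix is short but must be stated: for $i\in T$ and $j\notin T$, set $T'=(T\setminus i)\cup j$ and apply symmetry to the auxiliary game $V=U_T+U_{T'}$; one checks $V(S\cup i)=\ind[T\setminus i\subseteq S]=V(S\cup j)$ for all $S\subseteq N\setminus\{i,j\}$, while linearity and the null-player case give $\phi(i;V)=\phi(i;U_T)$ and $\phi(j;V)=\phi(j;U_{T'})$, so $\phi(i;U_T)=\phi(j;U_{T'})$; since such single-element swaps (combined with the within-$T$ equality) connect all coalitions of a fixed size, the common value $p(t)$ is well defined. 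A final cosmetic point: your unanimity basis spans only games with $U(\emptyset)=0$, so for general $U$ first subtract the constant game $U(\emptyset)\cdot\mathbf{1}$, on which both the semivalue (dummy axiom with $c=0$) and the right-hand side of \eqref{eq:semivalue} (all marginal contributions vanish) give zero, so linearity completes the argument.
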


Semivalues subsume both the Shapley value and the LOO error with $w_{\shap}(k) = {n-1 \choose k-1}^{-1}$ and $w_{\loo}(k) = n \iden[k=n]$, respectively.
Despite the theoretical attraction, the question remains which one of the many semivalues we should adopt. 

\section{Utility Functions Can Be Stochastic }
\label{sec:utility_stochastic}

\newcommand{\width}{1}




In the existing literature, the utility of a dataset $U(S)$ is often defined to be $ \metric(\A(S))$, i.e., the performance of a model $\A(S)$ trained on a dataset $S$. However, many learning algorithms $\A$ such as SGD contain randomness. Since the loss function for training neural networks is non-convex, the trained model depends on the randomness of the training process, e.g., random mini-batch selection. Thus, $U(S)$ defined in this way inherently becomes a randomized function. 
\add{As noted in many studies on the reproducibility of neural network training, the learning stochasticity can introduce large variations into the predictive performance of deep learning models~\citep{summers2021nondeterminism, zhuang2022randomness, raste2022quantifying}. 
On the other hand, the existing data value notions compute the value of data points based on the performance scores of models trained on different data subsets and therefore will also be noisy given stochastic learning algorithms.
In this section, we delve into the influence of learning stochasticity on data valuation results, and show that the run-to-run variability of the resulting data value rankings is large for the existing data value notions.}

\begin{figure*}[t]
    \centering
    \includegraphics[width=\textwidth]{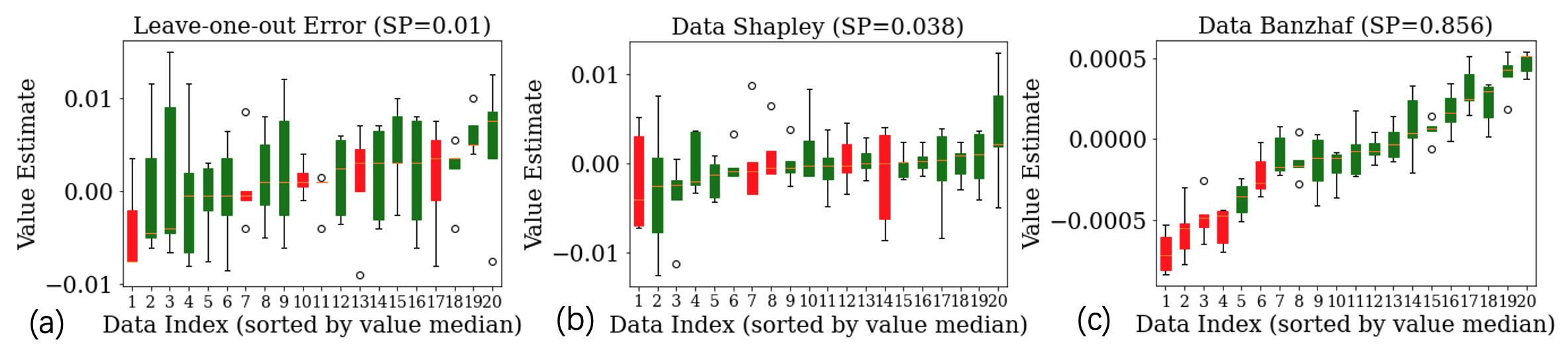}
    \caption{Box-plot of the estimates of (a) LOO, (b) Shapley Value, and (c) Banzhaf value of 20 randomly selected CIFAR10 images, with 5 mislabeled images. 
    The 5 mislabeled images are shown in red and clean images are shown in green. 
    The variance is \emph{only} due to the stochasticity in utility evaluation. 
    `SP' means the average Spearman index across different runs. }
    \label{fig:checkvar-value}
\end{figure*}

\begin{figure}[h]
    \centering
    \includegraphics[width=\columnwidth]{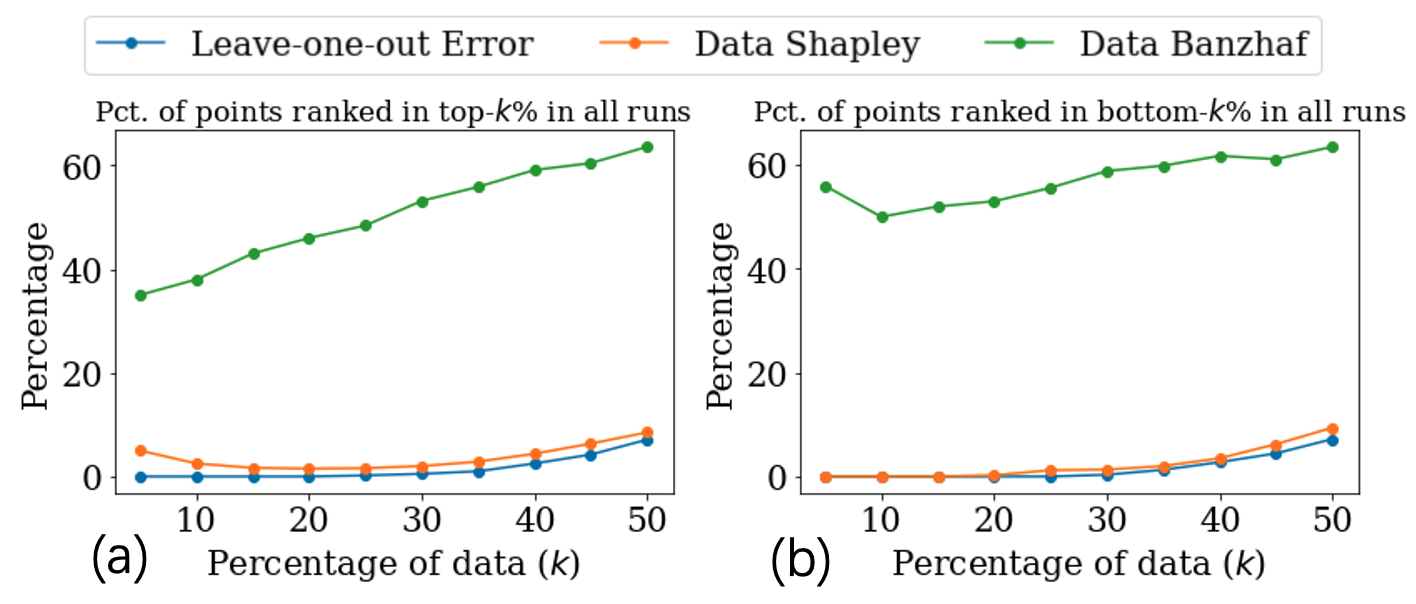}
    \caption{
    The percentage of the CIFAR10 data points that are ranked in (a) top-$k\%$ and (b) bottom-$k\%$ across all runs, among the top/bottom-$k\%$ data points.  
    }
    \label{fig:topk}
\end{figure}


\textbf{Instability of data value rankings.} 
\add{Semivalues are calculated by taking a weighted average of marginal contributions. When the weights are not properly chosen, the noisy estimate of marginal contributions can cause significant instability in ranking the data values. Figure \ref{fig:checkvar-value} (a)-(b) illustrate the distribution of the estimates of two popular data value notions---LOO error and the Shapley value---across 5 runs with different training random seeds. 
The utility function is the accuracy of a neural network trained via SGD on a held-out dataset; we show the box-plot of the estimates' distribution for 20 CIFAR10 images, with 5 of them being mislabeled (marked in red). The experiment settings are detailed in Appendix \ref{appendix:experiment-figuretwo}. As we can see, the variance of the data value estimates caused by learning stochasticity significantly outweighs their magnitude for both LOO and the Shapley value. As a result, the rankings of data values across different runs are largely inconsistent (the average Spearman coefficient of individual points' values across different runs for LOO is $\approx 0.001$ and for Shapley is $\approx 0.038$). Leveraging the rankings of such data values to differentiate data quality is unreliable, as we can see that the 5 mislabeled images' value estimates distribution has a large overlap with the value estimates of the clean images. Further investigation of these data value notion's efficacy in identifying data quality is provided in the Evaluation section. }

When interpreting a learning algorithm, one may be interested in finding a small set of data points with the most positive/negative influences on model performance. In Figure \ref{fig:topk}, we show how many data points are consistently ranked in the top or bottom-$k\%$ across all the runs. Both LOO and the Shapley value has only $<10\%$ data points that are consistently ranked high/low for any $k \le 50\%$. This means that the high/low-influence data points selected by these data value notions have a large run-to-run variation and cannot form a reliable explanation for model behaviors.


\textbf{Redefine $U$ as expected performance.}
To make the data value notions independent of the learning stochasticity, a natural choice is to redefine $U$ to be $U(S):=\E_{\A}[\metric(\A(S))]$, i.e., the \emph{expected performance} of the trained model. However, accurately estimating $U(S)$ under this new definition requires running $\A$ multiple times on the same $S$, and calculating the average utility of $S$. Obviously, this simple approach incurs a large extra computational cost. On the other hand, if we estimate $U(S)$ with only one or few calls of $\A$, the estimate of $U(S)$ will be very noisy.
Hence, we pose the question: 
\emph{how to find a more robust semivalue against perturbation in model performance scores?}




\section{Data Banzhaf: a Robust Data Value Notion}
\label{sec:databanzhaf}

To address the question posed above, this section starts by formalizing the notion of robustness in data valuation. Then, we show that the most robust semivalue, surprisingly, coincides with the Banzhaf value \citep{banzhaf1964weighted}---a classic solution concept in cooperative game theory. 


\subsection{Ranking Stability as a Robustness Notion}
\label{sec:safety-margin-def}

In many applications of data valuation such as data selection, it is the \textbf{order} of data values that matter~\citep{kwon2021beta}. For instance, to filter out low-quality data, one will first rank the data points based on their values and then throws the points with the lowest values. 
When the utility functions are perturbed by noise, we would like the rankings of the data values to remain stable. 
Recall that a semivalue is defined by a weight function $w$ such that $\sum_{k=1}^{n}{n-1 \choose k-1} w(k)=n$. 
The (scaled) difference between the semivalues of two data points $i$ and $j$ can be computed from (\ref{eq:semivalue}):
\begin{equation}
\begin{split}
    D_{i, j}(U; w) 
    &:= n( \phi(i; w) - \phi(j; w) ) \\
    &= \sum_{k=1}^{n-1} \left(w(k) + w(k+1)\right) {n-2 \choose k-1} \Delta_{i, j}^{(k)}(U),
    \nonumber 
\end{split}
\end{equation}
where $\Delta_{i, j}^{(k)}(U) := {n-2 \choose k-1}^{-1} \sum_{|S|=k-1, S \subseteq N \setminus \{i, j\}} [U(S \cup i) - U(S \cup j)]$, representing the \textit{average distinguishability between $i$ and $j$ on size-$k$ sets using the noiseless utility function $U$.}
Let $\Uhat$ denote a noisy estimate of $U$. 
We can see that $\Uhat$ and $U$ produce different data value orders for $i, j$ if and only if $D_{i, j}(U; w) D_{i, j}(\Uhat; w) \le 0$.\footnote{We note that when the two data points receive the same value, we usually break tie randomly, thus we use $\le 0$ instead of $<0$.} 
An initial attempt to define robustness is in terms of the smallest amount of perturbation magnitude $\norm{\Uhat - U}$ such that $U$ and $\Uhat$ produce different data rankings.\footnote{Here, we view the utility function $U$ as a size-$2^n$ vector where each entry corresponds to $U(S)$ of a subset $S \subseteq N$.}
However, such a definition is problematic due to its dependency on the original utility function $U$.
If the noiseless $U$ itself cannot sufficiently differentiate between $i$ and $j$ 
(i.e., $\Delta_{i, j}^{(k)}(U)\simeq 0$ for $k=1,\ldots,n-1$), then
$D_{i, j}(U; w)$ will be (nearly) zero and infinitesimal perturbation can switch the ranking of $\phi(i)$ and $\phi(j)$. 
To reasonably define the robustness of semivalues, we solely consider the collection of utility functions that can sufficiently ``distinguish'' between $i$ and $j$.
\begin{definition}[Distinguishability]
We say a data point pair $(i, j)$ is \emph{$\tau$-distinguishable} by $U$ if and only if 
$\Delta_{i, j}^{(k)}(U) \ge \tau$ for all $k \in \{1, \ldots, n-1\}$. 
\label{def:distinguishable}
\end{definition}

Let $\Utau$ denote the collection of utility functions $U$ that can $\tau$-distinguish a pair $(i, j)$. With the definition of distinguishability, we characterize the robustness of a semivalue by deriving its ``\emph{safety margin}'', which is defined as the \textbf{minimum} amount of perturbation $\norm{\Uhat - U}$ needed to reverse the ranking of \textbf{at least one} pair of data points $(i, j)$, for \textbf{at least one} utility function $U$ from $\Utau$.

\begin{definition}[Safety margin]
\label{def:safety-margin}
Given $\tau>0$, we define the safety margin of a semivalue for a data point pair $(i, j)$ as

{\small 
\begin{align}
    &\safe_{i, j}(\tau; w) := \min_{U \in \Utau} \min_{\Uhat \in \{\Uhat: D_{i, j}(U; w) D_{i, j}(\Uhat; w) \le 0\}} \norm{\Uhat - U} \nonumber
\end{align}
}
and we define the safety margin of a semivalue as 

{\small 
\begin{align}
    \safe(\tau; w) := \min_{i, j \in N, i \ne j} \safe_{i, j}(\tau; w) \nonumber
\end{align}
}
\end{definition}

\begin{figure}[t]
    \centering
    \includegraphics[width=0.5\columnwidth]{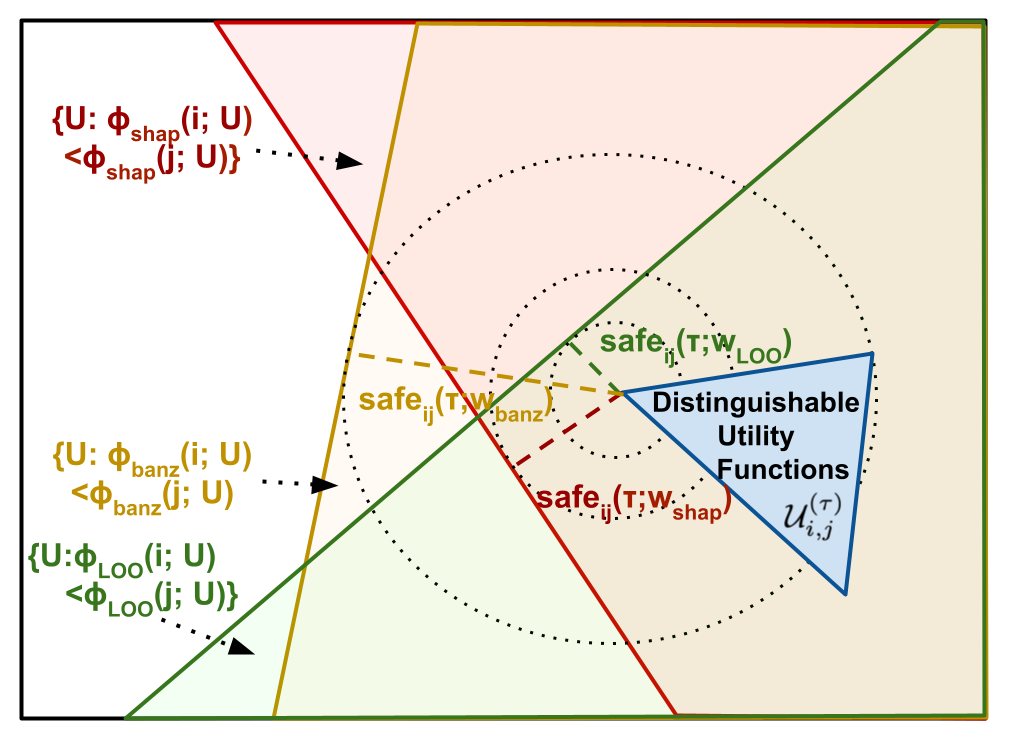}
    \caption{Illustration of Safety Margin. Intuitively, it is the smallest distance from the space of distinguishable utility function to the space of functions that will reverse the value rank between at least one pair of data points $i$ and $j$. The figure shows the distance between the space of distinguishable utility function and the space of utility function where $\phi_\banz(i; U) < \phi_\banz(j; U)$ (i.e., the order of $i, j$ being reversed) is the largest among the three value notions.
    }
    \label{fig:banzhaf_geo}
\end{figure}

In other words, the safety margin captures the largest noise that can be tolerated by a semivalue without altering the ranking of any pair of data points that are distinguishable by the original utility function. The geometric intuition of safety margin is illustrated in Figure~\ref{fig:banzhaf_geo}.
\add{
\begin{remark}
\label{remark:why-noise-agonostic}
\textbf{The definition of the safety margin is \emph{noise-structure-agnostic} in the sense that it does not depend on the actual noise distribution induced by a specific stochastic training algorithm.} 
While one might be tempted to have a noise-dependent robustness definition, we argue that the safety margin is advantageous from the following aspects: \textbf{(1) The analysis of the utility noise distribution caused by stochastic training is difficult even for very simple settings.} In Appendix \ref{sec:dist-in-simple-setting}, we consider a simple (if not the simplest) setting: 1-dimensional linear regression trained by full-batch gradient descent with Gaussian random initialization. We show that, even for such a setting, there are significant technical challenges in making any analytical assertion on the noise distribution. 
\textbf{(2) It might be computationally infeasible to estimate the perturbation distribution of $U(S)$,} as there are exponentially many $S$ that need to be considered. 
\textbf{(3) One may not have prior knowledge about the dataset and source of noise; in that case, a worst-case robustness notion is preferred.} 
In practice, the datasets may not be known in advance for privacy consideration \citep{agahari2022not}. 
Furthermore, the performance scores may also be perturbed due to other factors such as hardware faults, software bugs, or even adversarial attacks. Given the diversity and unknownness of utility perturbations in practical scenarios, it is preferable to define the robustness in terms of the worst-case perturbation (i.e., Kerckhoffs's principle). Such definitions are common in machine learning. For instance, robust learning against adversarial examples is aimed at being resilient to the worst-case noise \citep{madry2017towards} within a norm ball; differential privacy \citep{dwork2006calibrating} protects individual data record's information from arbitrary attackers.
\end{remark}
}
\add{A full discussion for important considerations in Definition \ref{def:safety-margin} can be found in Appendix \ref{appendix:robustness-discussion}.}

\add{\textbf{Safety Margin for LOO and Shapley Value.} In order to demonstrate the usefulness of this robustness notion, we derive the LOO and Shapley value's safety margin.}

\begin{theorem}
For any $\tau>0$, Leave-one-out error ($w_{\loo}(k) = n \iden[k=n]$) achieves $\safe(\tau; w_{\loo})=\tau$, and Shapley value ($w_{\shap}(k) = {n-1 \choose k-1}^{-1}$) achieves $\safe(\tau; w_{\shap}) = \tau \frac{n-1}{\sqrt{\sum_{k=1}^{n-1} {n-2 \choose k-1}^{-1} }}$. 
\label{thm:safe-loo-shap}
\end{theorem}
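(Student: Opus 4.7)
The plan is to exploit the fact that $D_{i,j}(U;w)$ is linear in the utility function $U$, so the inner minimization reduces to a distance-to-hyperplane computation, while the outer minimization over distinguishable $U$ becomes a linear program driven by the distinguishability lower bounds.

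First, I would rewrite the scaled difference as a linear functional of $U$. Plugging in the formula for $\Delta_{i,j}^{(k)}$ and swapping the order of summation yields
\begin{equation*}
D_{i,j}(U;w) = \sum_{S \subseteq N\setminus\{i,j\}} \bigl(w(|S|+1)+w(|S|+2)\bigr)\bigl[U(S\cup i)-U(S\cup j)\bigr] = \langle c_w, U\rangle,
\end{equation*}
where the coefficient vector $c_w$ has entry $+(w(|S|+1)+w(|S|+2))$ at coordinate $S\cup i$, $-(w(|S|+1)+w(|S|+2))$ at $S\cup j$, and $0$ elsewhere. Since $D_{i,j}(\Uhat;w)=\langle c_w,\Uhat\rangle$ is also linear, the constraint $D_{i,j}(U;w)D_{i,j}(\Uhat;w)\le 0$ cuts out a closed half-space whose boundary is the hyperplane $\langle c_w,\Uhat\rangle=0$. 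The closest point of this half-space to $U$ lies on the boundary, so by the standard distance-to-hyperplane formula the inner minimum equals $|D_{i,j}(U;w)|/\|c_w\|_{\ast}$, where $\|\cdot\|_{\ast}$ is the dual of the ambient norm. This reduces the entire problem to (i) minimizing $D_{i,j}(U;w)$ over $\Utau$ and (ii) computing $\|c_w\|_{\ast}$.

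For (i), I would observe that for both $w_{\loo}$ and $w_{\shap}$ the coefficients $(w(k)+w(k+1))\binom{n-2}{k-1}$ are non-negative, so the distinguishability lower bounds $\Delta_{i,j}^{(k)}(U)\ge\tau$ give
\begin{equation*}
\min_{U\in\Utau} D_{i,j}(U;w) \;=\; \tau \sum_{k=1}^{n-1}\bigl(w(k)+w(k+1)\bigr)\binom{n-2}{k-1},
\end{equation*}
with equality attained by any $U$ for which $\Delta_{i,j}^{(k)}(U)=\tau$ for every $k$ (easily constructed by choosing $U(S\cup i)-U(S\cup j)=\tau$ for all $S\subseteq N\setminus\{i,j\}$). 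Specializing: for $w_{\loo}$, only the $k=n-1$ term survives, giving $\min D_{i,j}=n\tau$ and a coefficient vector $c_{w_{\loo}}$ with just two nonzero entries $\pm n$ at $N\setminus j$ and $N\setminus i$. For $w_{\shap}$, the crucial step is the ``collapsing'' identity
\begin{equation*}
\bigl(w_{\shap}(k)+w_{\shap}(k+1)\bigr)\binom{n-2}{k-1} \;=\; \frac{\binom{n-2}{k-1}}{\binom{n-1}{k-1}}+\frac{\binom{n-2}{k-1}}{\binom{n-1}{k}} \;=\; \frac{n-k}{n-1}+\frac{k}{n-1} \;=\; \frac{n}{n-1},
\end{equation*}
which is independent of $k$ and immediately yields $\min D_{i,j} = n\tau$ together with $\sum_{S}(\text{coef})^2 = (n/(n-1))^2\sum_{k=1}^{n-1}\binom{n-2}{k-1}^{-1}$.

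Finally, plugging $\min D_{i,j}$ and $\|c_w\|_{\ast}$ into the ratio from the inner step, and noting that the ratio depends only on $w$ (not on the specific pair $i,j$), makes the outer $\min_{i,j}$ trivial and produces the stated closed-form expressions $\safe(\tau;w_{\loo})=\tau$ and $\safe(\tau;w_{\shap})=\tau(n-1)/\sqrt{\sum_{k=1}^{n-1}\binom{n-2}{k-1}^{-1}}$.

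The main obstacle I expect is justifying the extremal configuration in step (i): one must verify that $\Delta_{i,j}^{(k)}(U)=\tau$ can be achieved \emph{simultaneously} for every $k=1,\ldots,n-1$ by a single admissible $U$, without any hidden coupling between different subset sizes that would prevent the linear-programming bound from being tight. A secondary, purely combinatorial obstacle is spotting the Shapley cancellation identity $(w_{\shap}(k)+w_{\shap}(k+1))\binom{n-2}{k-1}=n/(n-1)$; without its $k$-independence the outer sum would not telescope into the clean form $n\tau$, and the dual-norm computation would not reduce to the single factor $\sqrt{\sum_k \binom{n-2}{k-1}^{-1}}$.
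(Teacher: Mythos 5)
Your proposal is correct and takes essentially the same route as the paper: its Lemma \ref{lemma:safe} likewise writes $D_{i,j}(U;w)$ as a linear functional $a^T U$, reduces the inner minimization to the point-to-hyperplane distance $|a^T U|/\norm{a}$, minimizes the numerator over $\Utau$ by setting $\Delta_{i,j}^{(k)}(U)=\tau$ for every $k$ (attainable with no coupling across subset sizes, e.g.\ by $U(T)=\tau\,\iden[i\in T,\, j\notin T]$, which is admissible for $\tau\le 1$), and then Theorem \ref{thm:safe-loo-shap} is exactly your plug-in computation, with your collapsing identity $\left(w_{\shap}(k)+w_{\shap}(k+1)\right)\binom{n-2}{k-1}=\frac{n}{n-1}$ used implicitly there. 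Both of your anticipated obstacles are non-issues for precisely the reasons you suspect, and the pair-independence of the resulting ratio renders the outer minimum over $(i,j)$ trivial, just as in the paper.
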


One can easily see that $\safe(\tau; w_{\loo})<\safe(\tau; w_{\shap})<\tau(n-1)$. 
The fact that $\safe(\tau; w_{\loo}) < \safe(\tau; w_{\shap})$ sheds light on the phenomenon we observe in Figure \ref{fig:checkvar-value} and \ref{fig:topk} where the Shapley value is slightly more stable than LOO. 
\revise{
It provides an explanation for a widely observed but puzzling phenomenon observed in several prior works \citep{jia2019scalability, wang2020principled} that the Shapley value outperforms the LOO error in a range of data selection tasks in the stochastic learning setting. We note that the Shapley value is also shown to be better than LOO in deterministic learning \citep{ghorbani2019data, jia2019towards}, where theoretical underpinning is an open question.}

\subsection{Banzhaf Value Achieves the Largest Safety Margin}

Surprisingly, the semivalue that achieves the largest safety margin coincides with the Banzhaf value, another famous value notion that averages the marginal contribution across all subsets. We first recall the definition of the Banzhaf value.

\begin{definition}[\cite{banzhaf1964weighted}]
The Banzhaf value for data point $i$ is defined as 
\begin{align}
    \phi_{\banz}(i; U, N) 
    := \frac{1}{2^{n-1}} \sum_{S \subseteq N \setminus i} [ U(S \cup i) - U(S) ] \label{eq:banzhaf}
\end{align}
\label{def:banzhaf}
\end{definition}

The Banzhaf value is a semivalue, as we can recover its definition (\ref{eq:banzhaf}) from the general expression of semivalues (\ref{eq:semivalue}) by setting the constant weight function $w(k)=\frac{n}{2^{n-1}}$ for all $k \in \{1, \ldots, n\}$. We then show our main result. 
\begin{theorem}
For any $\tau>0$, Banzhaf value ($w(k)=\frac{n}{2^{n-1}}$) achieves the largest safety margin $\safe(\tau; w)=\tau 2^{n/2-1}$ among all semivalues. 
\label{thm:banzhaf-robustness}
\end{theorem}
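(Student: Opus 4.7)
The plan is to reduce $\safe_{i,j}(\tau; w)$ to an explicit ratio depending only on $w$ and then maximize that ratio via Cauchy--Schwarz. For the inner minimization over $\Uhat$, since $D_{i,j}(\cdot; w)$ is a linear functional on $\R^{2^n}$, the constraint $D_{i,j}(U;w) D_{i,j}(\Uhat;w) \le 0$ says $\Uhat$ lies on the opposite side of (or on) the hyperplane $\{V : D_{i,j}(V;w) = 0\}$, so the minimum $\|\Uhat - U\|$ is simply the distance from $U$ to this hyperplane. Using the expansion $D_{i,j}(U; w) = \sum_{k=1}^{n-1} \binom{n-2}{k-1}(w(k)+w(k+1))\, \Delta_{i,j}^{(k)}(U)$ given in Section \ref{sec:safety-margin-def}, a direct calculation shows this distance equals $|D_{i,j}(U;w)| / \|\beta^{(w)}\|_2$, where $\|\beta^{(w)}\|_2^2 = \sum_{k=1}^{n-1} \binom{n-2}{k-1}(w(k)+w(k+1))^2$ (any convention factor in the norm can be pinned down by matching against the LOO case in Theorem \ref{thm:safe-loo-shap}).

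For the middle minimization over $U \in \Utau$, let $M_k(w) := \binom{n-2}{k-1}(w(k)+w(k+1))$. When all $M_k(w)$ share a common sign (WLOG nonnegative; the mixed-sign case is dismissed separately at the end), the distinguishability constraint $\Delta_{i,j}^{(k)}(U) \ge \tau$ yields $|D_{i,j}(U;w)| \ge \tau \sum_k M_k(w)$, with equality achieved by the explicit utility $U(T) = \tfrac{1}{2} + \tfrac{\tau}{2}\ind[i \in T,\, j \notin T] - \tfrac{\tau}{2}\ind[j \in T,\, i \notin T]$, which lies in $[0,1]^{2^n}$ for $\tau \le 1$ and satisfies $\Delta_{i,j}^{(k)}(U) = \tau$ for every $k$. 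Because the construction is symmetric in $(i,j)$, $\safe_{i,j}(\tau; w)$ is in fact independent of the pair, so the outer $\min_{i,j}$ is trivial. Combining the two minimizations yields $\safe(\tau; w) = \tau \cdot (\sum_k a_k c_k)/\sqrt{\sum_k a_k c_k^2}$, where $a_k := \binom{n-2}{k-1}$ and $c_k := w(k)+w(k+1)$.

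The upper bound is then Cauchy--Schwarz applied to the vectors $(\sqrt{a_k})_k$ and $(\sqrt{a_k}\,c_k)_k$: $(\sum_k a_k c_k)^2 \le (\sum_k a_k)(\sum_k a_k c_k^2)$, so the ratio is at most $\sqrt{\sum_k a_k} = \sqrt{2^{n-2}} = 2^{n/2-1}$. Equality holds iff $c_k$ is constant across $k$; the Banzhaf weights $w(k) = n/2^{n-1}$ give $c_k = n/2^{n-2}$ constant, satisfy the normalization $\sum_k \binom{n-1}{k-1} w(k) = n$, and hence achieve $\safe(\tau; w_{\banz}) = \tau \cdot 2^{n/2-1}$ exactly. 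The main obstacle is the mixed-sign case: if the $M_k(w)$'s carry both signs, one can pick distinguishable $U$'s with $\Delta_{i,j}^{(k)}(U) \ge \tau$ tuned so that positive and negative contributions cancel, driving $|D_{i,j}(U;w)|$ to zero and hence $\safe(\tau; w) = 0$. Such weights are strictly worse than Banzhaf and so can be excluded from the maximization without loss.
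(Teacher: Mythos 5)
Your proposal is correct and follows essentially the same route as the paper's proof: the inner minimization reduces to the distance $|D_{i,j}(U;w)|/\|a\|$ from $U$ to the hyperplane $\{V: D_{i,j}(V;w)=0\}$ (the paper computes this via the rank-one matrix $A=aa^T$), the minimization over $\Utau$ is settled by taking $\Delta_{i,j}^{(k)}(U)=\tau$ for all $k$, and the maximization over $w$ is the same Cauchy--Schwarz argument with equality exactly when $w(k)+w(k+1)$ is constant, which the Banzhaf weights satisfy. Your two additions---an explicit witness $U\in[0,1]^{2^n}$ attaining $\Delta_{i,j}^{(k)}=\tau$, and a separate treatment of mixed signs of $w(k)+w(k+1)$, where the paper's ``clearly, the minimum is achieved at $\Delta_{i,j}^{(k)}=\tau$'' silently assumes a common sign---are refinements rather than a different route (and your cancellation argument for the mixed-sign case is valid under the paper's own convention of treating $U$ as an unconstrained vector in $\R^{2^n}$; under a hard $[0,1]$ range constraint one would instead just note that evaluating at the explicit witness already gives $\safe(\tau;w)\le \tau\,|\sum_k a_k c_k|/\sqrt{\sum_k a_k c_k^2}\le \tau 2^{n/2-1}$, which suffices for the theorem).
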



\textbf{Intuition.} 
The superior robustness of the Banzhaf value can be explained intuitively as follows: Semivalues assign different weights to the marginal contribution against different data subsets according to the weight function $w$. To construct a perturbation of the utility function that maximizes the influence on the corresponding semivalue, one needs to perturb the utility of the subsets that are assigned with higher weights. 
Hence, the best robustification strategy is to assign uniform weights to all subsets, 
which leads to the Banzhaf value. On the other hand, semivalues that assign heterogeneous weights to different subsets, such as the Shapley value and LOO error, suffer a lower safety margin.

\begin{remark}
\textbf{Banzhaf value is also the most robust semivalue in terms of the data value magnitude.}
One can also show that the Banzhaf value is the most robust in the sense that the utility noise will minimally affect data value magnitude changes. Specifically, the Banzhaf value achieves the smallest Lipschitz constant $L$ such that $\| \phi(U) - \phi(\widehat U) \| \le L \| U - \widehat U \|$ for all possible pairs of $U$ and $\widehat U$. The details are deferred to Appendix \ref{sec:l2-robustness}.
\end{remark}

\subsection{Efficient Banzhaf Value Estimation}
\label{sec:banzhaf-computation}


Similar to the Shapley value and the other semivalue-based data value notions, the exact computation of the Banzhaf value can be expensive because it requires an exponential number of utility function evaluations, which entails an exponential number of model fittings. This could be a major challenge for adopting the Banzhaf value in practice. To address this issue, we present a novel Monte Carlo algorithm to approximate the Banzhaf value. 

We start by defining the estimation error of an estimator. We say a semivalue estimator $\widehat \phi$ is an $(\eps, \delta)$-approximation to the true semivalue $\phi$ (in $\ell_p$-norm) if and only if
$
\Pr_{\widehat \phi} [ \| \phi - \widehat \phi \|_p \le \eps ] \ge 1-\delta
$
where the randomness is over the execution of the estimator. For any data point pair $(i, j)$, if $|\phi(i)-\phi(j)| \ge 2\eps$, then an estimator that is $(\eps, \delta)$-approximation in $\ell_\infty$-norm is guaranteed to keep the data value order of $i$ and $j$ with probability at least $1-\delta$. 

\textbf{Baseline: Simple Monte Carlo.} The Banzhaf value can be equivalently expressed as follows:
\begin{align}
    \phi_\banz(i) = \E_{S \sim \unif(2^{N \setminus i})} \left[  U(S \cup i) - U(S) \right]
\end{align}
where $\unif(\cdot)$ to denote Uniform distribution over the power set of $N\setminus\{i\}$.
Thus, a straightforward Monte Carlo (MC) method to estimate $\phi_\banz(i)$ is to sample a collection of data subsets $\mS_i$ from $2^{N \setminus i}$ uniformly at random, and then compute $\widehat \phi_\mc(i) = \frac{1}{|\mS_i|} \sum_{S \in \mS_i} \left( U(S \cup i) - U(S) \right)$. 
We can repeat the above procedure for each $i \in N$ and obtain the approximated semivalue vector $\widehat \phi_\mc = [\widehat \phi_\mc(1), \ldots, \widehat \phi_\mc(n)]$. 
The sample complexity of this simple MC estimator can be bounded by Hoeffding's inequality. 


\begin{theorem}
$\widehat \phi_\mc$ is an $(\eps, \delta)$-approximation to the exact Banzhaf value in $\ell_2$-norm with $O(\frac{n^2}{\eps^2}\log(\frac{n}{\delta}))$ calls of $U(\cdot)$, and in $\ell_\infty$-norm with $O(\frac{n}{\eps^2}\log(\frac{n}{\delta}))$ calls of $U(\cdot)$. 
\label{thm:mc}
\end{theorem}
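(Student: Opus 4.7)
The plan is to apply Hoeffding's inequality coordinate-wise and then combine via a union bound, with the per-coordinate tolerance calibrated to the target norm. The key structural observation is that for each fixed $i$, the estimator $\widehat \phi_\mc(i)$ is the empirical mean of $m := |\mS_i|$ i.i.d.\ bounded random variables $U(S \cup i) - U(S)$ with $S \sim \unif(2^{N \setminus i})$. Because $U(S) \in [0,1]$, each summand lies in $[-1, 1]$, and by the uniform-sampling representation of the Banzhaf value we have $\E[\widehat \phi_\mc(i)] = \phi_\banz(i)$. Hoeffding's inequality then gives, for any $t > 0$,
\[
\Pr\!\left[\, \bigl|\widehat \phi_\mc(i) - \phi_\banz(i)\bigr| \ge t \,\right] \le 2 \exp\!\left(-m t^2 / 2 \right).
\]

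For the $\ell_\infty$ bound, I would set $t = \eps$ and take a union bound over the $n$ coordinates, yielding an overall failure probability of at most $2n \exp(-m \eps^2 / 2)$. Requiring this to be at most $\delta$ gives $m = \Omega\!\left( \eps^{-2} \log(n/\delta) \right)$. Because each sampled $S$ costs two utility evaluations (for $U(S)$ and $U(S \cup i)$), and this is done for each of the $n$ data points, the total number of $U$-calls is $O(nm) = O\!\left( n \eps^{-2} \log(n/\delta) \right)$, matching the stated $\ell_\infty$ complexity.

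For the $\ell_2$ bound, I would use the sufficient condition $\max_i |\widehat \phi_\mc(i) - \phi_\banz(i)| \le \eps / \sqrt{n}$, which implies $\|\widehat \phi_\mc - \phi_\banz\|_2 \le \eps$. Running the same Hoeffding-plus-union-bound argument with $t = \eps/\sqrt{n}$ now yields a failure probability at most $2n \exp(-m \eps^2 / (2n))$, which is $\le \delta$ as long as $m = \Omega\!\left( n \eps^{-2} \log(n/\delta) \right)$. The total $U$-call count is therefore $O(nm) = O\!\left( n^2 \eps^{-2} \log(n/\delta) \right)$, as claimed.

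There is no substantive obstacle: the proof is essentially mechanical once one observes that the summands are bounded and the per-coordinate estimator is unbiased. The only design decision is how to convert between the per-coordinate error and the joint-norm error, and the two choices above ($t=\eps$ versus $t=\eps/\sqrt{n}$) are the natural ones. Sharper bounds might be obtained by a Bernstein-type argument that exploits the (unknown) variance of $U(S \cup i) - U(S)$, or by sharing samples across coordinates; the latter is precisely the Maximum Sample Reuse idea that the paper develops next, and so I would intentionally refrain from using it in this baseline analysis in order to keep the simple MC estimator and its analysis cleanly separated from the MSR improvement.
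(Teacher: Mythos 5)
Your proposal is correct and is essentially identical to the paper's own proof: Hoeffding applied coordinate-wise to the bounded, unbiased summands $U(S\cup i)-U(S)\in[-1,1]$, a union bound over the $n$ coordinates, per-coordinate tolerance $\eps$ for $\ell_\infty$ and $\eps/\sqrt{n}$ for $\ell_2$, and a final factor of $n$ because each point requires its own samples. The only (immaterial) difference is the exponent constant in Hoeffding's inequality, which is absorbed by the $O(\cdot)$ notation.
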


\textbf{Proposed Algorithm: Maximum Sample Reuse (MSR) Monte Carlo.} 
The simple MC method is sub-optimal since, for each sampled $S \in \mS_i$, the value of $U(S)$ and $U(S \cup i)$ are only used for estimating $\phi_\banz(i)$, i.e., the Banzhaf value of a single datum $i$. 
This inevitably results in a factor of $n$ in the final sample complexity as we need the same amount of samples to estimate each $i \in N$. 
To address this weakness, we propose an advanced MC estimator which achieves \emph{maximum sample reuse} (MSR). 
Specifically, by the linearity of expectation, we have
$
\phi_\banz(i) = \E_{S \sim \unif(2^{N \setminus i})} \left[  U(S \cup i) \right] - \E_{S \sim \unif(2^{N \setminus i})} \left[ U(S) \right] 
$.
Suppose we have $m$ samples $\mS = \{ S_1, \ldots, S_m \}$ i.i.d. drawn from $\unif(2^{N})$. For every data point $i$ we can divide $\mS$ into $\Sowni \cup \Snotowni$ where $\Sowni = \{S \in \mS: i \in S \}$ and $\Snotowni = \{S \in \mS: i \notin S\} = \mS \setminus \Sowni$. 
We can then estimate $\phi(i)$ by
\begin{align}
    \widehat \phi_\gt(i) = \frac{1}{|\Sowni|} \sum_{S \in \Sowni} U(S) - \frac{1}{|\Snotowni|} \sum_{S \in \Snotowni} U(S) \label{eq:group-testing}
\end{align}
or set $\widehat \phi_\gt(i)=0$ if either of $|\Sowni|$ and $|\Snotowni|$ is 0. 
In this way, the maximal sample reuse is achieved since \emph{all} evaluations of $U(S)$ are used in the estimation of $\phi(i)$ for every $i \in N$. 
We refer to this new estimator $\widehat \phi_\gt$ as \emph{MSR estimator}. Compared with Simple MC method, the MSR estimator saves a factor of $n$ in the sample complexity. 

\begin{theorem}
$\widehat \phi_\gt$ is an $(\eps, \delta)$-approximation to the exact Banzhaf value in $\ell_2$-norm with $O\left(\frac{n}{\eps^2}\log(\frac{n}{\delta})\right)$ calls of $U(\cdot)$, and in $\ell_\infty$-norm with $O\left(\frac{1}{\eps^2}\log(\frac{n}{\delta})\right)$ calls of $U(\cdot)$. 
\label{thm:gt}
\end{theorem}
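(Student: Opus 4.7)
The plan is to establish concentration of $\widehat{\phi}_\gt(i)$ around $\phi_\banz(i)$ for each coordinate $i$ separately, and then combine the per-coordinate bounds via a union bound to obtain the $\ell_\infty$ result; the $\ell_2$ result will follow from $\|v\|_2 \le \sqrt{n}\,\|v\|_\infty$ by rescaling $\eps$. First I would verify unbiasedness. When $S_j$ is drawn uniformly from $2^N$, the conditional distribution of $S_j \setminus \{i\}$ given $i \in S_j$ (respectively $i \notin S_j$) is $\unif(2^{N\setminus i})$, so conditional on $|\Sowni|, |\Snotowni| \ge 1$, each of the two empirical averages in \eqref{eq:group-testing} is an unbiased estimator of the corresponding term in $\phi_\banz(i) = \E_{T\sim\unif(2^{N\setminus i})}[U(T\cup i)] - \E_{T\sim\unif(2^{N\setminus i})}[U(T)]$.

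For the concentration step I would split into two events. Since each $S_j$ contains $i$ independently with probability $1/2$, $|\Sowni|\sim \bin(m,1/2)$, and a standard Chernoff bound gives $\Pr[|\Sowni|<m/4]\le e^{-m/16}$, and analogously for $|\Snotowni|$; this handles the zero-denominator edge case as well. Next, I would condition on the indicator pattern $(\mathds{1}[i\in S_j])_{j=1}^m$, which makes the two groups of utility evaluations conditionally i.i.d.\ with values in $[0,1]$. Hoeffding's inequality then yields, conditional on $|\Sowni|=k$,
\begin{equation*}
\Pr\Bigl[\Bigl|\tfrac{1}{k}\sum_{S\in\Sowni} U(S) - \E_T[U(T\cup i)]\Bigr|>\tfrac{\eps}{2}\Bigr] \le 2e^{-k\eps^2/2},
\end{equation*}
and the symmetric bound for the $\Snotowni$ average. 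Combining via the triangle inequality with $k\ge m/4$, taking $m = \Theta(\eps^{-2}\log(n/\delta))$ makes $\Pr[|\widehat{\phi}_\gt(i)-\phi_\banz(i)|>\eps] \le \delta/n$ for any fixed $i$.

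A union bound over the $n$ coordinates then yields the $\ell_\infty$ claim with $O(\eps^{-2}\log(n/\delta))$ evaluations of $U$. For the $\ell_2$ claim, since $\|\widehat\phi_\gt-\phi_\banz\|_2\le\sqrt{n}\,\|\widehat\phi_\gt-\phi_\banz\|_\infty$, applying the $\ell_\infty$ bound with target accuracy $\eps/\sqrt{n}$ gives sample complexity $O(n\eps^{-2}\log(n/\delta))$. Crucially, this reuse of the same $m$ sampled subsets across all $n$ coordinates is precisely what saves the factor of $n$ compared to the Simple MC estimator of Theorem~\ref{thm:mc}.

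The main obstacle is bookkeeping around the random denominators $|\Sowni|$ and $|\Snotowni|$: they are themselves random, so one cannot apply Hoeffding directly to the difference of empirical means and must instead condition on the inclusion pattern, separately control the deviation of each of the two empirical means, and absorb the (low-probability) event that a denominator is abnormally small into the failure budget. Everything else is routine sub-Gaussian concentration plus a union bound.
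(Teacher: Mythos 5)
Your proposal is correct, and it follows the same skeleton as the paper's proof (per-coordinate concentration via the observation that $|\Sowni|\sim\bin(m,1/2)$, a union bound over coordinates for $\ell_\infty$, and the $\|\cdot\|_2\le\sqrt{n}\,\|\cdot\|_\infty$ rescaling for $\ell_2$), but you neutralize the random denominators differently. The paper introduces a surrogate estimator $\widetilde\phi(i)=\frac{1}{m/2}\sum_{S\in\Sowni}U(S)-\frac{1}{m/2}\sum_{S\in\Snotowni}U(S)$ with \emph{deterministic} denominators, bounds the swap error deterministically by $|\widehat\phi_\gt(i)-\widetilde\phi(i)|\le\frac{4}{m}\bigl||\Sowni|-\frac{m}{2}\bigr|$, observes that $\widetilde\phi(i)=\frac1m\sum_{S\in\mS}2U(S)\,\mathrm{sign}(i,S)$ is a plain i.i.d.\ average of $[-2,2]$-valued variables amenable to a single unconditional Hoeffding bound, and then tunes the binomial-deviation threshold to $\Delta=m\eps/8$ to balance the two tails, arriving at $\Pr[|\widehat\phi_\gt(i)-\phi(i)|\ge\eps]\le 5\exp(-m\eps^2/32)$. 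You instead condition on the inclusion pattern $(\ind[i\in S_j])_{j=1}^m$, which correctly renders the two group averages conditionally independent sums of $[0,1]$-valued i.i.d.\ terms with fixed sizes, apply Hoeffding to each at accuracy $\eps/2$, and floor the denominators at $m/4$ via a Chernoff bound; this avoids the surrogate and the swap-error bookkeeping entirely, handles the zero-denominator convention for free, and the crude $m/4$ floor suffices because the $e^{-m/16}$ binomial tail is dominated by $e^{-\Theta(m\eps^2)}$ for the relevant range $\eps=O(1)$. The trade-off is minor: your route is slightly more direct, while the paper's fixed-denominator surrogate and signed-sum representation pay off later, since the same decomposition is reused verbatim in the robustness analysis of the MSR estimator under noisy utilities (Theorem \ref{thm:gt-under-noisy-utility} in Appendix \ref{appendix:msr-robustness}). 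Both arguments yield the claimed $O(\frac{1}{\eps^2}\log\frac{n}{\delta})$ and $O(\frac{n}{\eps^2}\log\frac{n}{\delta})$ sample complexities, differing only in constants.
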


\textbf{Proof overview.}
We remark that deriving this sample complexity is non-trivial. 
Unlike the simple Monte Carlo, the sizes of the samples that we average over in (\ref{eq:group-testing}) (i.e., $|\Sowni|$ and $|\Snotowni|$) are also random variables. Hence, we cannot simply apply Hoeffding's inequality to get a high-probability bound for $\|\widehat \phi_\gt - \phi_\banz\|$. The key to the proof is to notice that $|\Sowni|$ follows binomial distribution $\bin(m, 0.5)$.
Thus, we first show that $|\Sowni|$ is close to $m/2$ with high probability, and then apply Hoeffding's inequality to bound the difference between $\frac{1}{m/2} \sum_{S \in \Sowni} U(S) - \frac{1}{m/2} \sum_{S \in \Snotowni} U(S)$ and $\phi_\banz(i)$.

The actual estimator of the Banzhaf value that we build is based upon the noisy variant $\Uhat$. In Appendix \ref{appendix:msr-robustness}, we study the impact of noisy utility function evaluation on the sample complexity of the MSR estimator. It can be shown that our MSR algorithm has the same sample complexity with the noisy $\Uhat$, despite a small extra irreducible error.

\textbf{The existence of an efficient MSR estimator is unique for the Banzhaf value.}
Every semivalue can be written as the expectation of \emph{weighted} marginal contribution. 
Hence, one could construct an MSR estimator for arbitrary semivalue as follows:
$\widehat \phi_\gt(i) = \frac{2^{n-1}}{n|\Sowni|} \sum_{S \in \Sowni} w(|S|) U(S) - \frac{2^{n-1}}{n|\Snotowni|} \sum_{S \in \Snotowni}w(|S|+1) U(S)$. 
For the Shapley value, $w(|S|) = {n-1 \choose |S|-1}^{-1}$. This combinatorial coefficient makes the calculation of this estimator numerically unstable when $n$ is large. 
As we will show in the Appendix \ref{appendix:extend-msr}, it turns out that it is also impossible to construct a distribution $\D$ over $2^N$ s.t. $\phi_{\shap}\left(i\right) = \E_{S \sim \D|\D \own i} \left[ U(S) \right] - \E_{S \sim \D|\D \notown i} \left[ U(S) \right]$ for the Shapley value and \emph{any} other data value notions except the Banzhaf value. Hence, 
\emph{the existence of the efficient MSR estimator is a unique advantage of the Banzhaf value}.

\textbf{Lower Bound for Banzhaf Value Estimation. }
To understand the optimality of the MSR estimator, we derive a lower bound for any Banzhaf estimator that achieves $(\eps, \delta)$-approximation in $\ell_\infty$-norm. 
The main idea of deriving the lower bound is to use Yao's minimax principle. Specifically, we construct a distribution over instances of utility functions and prove that no deterministic algorithm can work well against that distribution.

\begin{theorem}
Every (randomized) Banzhaf value estimator that achieves $(\eps, \delta)$-approximation in $\ell_\infty$-norm for constant $\delta \in (0, 1/2)$ has sample complexity at least $\Omega(\frac{1}{\eps})$. 
\label{thm:lowerbound}
\end{theorem}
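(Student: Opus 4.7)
The plan is to apply Yao's minimax principle to a ``needle-in-a-haystack'' construction that makes a random utility function nearly indistinguishable from the identically-zero utility while still shifting the Banzhaf value of a single data point by $\Theta(\eps)$. Fix a constant $C > 2$ (say $C = 3$), set $K = \lfloor C \eps \cdot 2^{n-1} \rfloor$, and take $n$ large enough that $K \ge 1$. Define a distribution $\D$ over utilities by flipping a fair coin: with probability $\tfrac{1}{2}$ output $U \equiv 0$, and otherwise output $U^{\mathcal F}(S) := \Ind[S \in \mathcal F]$ where $\mathcal F$ is a uniformly random size-$K$ subfamily of $\{S \subseteq N : 1 \in S\}$. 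A direct evaluation of the Banzhaf formula gives $\phi_\banz(1; U\equiv 0) = 0$ and $\phi_\banz(1; U^{\mathcal F}) = K / 2^{n-1} = C \eps$, so the two branches differ by $C \eps > 2 \eps$ in the first coordinate; any $(\eps,\delta)$-approximator in $\ell_\infty$-norm must therefore be able to distinguish them.

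Next I would carry out a standard indistinguishability argument for deterministic estimators. Fix a deterministic $\mathcal A$ making at most $m$ queries. On the all-zero utility every query returns $0$, so $\mathcal A$ follows a fixed query sequence $S_1, \ldots, S_m$ and produces a fixed output vector $\widehat\phi^0$. If $|\widehat\phi^0(1)| > \eps$, then $\mathcal A$ already fails with certainty on $U \equiv 0$, contributing $\tfrac{1}{2}$ to its failure probability under $\D$. Otherwise $|\widehat\phi^0(1)| \le \eps$, and I would couple the two executions: on any instance $U^{\mathcal F}$ with $\{S_1, \ldots, S_m\} \cap \mathcal F = \emptyset$, all queries return $0$ as well, $\mathcal A$ is forced down the same trajectory, and outputs the same $\widehat\phi^0$; but then $|\widehat\phi^0(1) - C\eps| \ge (C - 1)\eps > \eps$, so $\mathcal A$ fails in $\ell_\infty$ on that instance.

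The final step is to lower bound the miss probability $\Pr_{\mathcal F}[\{S_1, \ldots, S_m\} \cap \mathcal F = \emptyset]$. Queries with $1 \notin S_i$ can never hit $\mathcal F$, so $\Pr[S_i \in \mathcal F] \le K / 2^{n-1} = C \eps$ for every $i$, and a union bound gives
\begin{align}
\Pr_{\mathcal F}\!\left[\{S_1, \ldots, S_m\} \cap \mathcal F = \emptyset\right] \;\ge\; 1 - \frac{mK}{2^{n-1}} \;=\; 1 - C m \eps. \nonumber
\end{align}
Hence the failure probability of $\mathcal A$ under $\D$ is at least $\tfrac{1}{2}(1 - C m \eps)$, which exceeds $\delta$ whenever $m < (1 - 2\delta)/(C \eps)$. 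For any constant $\delta \in (0, 1/2)$ this rules out all deterministic estimators using $o(1/\eps)$ queries, and Yao's minimax principle transfers the conclusion to randomized estimators: if a randomized $(\eps, \delta)$-approximation with sample complexity $m$ existed, averaging over its internal coins would produce a deterministic coin-setting with failure probability $\le \delta$ under $\D$, contradicting what we just proved.

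The main obstacle I anticipate is handling adaptivity cleanly, because the query sequence of $\mathcal A$ on $U^{\mathcal F}$ can diverge from the fixed sequence it would produce on $U \equiv 0$ as soon as some queried set lies in $\mathcal F$. The correct formalization is the coupling in step two: as long as no previous query has hit $\mathcal F$, the two executions see identical responses and hence produce identical next queries, so the ``miss'' event can be read off from either execution and the hypergeometric/union-bound estimate applies verbatim to the fixed sequence obtained from running $\mathcal A$ on $U \equiv 0$.
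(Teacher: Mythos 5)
Your proposal is correct and follows essentially the same route as the paper's proof: Yao's minimax principle applied to a half--half mixture of a ``null'' utility against planted instances whose hidden witness sets shift one point's Banzhaf value by $\Theta(\eps)$, with a union bound showing $m$ queries hit a witness with probability $O(m\eps)$ (the paper plants $2\eps\cdot 2^{n-1}$ pairs $U(S)=0$, $U(S\cup n)=1$ rather than your indicator family, yielding the same $m \ge (1-2\delta)/\Theta(\eps)$ bound). Your explicit coupling argument for adaptive queries is in fact more careful than the paper's one-line treatment of that point; just note the minor slack that $K=\lfloor C\eps 2^{n-1}\rfloor$ gives $\phi_\banz(1;U^{\mathcal F}) \ge C\eps - 2^{-(n-1)}$, so you need $n$ large enough that this exceeds $2\eps$, not merely $K\ge 1$.
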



Recall that our MSR algorithm achieves $\tildeO(\frac{1}{\eps^2})$\footnote{Throughout the paper, we use $\tildeO$ to hide logarithmic factors.} sample complexity. This means that our MSR algorithm is close to optimal, with an extra factor of $\frac{1}{\eps}$.

\section{EVALUATION}
\label{sec:eval}

Our evaluation covers the following aspects: \textbf{(1)} Sample efficiency of the proposed MSR estimator for the Banzhaf value; \textbf{(2)} Robustness of the Banzhaf value compared to the six existing semivalue-based data value notions (including Shapley value, LOO error, and four representatives from Beta Shapley\footnote{We evaluate Beta(1, 4), Beta(1, 16), Beta(4, 1), Beta(16, 1) as the original paper.}); \textbf{(3)} Effectiveness of performing \emph{noisy label detection} and \emph{learning with weighted samples} based on the Banzhaf value. 
Detailed settings are provided in Appendix \ref{appendix:experiment}.

\begin{table*}[t]
\resizebox{\textwidth}{!}{
\begin{tabular}{@{}ccccccccc@{}}
\toprule
\textbf{Dataset} & \textbf{Data Banzhaf}  & \textbf{LOO}  & \textbf{Beta(16, 1)}   & \textbf{Beta(4, 1)}    & \textbf{Data Shapley}  & \textbf{Beta(1, 4)} & \textbf{Beta(1, 16)} & \textbf{Uniform} \\ \midrule
MNIST            & \textbf{0.745 (0.026)} & 0.708 (0.04)  & -                      & -                      & 0.74 (0.029)           & -                   & -                    & 0.733 (0.021)    \\
FMNIST           & \textbf{0.591 (0.014)} & 0.584 (0.02)  & -                      & -                      & 0.581 (0.017)          & -                   & -                    & 0.586 (0.013)    \\
CIFAR10       & \textbf{0.642 (0.002)} & 0.618 (0.005) & -                      & -                      & 0.635 (0.004)          & -                   & -                    & 0.609 (0.004)    \\
Click            & \textbf{0.6 (0.002)}   & 0.575 (0.005) & -                      & -                      & 0.589 (0.002)          & -                   & -                    & 0.57 (0.005)     \\
Fraud            & \textbf{0.923 (0.002)} & 0.907 (0.002) & 0.912 (0.004)          & 0.919 (0.005)          & 0.899 (0.002)          & 0.897 (0.001)       & 0.897 (0.001)        & 0.906 (0.002)    \\
Creditcard       & \textbf{0.66 (0.002)}  & 0.637 (0.006) & 0.646 (0.003)          & 0.658 (0.007)          & 0.654 (0.003)          & 0.643 (0.004)       & 0.629 (0.007)        & 0.632 (0.003)    \\
Vehicle          & \textbf{0.814 (0.003)} & 0.792 (0.008) & 0.796 (0.003)          & 0.806 (0.004)          & 0.808 (0.003)          & 0.805 (0.005)       & 0.8 (0.004)          & 0.791 (0.005)    \\
Apsfail          & 0.925 (0.0)            & 0.921 (0.003) & 0.924 (0.001)          & \textbf{0.926 (0.001)} & 0.921 (0.002)          & 0.92 (0.002)        & 0.919 (0.001)        & 0.921 (0.002)    \\
Phoneme          & \textbf{0.778 (0.001)} & 0.766 (0.006) & 0.765 (0.002)          & 0.766 (0.005)          & 0.77 (0.004)           & 0.767 (0.003)       & 0.766 (0.003)        & 0.758 (0.002)    \\
Wind             & \textbf{0.832 (0.003)} & 0.828 (0.002) & 0.827 (0.003)          & 0.831 (0.002)          & 0.825 (0.002)          & 0.823 (0.002)       & 0.823 (0.002)        & 0.825 (0.003)    \\
Pol              & \textbf{0.856 (0.005)} & 0.834 (0.008) & 0.837 (0.009)          & 0.848 (0.004)          & 0.836 (0.014)          & 0.824 (0.007)       & 0.812 (0.008)        & 0.841 (0.009)    \\
CPU              & 0.896 (0.001)          & 0.897 (0.002) & \textbf{0.899 (0.001)} & 0.897 (0.002)          & 0.894 (0.002)          & 0.892 (0.001)       & 0.889 (0.002)        & 0.895 (0.001)    \\
2DPlanes         & \textbf{0.846 (0.006)} & 0.83 (0.006)  & 0.837 (0.006)          & 0.841 (0.003)          & \textbf{0.846 (0.005)} & 0.843 (0.006)       & 0.838 (0.007)        & 0.829 (0.007)  \\ \bottomrule
\end{tabular}
}
\caption{
Accuracy comparison of models trained with weighted samples. We compare the seven data valuation methods on the 13 classification datasets. 
For MNIST, FMNIST, and CIFAR10 we use a size-2000 subset. 
The average and standard error of classification accuracy are denoted by 'average (standard error)'. The standard error is only due to the stochasticity in utility function evaluation. Boldface numbers denote the best method. 
Beta Shapley does NOT applicable for datasets with $\ge 1000$ data points (MNIST, FMNIST, CIFAR10, and Click) due to numerical instability. 
`Uniform' denotes training with uniformly weighted samples. 
}
\label{tb:weighted-sample}
\end{table*}

\subsection{Sample Efficiency}
\label{sec:eval-efficiency}

\begin{figure}[t]
    \centering
    \includegraphics[width=\columnwidth]{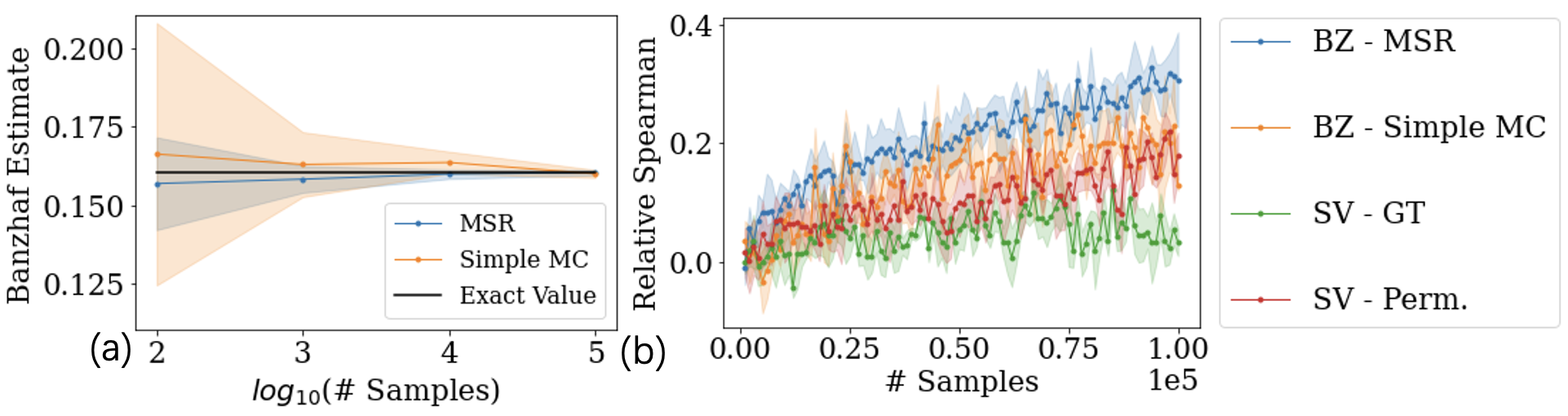}
    \caption{
    (a) Convergence of the MSR and simple MC estimator for the Banzhaf value of a data point from a synthetic dataset. The shaded area means the estimation variance over 5 runs of sampling. 
    (b) The Relative Spearman Index of the estimators for Shapley and Banzhaf value on the MNIST dataset. 
    `SV-GT' refers to Group Testing-based Shapley estimator, and `SV-Perm.' refers to Permutation Sampling Shapley estimator. 
    For `SV-GT', we implement the improved Group Testing-based Shapley estimator proposed in \cite{wang2023note} instead of the original version from \cite{jia2019towards}. 
    }
    \label{fig:convergence}
\end{figure}

\textbf{MSR vs. Simple MC.}
We compare the sample complexity of the MSR and the simple MC estimator for approximating the Banzhaf value. In order to exactly evaluate the estimation error of the two estimators, we use a synthetic dataset generated by multivariate Gaussian with only 10 data points---a scale where we can compute the Banzhaf value exactly. 
The utility function is the validation accuracy of logistic regression trained with full-batch gradient descent (no randomness in training). 
Thus, the randomness associated with the estimator error is solely from random sampling in the estimation algorithm. 
Figure \ref{fig:convergence} (a) compares the variance of the two estimators as the number of samples grows. 
As we can see, the estimation error of the MSR estimator reduces much more quickly than that of the simple MC estimator. Furthermore, given the same amount of samples, the MSR estimator exhibits a much smaller variance across different runs compared to the simple MC method.

\newcommand{\rsp}{\textbf{Relative Spearman Index}}

\textbf{Banzhaf vs. Shapley.} 
We then compare the two Banzhaf value estimators with two popular Shapley value estimators: the \emph{Permutation Sampling} \citep{castro2009polynomial} and \add{the \emph{Group Testing} algorithm \citep{jia2019towards, wang2023note}}. 
Since the Shapley and Banzhaf values are of different scales, for a fair comparison, we measure the consistency of the ranking of estimated data values. Specifically, we increase the sample size by adding a new batch of samples at every iteration, and evaluate each of the estimators on different sample sizes. For each estimator, we calculate the \rsp, which is the Spearman index of the value estimates between two adjacent iterations.
A high Relative Spearman Index means the ranking does not change too much with extra samples, which implies convergence of data value rankings. 
Figure \ref{fig:convergence} (b) compares the Relative Spearman Index of different data value estimators on MNIST dataset. 
We can see that the MSR estimator for the Banzhaf value converges much faster than the two estimators for the Shapley value in terms of data value rankings.

\begin{figure}[t]
    \centering
    \includegraphics[width=0.5\columnwidth]{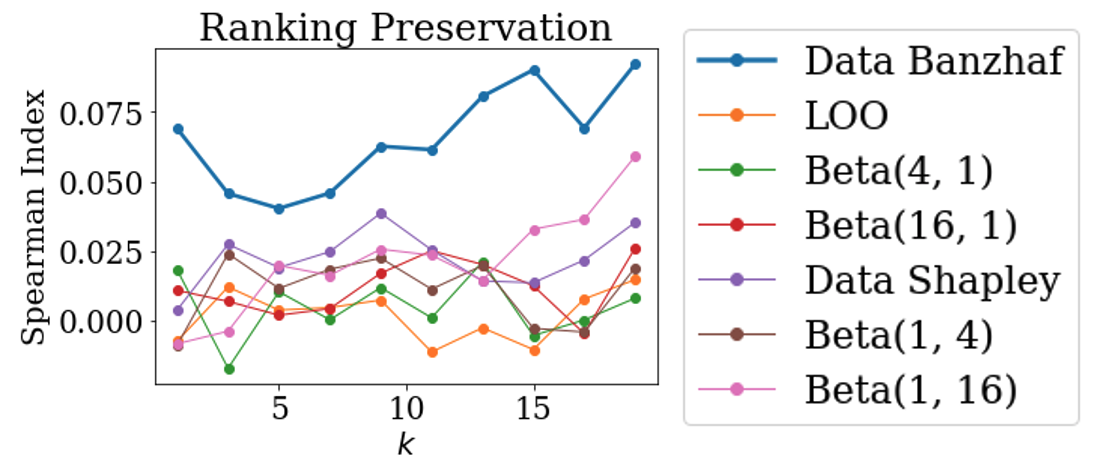}
    \caption{Impact of the noise in utility scores on the Spearman index between the ranking of reference data value and the ranking of data value estimated from noisy utility scores. The `$k$' in x-axis means the number of repeated evaluations of $U(S)$. \textbf{The larger the $k$, the smaller the noise magnitude.} Detailed experiment procedure are in Appendix \ref{appendix:additional-rank-stab}. }
    \label{fig:full}
\end{figure}

\begin{table*}[h]
\resizebox{\textwidth}{!}{
\begin{tabular}{@{}cccccccc@{}}
\toprule
{\color[HTML]{000000} \textbf{Dataset}} & {\color[HTML]{000000} \textbf{Data Banzhaf}}  & {\color[HTML]{000000} \textbf{LOO}}           & {\color[HTML]{000000} \textbf{Beta(16, 1)}} & {\color[HTML]{000000} \textbf{Beta(4, 1)}}    & {\color[HTML]{000000} \textbf{Data Shapley}}  & {\color[HTML]{000000} \textbf{Beta(1, 4)}} & {\color[HTML]{000000} \textbf{Beta(1, 16)}} \\ \midrule
{\color[HTML]{000000} MNIST}            & {\color[HTML]{000000} \textbf{0.193 (0.017)}} & {\color[HTML]{000000} 0.165 (0.009)}          & {\color[HTML]{000000} -}                    & {\color[HTML]{000000} -}                      & {\color[HTML]{000000} 0.135 (0.025)}          & {\color[HTML]{000000} -}                   & {\color[HTML]{000000} -}                    \\
{\color[HTML]{000000} FMNIST}           & {\color[HTML]{000000} 0.156 (0.018)}          & {\color[HTML]{000000} \textbf{0.164 (0.014)}} & {\color[HTML]{000000} -}                    & {\color[HTML]{000000} -}                      & {\color[HTML]{000000} 0.135 (0.016)}          & {\color[HTML]{000000} -}                   & {\color[HTML]{000000} -}                    \\
{\color[HTML]{000000} CIFAR10}          & {\color[HTML]{000000} \textbf{0.22 (0.003)}}  & {\color[HTML]{000000} 0.086 (0.02)}           & {\color[HTML]{000000} -}                    & {\color[HTML]{000000} -}                      & {\color[HTML]{000000} 0.152 (0.023)}          & {\color[HTML]{000000} -}                   & {\color[HTML]{000000} -}                    \\
{\color[HTML]{000000} Click}            & {\color[HTML]{000000} \textbf{0.206 (0.01)}}  & {\color[HTML]{000000} 0.096 (0.034)}          & {\color[HTML]{000000} -}                    & {\color[HTML]{000000} -}                      & {\color[HTML]{000000} 0.116 (0.024)}          & {\color[HTML]{000000} -}                   & {\color[HTML]{000000} -}                    \\
{\color[HTML]{000000} Fraud}            & {\color[HTML]{000000} 0.47 (0.024)}           & {\color[HTML]{000000} 0.157 (0.046)}          & {\color[HTML]{000000} 0.55 (0.032)}         & {\color[HTML]{000000} 0.59 (0.037)}           & {\color[HTML]{000000} \textbf{0.65 (0.032)}}  & {\color[HTML]{000000} 0.19 (0.058)}        & {\color[HTML]{000000} 0.14 (0.058)}         \\
{\color[HTML]{000000} Creditcard}       & {\color[HTML]{000000} 0.27 (0.024)}           & {\color[HTML]{000000} 0.113 (0.073)}          & {\color[HTML]{000000} 0.25 (0.063)}         & {\color[HTML]{000000} \textbf{0.28 (0.081)}}  & {\color[HTML]{000000} 0.26 (0.049)}           & {\color[HTML]{000000} 0.17 (0.024)}        & {\color[HTML]{000000} 0.17 (0.087)}         \\
{\color[HTML]{000000} Vehicle}          & {\color[HTML]{000000} \textbf{0.45 (0.0)}}    & {\color[HTML]{000000} 0.123 (0.068)}          & {\color[HTML]{000000} 0.43 (0.051)}         & {\color[HTML]{000000} 0.42 (0.068)}           & {\color[HTML]{000000} 0.41 (0.066)}           & {\color[HTML]{000000} 0.16 (0.058)}        & {\color[HTML]{000000} 0.1 (0.055)}          \\
{\color[HTML]{000000} Apsfail}          & {\color[HTML]{000000} \textbf{0.49 (0.037)}}  & {\color[HTML]{000000} 0.096 (0.09)}           & {\color[HTML]{000000} 0.36 (0.02)}          & {\color[HTML]{000000} 0.42 (0.024)}           & {\color[HTML]{000000} 0.47 (0.024)}           & {\color[HTML]{000000} 0.22 (0.051)}        & {\color[HTML]{000000} 0.2 (0.071)}          \\
{\color[HTML]{000000} Phoneme}          & {\color[HTML]{000000} 0.216 (0.023)}          & {\color[HTML]{000000} 0.115 (0.026)}          & {\color[HTML]{000000} 0.232 (0.02)}         & {\color[HTML]{000000} \textbf{0.236 (0.027)}} & {\color[HTML]{000000} 0.216 (0.032)}          & {\color[HTML]{000000} 0.124 (0.039)}       & {\color[HTML]{000000} 0.088 (0.02)}         \\
{\color[HTML]{000000} Wind}             & {\color[HTML]{000000} 0.36 (0.02)}            & {\color[HTML]{000000} 0.073 (0.022)}          & {\color[HTML]{000000} 0.51 (0.037)}         & {\color[HTML]{000000} 0.52 (0.04)}            & {\color[HTML]{000000} \textbf{0.57 (0.068)}}  & {\color[HTML]{000000} 0.19 (0.086)}        & {\color[HTML]{000000} 0.17 (0.06)}          \\
{\color[HTML]{000000} Pol}              & {\color[HTML]{000000} \textbf{0.47 (0.04)}}   & {\color[HTML]{000000} 0.097 (0.093)}          & {\color[HTML]{000000} 0.26 (0.037)}         & {\color[HTML]{000000} 0.4 (0.055)}            & {\color[HTML]{000000} 0.44 (0.058)}           & {\color[HTML]{000000} 0.17 (0.051)}        & {\color[HTML]{000000} 0.09 (0.02)}          \\
{\color[HTML]{000000} CPU}              & {\color[HTML]{000000} 0.35 (0.045)}           & {\color[HTML]{000000} 0.107 (0.074)}          & {\color[HTML]{000000} 0.45 (0.055)}         & {\color[HTML]{000000} \textbf{0.48 (0.06)}}   & {\color[HTML]{000000} 0.46 (0.037)}           & {\color[HTML]{000000} 0.13 (0.068)}        & {\color[HTML]{000000} 0.08 (0.081)}         \\
{\color[HTML]{000000} 2DPlanes}         & {\color[HTML]{000000} 0.422 (0.025)}          & {\color[HTML]{000000} 0.153 (0.057)}          & {\color[HTML]{000000} 0.338 (0.034)}        & {\color[HTML]{000000} 0.471 (0.041)}          & {\color[HTML]{000000} \textbf{0.512 (0.082)}} & {\color[HTML]{000000} 0.471 (0.041)}       & {\color[HTML]{000000} 0.338 (0.034)}        \\ \bottomrule
\end{tabular}
}
\caption{Comparison of mislabel data detection ability of the seven data valuation methods on the 13 classification datasets. 
The average and standard error of F1-score are denoted by `average (standard error)'. 
The standard error is only due to the random noise in the utility function evaluation. 
Boldface numbers denote the best method in F1-score average. 
}
\label{tb:mislabel-detection}
\end{table*}

\subsection{Ranking Stability under Noisy Utility Functions}
\label{sec:eval-ranking}

\add{We compare the robustness of data value notions in preserving the value ranking against the utility score perturbation due to the stochasticity in SGD}. 
\add{The tricky part in the experiment design is that \emph{we need to adjust the scale of the perturbation caused by natural stochastic learning algorithm}. 
In Appendix \ref{appendix:additional-rank-stab}, we show a procedure for controlling the magnitude of perturbation via repeatedly evaluating $U(S)$ for $k$ times: the larger the $k$, the smaller the noise magnitude in the averaged $\Uhat(S)$.  
In Figure \ref{fig:full}, we plot the Spearman index between the ranking of reference data values\footnote{We approximate the ground-truth by setting $k=50$.} and the ranking of data values estimated from noisy utility scores. Detailed settings are in Appendix \ref{appendix:additional-rank-stab}.}
As we can see, Data Banzhaf achieves the most stable ranking and its stability advantage gets more prominent as the noise increases. Moreover, we show the box-plot of Banzhaf value estimates in Figure \ref{fig:checkvar-value} (c). Compared with LOO and Shapley value, learning stochasticity has a much smaller impact on the ranking of Banzhaf values (the average Spearman index $\approx 0.856$ compared with the one for Shapley value $\approx 0.038$). 
Back to Figure \ref{fig:topk}, the high/low-quality data points selected by the Banzhaf value is much more consistent across different runs compared with LOO and Shapley value. 



\subsection{Applications of Data Banzhaf}
\label{sec:eval-applications}

Given the promising results obtained from the proof-of-concept evaluation in Section \ref{sec:eval-efficiency} and \ref{sec:eval-ranking}, we move forward to real-world applications and evaluate the effectiveness of Data Banzhaf in distinguishing data quality for machine learning tasks. Particularly, we considered two applications enabled by data valuation: one is to reweight training data during learning and another is to detect mislabeled points. We use neural networks trained with Adam as the learning algorithm wherein the associated utility function is noisy in nature. 
We remark that most prior works (e.g., \cite{ghorbani2019data} and \cite{kwon2021beta}) use deterministic Logistic Regression to avoid the randomness in data value results.
We compare with 6 baselines that are previously proposed semivalue-based data value notions: Data Shapley, Leave-one-out (LOO), and 4 variations of Beta Shapley \citep{kwon2021beta} (Beta(1, 4), Beta(1, 16), Beta(4, 1), Beta(16, 1)).\footnote{Beta Shapley does not apply for datasets with $\ge 1000$ data points due to numerical instability.} 
We use 13 standard datasets that are previously used in the data valuation literature as the benchmark tasks. 

\textbf{Learning with Weighted Samples.} 
Similar to \cite{kwon2021beta}, we weight each training point by normalizing the associated data value between [0,1]. Then, during training, each training sample will be selected with a probability equal to the assigned weight. As a result, data points with a higher value are more likely to be selected in the random mini-batch of SGD, and data points with a lower value are rarely used.
We train a neural network classifier to minimize the weighted loss, and then evaluate the accuracy on the held-out test dataset. 
As Table \ref{tb:weighted-sample} shows, Data Banzhaf outperforms other baselines.

\textbf{Noisy Label Detection.}
We investigate the ability of different data value notions in detecting mislabeled points under noisy utility functions. 
We generate noisy labeled samples by flipping labels for a randomly chosen 10\% of training data points. 
We mark a data point as a mislabeled one if its data value is less than 10 percentile of all data value scores. 
We use F1-score as the performance metric for mislabeling detection. 
Table \ref{tb:mislabel-detection} in the Appendix shows the F1-score of the 7 data valuation methods and Data Banzhaf shows the best overall performance. 
\section{Limitation and Future Work}
\label{sec:conclusion}

This work presents the first focused study on the reliability of data valuation in the presence of learning stochasticity. We develop Data Banzhaf, a new data valuation method that is more robust against the perturbations to the model performance scores than existing ones. 
One limitation of this study is that the robustness guarantee is a worst-case guarantee, which assumes the perturbation could be arbitrary or even adversarial. 
While such a robustness notion is advantageous in many aspects as discussed in Remark \ref{remark:why-noise-agonostic}, the threat model might be strong when the source of perturbation is known. Developing new robustness notions which take into account specific utility noise distributions induced by learning stochasticity is important future work, and it requires a breakthrough in understanding the noise structure caused by learning stochasticity.

\section*{Acknowledgments}
The work was supported by Princeton's Gordon Y. S. Wu Fellowship and Cisco Research Awards. 
We thank Sanjeev Arora for the helpful discussion. We thank Yongchan Kwon for sharing the implementation of Beta Shapley and dataset sources. We thank Rachel Li and Jacqueline Wei for sharing the summary of the discussion about this work in CS236r (Topics at the Interface between Computer Science and Economics, Fall 2022) at Harvard. We are grateful to anonymous reviewers at AISTATS for their valuable feedback. 

\newpage

\bibliographystyle{plainnat}
\bibliography{ref}


\newpage
\onecolumn

\appendix

\section{ Extended Related Work }
\label{appendix:related-work}

\paragraph{Cooperative Game Theory-based Data Valuation.}
Game-theoretic formulations of data valuation have become popular in recent years due to the formal, axiomatic justification. 
Particularly, the Shapley value has become a popular data value notion \citep{ghorbani2019data, jia2019towards} as it is the unique value notion that satisfies the four axioms: \emph{linearity, dummy player, symmetry, and efficiency}. Alternatives to the Shapley value for data valuation have also been proposed through the relaxation of the Shapley axioms. 
As we mentioned in the main text, by relaxing the efficiency axiom, the class of solution concepts that satisfy linearity, dummy player, and symmetry is called \emph{semivalue} \citep{weber1988probabilistic}. 
\cite{kwon2021beta} propose \emph{Beta Shapley}, which is a collection of semivalues that enjoy certain mathematical elegance in the representation. However, the construction of Beta Shapley does \emph{not} take the perturbation of performance scores into account; the original paper only uses deterministic learning algorithms (such as Logistic Regression) for the experiment. 
In this work, we characterize the Banzhaf value as a robust semivalue against the perturbation in the performance scores, which is critical for applications involving stochastic training such as neural network training.

In addition, by relaxing the linearity axiom of the Shapley value, \cite{yan2020ifyoulike} propose to use the \emph{Least core} \citep{deng1994complexity}, another classic concept in cooperative game theory, as an alternative to the Shapley value for data valuation. At a high level, the Least Core is a profit allocation scheme that requires the smallest subsidy to each coalition $S$ so that no participant has the incentive to deviate from the grand coalition $N$. It is computed by solving the linear programming problem below:
\begin{align}
\label{eq:leastcore} 
\min_{\phi_\lc} e  \quad \text{s.t. }\sum_{i=1}^n \phi_\lc(i) = U(N),\quad \sum_{i \in S}\phi_\lc(i) +e \geq U(S), \forall S \subseteq N  
\end{align}
We show additional experiment results for the robustness of the Least core in Appendix \ref{appendix:additional-rank-stab}. 

Distributional Shapley value \citep{ghorbani2020distributional,kwon2021efficient} is a variant of Data Shapley which measures the contribution of a data point with respect to a \emph{data distribution} instead of a static dataset. The stability notion discussed in the paper is in terms of the perturbation to the data point instead of model performance scores. \cite{bian2021energy} take a probabilistic treatment of cooperative games. Through mean-field variational inference in the energy-based model, they develop \emph{multiple step variational value} as a data value notion that satisfies null player, marginalism, and symmetry. The marginalism axiom requires a player’s payoffs to depend only on his own marginal contributions – whenever they remain unchanged, his payoffs should be unaffected. 
\cite{yona2021s} relax the assumption that the learning algorithm is fixed in advance in the previous work, and extend Shapley value to jointly quantify the contribution of data points and learning algorithms. It improves the stability of data value under domain shifts by attributing the responsibility to the learning algorithm. \cite{agussurja2022convergence} derive the convergence property of the Shapley value in parametric Bayesian learning games, and apply the result to establish an online collaborative learning framework that is asymptotically Shapley-fair. 


\paragraph{Banzhaf value, Banzhaf power index, and friends.}
What is known today as the Banzhaf value or Banzhaf power index was originally introduced by Lionel Penrose in 1946 \citep{penrose1946elementary}. It was reinvented by John F. Banzhaf III in 1964 \citep{banzhaf1964weighted}, and was reinvented once more by James Samuel Coleman in 1971 \citep{coleman1971control} before it became part of the mainstream literature. 
In the field of machine learning, the Banzhaf value has been previously applied to the problem of measuring feature importance \citep{datta2015influence, kulynych2017feature, sliwinski2019axiomatic, patel2021high, karczmarz2021improved}. While these works suggest that the Banzhaf value could be an alternative to the popular Shapley value-based model interpretation methods \citep{lundberg2017unified}, it remains unclear in which settings the Banzhaf value may be preferable to the Shapley value. This work provides the first theoretical understanding of the advantage of the Banzhaf value in terms of robustness. 
In addition, the empirical study by \citet{karczmarz2021improved} observes that the Banzhaf value is much more robust than the Shapley value when the numerical precision is low in the computation, which validates our theoretical result. 

We would like to note that the Banzhaf value is a generalization of the Banzhaf power index \citep{banzhaf1964weighted} which is designed for gauging the voting power of players in a simple voting game. In a \emph{simple voting game}, the utility function $U: 2^N \rightarrow \{0, 1\}$ where $U(\emptyset)=0$, $U(N)=1$ and $U(S) \le U(T)$ whenever $S \subseteq T$. 
In contrast, the setting of data valuation is more complicated and challenging as we do \emph{not} assume any particular structure of the utility function $U$. 
There are also many kinds of power indices available, such as Shapley-Shubik index \citep{shapley1953value}, Holler index \citep{holler1982forming}, and Deegan-Packel index \citep{deegan1978new}. 
The interpretation and computation of these power indices are active topics in cooperative game theory (e.g., \citep{holler1983power, aziz2008complexity}). 
In this work, we explore the most robust data value notion among the space of semivalues. Exploring the possibility of extending other kinds of cooperative solution concepts to data valuation is an interesting and promising future research direction. 

We also point out that the Banzhaf value is equivalent to Datamodel \citep{ilyas2022datamodels, saunshi2022understanding} if each data point is sampled independently with probability 0.5 and no regularization are used (i.e., $p=0.5$ in the version of Datamodel discussed in \cite{saunshi2022understanding}). This is due to a characterization of the Banzhaf value as the best linear approximation for the utility function in terms of least square loss \citep{hammer1992approximations}. 

\paragraph{Efficient Estimation of the Banzhaf and Shapley value.}
Most of the estimation algorithms for the Banzhaf and Shapley value are based on Monte Carlo techniques, especially when no prior knowledge about the structure of utility function $U$ is available. 
The Simple Monte Carlo estimation for Shapley value (i.e., the Permutation Sampling) was mentioned in very early works \citep{mann1960values}, and the sample complexity analysis of Permutation sampling for Shapley value can be found in \citep{maleki2015addressing}. 
\citet{covertshapley} propose an improved Shapley estimator based on the Importance Sampling technique. 
\cite{jia2019towards} improve the sample complexity of Monte Carlo-based Shapley estimation based on group testing technique (which is further improved by \cite{wang2023note} later). G-Shapley, TMC-Shapley \citep{ghorbani2019data} and KNN-Shapley \citep{jia2019efficient} have been proposed as the efficient proxies of Shapley value. However, these are \emph{biased} estimators for the Shapley value in nature. 
The sample complexity of the Simple Monte Carlo method for the Banzhaf value / Banzhaf power index \citep{merrill1982approximations} first appeared in \citet{bachrach2010approximating}. 

Another line of works studies the estimation of the Shapley and Banzhaf value in the problems with specific structures, e.g., (weighted) voting games \citep{owen1972multilinear, fatima2008linear, teneggi2021fast}, the games where only a few players have non-zero contribution \citep{jia2019towards, lin2022measuring}.

\paragraph{Alternative Approaches for Data Valuation in ML.} 
We review some recent works on data valuation methods here that are \emph{not} based on cooperative game theory, and we refer the readers to \cite{sim2022data} for a comprehensive technical survey of data valuation in ML. \cite{sim2020collaborative} use the reduction in uncertainty of the model parameters given the data as the valuation metric. Axioms that are important for collaborative learning such as strict desirability and monotonicity are also mentioned in the paper. 
\emph{Training-free} and \emph{task-agnostic} data valuation methods have also been proposed. 
\cite{tay2022incentivizing} suggest a data valuation method utilizing maximum mean discrepancy (MMD) between the data source and the true data distribution. \cite{xu2021validation} come up with a diversity measure called robust volume (RV) for valuing data sources. The robustness of the proposed data value notion is discussed in terms of the stability against data replication (via direct data copying). \cite{han2020replication} also study the replication robustness of semivalues. 
\cite{wu2022davinz} use a domain-aware generalization bound for data valuation, where the bound is based on neural tangent kernel (NTK) theory. 
\cite{amiri2022fundamentals} use the statistical differences between the source data and a baseline dataset as the valuation metric.

\newpage

\section{ Further Discussion about Considerations in Definition \ref{def:safety-margin} }
\label{appendix:robustness-discussion}



\subsection{Why do we consider rank stability?}
\label{appendix:disc-rank-stability}

As mentioned in the main text, rank stability is a reasonable robustness measure as the ranking of data values is important in many applications such as data subset selection and data pruning. 
Yet, another natural robustness measure is the stability in absolute value. Specifically, we can view a semivalue as a function $\phi: \R^{2^n} \rightarrow \R^n$ which takes a utility function $U \in \R^{2^n}$ as input, and output the values of data points $\phi(U) \in \R^n$. By taking this functional view, a natural robustness measure for semivalue $\phi(\cdot; w)$ is its Lipschitz constant $L$, which is defined as the smallest constant such that 
\begin{align}
    \norm{\phi(U; w) - \phi(\widehat U; w)} 
    \le 
    L \norm{U - \widehat U}
\end{align}
for all possible pairs of $U$ and $\widehat U$. 
However, since the efficiency axiom is relaxed for semivalue, such a robustness measure has the issue that \emph{different semivalues have different scales; the same change in the absolute value may mean differently for them.} 
On the other hand, rank stability provides a fair measure for comparison between different semivalues.

\subsection{Why do we consider a noise-structure-agnostic definition?}
\label{appendix:disc-noise-agnostic}

The safety margin defined in Definition \ref{def:safety-margin} does \textbf{not} depend on the actual noise induced by a specific stochastic training algorithm. 
Indeed, one may attempt to \textbf{directly analyze the potential perturbation distribution of utility function $U(S)$ caused by the stochasticity in learning algorithm} (e.g., random initialization, mini-batch selection), and define the safety margin \textbf{with respect to the specific perturbation (distribution)}. 

However, we did not adopt such a definition (referred to as \emph{noise-structure-specific} notion later) due to several considerations. 


\textbf{I. Even if the datasets and learning algorithm are known in advance, it is usually intractable to analytically derive a definite assertion on the noise in performance scores.} 
The probability distribution of the model performance scores $U(S) = \metric(\A(S))$ change with different training data $S$, test data (characterized by $\metric(\cdot)$), and the hyperparameters of learning algorithm $\A$. Even if such information is all available in advance (before we pick the data value notion), it is difficult to analyze the distribution of $U(S)$ without actually training on $S$ for common learning algorithms such as SGD. In order to illustrate the technical difficulty, we show such an attempt in Section \ref{sec:dist-in-simple-setting}. 

Specifically, under the simple (if not the simplest) setting of 1-dimensional linear regression trained by full batch gradient descent with Gaussian random initialization, we show the derivation of the probability distribution of validation mean squared error.\footnote{We use full batch gradient descent so the only randomness in the learning algorithm is the random initialization.} We show that the distribution of validation mean squared error is a \emph{generalized $\chi^2$ distribution}. Unfortunately, the probability density and cumulative density of generalized $\chi^2$ distribution are known for being intractable, which impedes further analysis of its impact on data values. Furthermore, \emph{it is unclear how to analyze the validation loss distribution for batch stochastic gradient descent (see the discussion at the end of Appendix \ref{sec:dist-in-simple-setting})}. 
As we can see, even for such a simple setting, there are significant challenges in designing noise-structure-specific robust data value notions.

\textbf{II. Noise-structure-specific robust data value notion may be computationally infeasible. }
More importantly, in order to design noise-structure-specific robust data value notion for a dataset $N$ of size $n$, we need to understand the noise distribution of $U(S)$ for every subset $S \subseteq N$, which introduces an exponentially large computationally burden. While one might be able to resort to numerical methods to approximate the intractable probability density issue mentioned before, the computational costs incurred by modeling the performance distribution on exponential subsets are prohibitive.

\textbf{III. In practice, one may not have prior knowledge about the dataset and source of perturbation; in that case, a worst-case robustness notion is preferred.}
Another difficulty regarding the noise-structure-specific robustness notion is that in practice, the datasets may not be known in advance for privacy consideration \citep{agahari2022not, tian2022private}. 
Furthermore, the performance scores may also be perturbed due to other factors such as hardware faults, software bugs, or even adversarial attacks. 
Given the uncertainty of the perturbation in the practical scenario, it makes more sense to define the robustness in terms of the worst-case perturbation. 
This is exactly Kerckhoffs's principle and such definitions are common in machine learning. For instance, robust learning against adversarial examples is aimed at being resilient to the worst-case noise \citep{madry2017towards} within a norm ball; differential privacy \citep{dwork2006calibrating} protects individual data record's information from arbitrary attackers.


Based on the identified difficulties for the potential noise-structure-specific notion, and the advantages of the noise-structure-agnostic notion, we believe it is ideal to define the robustness through the way in Section \ref{sec:safety-margin-def}. 
Importantly, the proposed robustness notion can not only lead to \textbf{tractable robustness analysis} for celebrated data value notions (Shapley, LOO); at the same time, the data value notion obtained by optimizing the proposed robustness measure, i.e., the Banzhaf value, indeed \textbf{achieves good robustness against realistic learning stochasticity} (as shown in the experiment section). Last but not least, the study of how to characterize the dependency between performance score and learning stochasticity itself needs to be investigated in depth before one could claim a reasonable probabilistic robustness definition under learning stochasticity.

\subsubsection{Difficulties in Analytical Analysis of $U(S)$}
\label{sec:dist-in-simple-setting}

\newcommand{\mL}{\mathcal{L}}
\newcommand{\model}{f}
\newcommand{\val}{\mathrm{val}}
\newcommand{\mP}{\mathcal{P}}

As we mentioned in Section \ref{sec:utility_stochastic}, the utility of a dataset $U(S)$ is defined as $\metric(\A(S))$, i.e., the performance of a model $\A(S)$ trained on a dataset $S$. 
However, the learning algorithms may have randomness during the training process. 
For instance, the stochastic gradient descent (SGD) algorithm involves random weights initialization and random mini-batch selection. 
Formally, we can view $\A$ as a randomized function that takes a dataset $S$ as input, and output a trained model $\A(S; r)$, where $r$ is a random string that describes all randomness used during the training process. 
For SGD, $r$ is the weights initialization and mini-batch selection choices. 
For each execution of the learning algorithm, $r$ is sampled from the corresponding probability distribution $\mP_\A$ specified by the learning algorithm, e.g., the weights are initialized by isotropic Gaussian distribution and mini-batch selection is based on Binomial distribution. 
Since the trained model $\A(S)$ is a random variable, the utility $U(S)$ is inherently a randomized function and the randomness depends on $r$. 
To make data value notions such as Shapley and Banzhaf value to be well-defined and independent from the learning stochasticity, at the end of Section \ref{sec:utility_stochastic} we refine $U(S) := \E_{r \sim P}[ \metric(\A(S; r)) ]$. 
Let $\widehat U(S)$ denote the random utility $\metric(\A(S; r))$ with a randomly $r \sim \mP$. 
In order to find the most robust semivalue against the random noise $\widehat U(S) - U(S)$, one natural idea would be analytically derive the probability distribution of $\widehat U(S)$, and design the corresponding data value accordingly. 

Unfortunately, it is actually non-trivial to conclude a definite assumption on the performance noise caused by learning stochasticity. 
In this section, we illustrate such difficulty by directly analyzing the distribution of $\widehat U(S)$ for arguably the simplest setting: $1$-dimensional linear regression trained by gradient descent with random initialization. 
The model here is defined as $\model(x; \theta) := \theta x$ and is trained on a dataset $S = \{(x, y)\}$ via mean squared error $\mL(\theta; S) = \frac{1}{2}\sum_{(x, y) \in S}(y-\model(x; \theta))^2$. 
The space of both input feature $x$ and model parameter $\theta$ are $\R$. 

The source of the randomness in the learning algorithm here is random initialization.
Specifically, we use gradient descent to train the linear regression model with Gaussian random initialization. 
\begin{align*}
    \theta_0 &\sim \N(0, \sigma^2) \\
    \theta_1 &\leftarrow \theta_0 - \eta \g \mL(\theta_0) \\
    &\ldots \\
    \theta_T &\leftarrow \theta_0 - \eta \g \mL(\theta_{T-1}) 
\end{align*}
where $\eta$ is the learning rate and $T$ is the total number of iterations. 
The utility of the trained model is given by the mean squared error $\widehat U(S) := \mL(\theta_T; S_\val)$ on validation set $S_{\val} = \{(x^*, y^*)\}$. 

Due to the simplicity of the setting, we can derive the evolution of the distribution of $\theta_T$ as well as $\mL(\theta_T; S_\val)$. 
Since $\g \mL(\theta) = \sum_{(x, y) \in S} x (x \theta - y)$, at iteration $t$ we have
\begin{align*}
    \theta_{t+1} 
    &= \theta_{t} - \eta \sum_{(x, y) \in S} x (x \theta_{t} - y) \\
    &= \theta_t \left(1 - \eta \sum_{(x, y) \in S} x^2\right) + \eta \sum_{(x, y) \in S} xy 
\end{align*}
Therefore, $\{ \theta_t \}$ is a sequence of Gaussian random variables, where 
\begin{align*}
\E[\theta_{t+1}] &= \E[\theta_{t}] \left(1 - \eta \sum_{(x, y) \in S} x^2\right) + \eta \sum_{(x, y) \in S} xy  \\
\Var( \theta_{t+1} ) &= \Var( \theta_{t} )  \left(1 - \eta \sum_{(x, y) \in S} x^2\right)^2
\end{align*}

To simplify the notation, let $f(S) = \sum_{(x, y) \in S} xy$ and $g(S) = \sum_{(x, y) \in S} x^2$. 
By a simple analysis, we can derive the general term formula for $\theta_T$ as 
\begin{align*}
    \E[\theta_{T}] &= \frac{f(S)}{g(S)} \left( 1 - (1-\eta g(S))^T \right) \\
    \Var( \theta_{T} ) &= \sigma^2 \left(1 - \eta g(S) \right)^{2T}
\end{align*}

For each validation data point $(x^*, y^*)$, we have 
\begin{align*}
    x^* \theta_T - y^* \sim 
    \N \left( x^*\E[\theta_T] - y^*, \Var( \theta_{T} ) (x^*)^2 \right)
\end{align*}

Therefore, $(x^* \theta_T - y^*)^2$ is a (scaled) non-central $\chi^{\prime 2}$ distribution with degree of freedom 1 and non-centrality parameter $\frac{ \left( x^*\E[\theta_T] - y^*\right)^2 }{\Var(\theta_T) (x^*)^2}$. 
(\cite{abramowitz1964handbook}, Section 26.4.25):
\begin{align*}
    (x^* \theta_T - y^*)^2 &\sim \Var(\theta_T) (x^*)^2 \cdot \chi^{\prime 2}\left(1, \left( \frac{x^*\E[\theta_T] - y^*}{\std(\theta_T) (x^*)} \right)^2 \right) \\
    &\sim \Var(\theta_T) (x^*)^2 \cdot \chi^{\prime 2}\left(1, \frac{ \left( x^*\E[\theta_T] - y^*\right)^2 }{\Var(\theta_T) (x^*)^2} \right)
\end{align*}

Consequently, the validation loss $\mL(\theta_T; S_\val) = \sum_{(x^*, y^*) \in S_{\val}} (x^* \theta_T - y^*)^2$ is a generalized chi-squared distribution \citep{davies1980algorithm}:
\begin{align*}
    \mL(\theta_T; S_\val) 
    \sim \sum_{(x^*, y^*) \in S_\val} \Var(\theta_T) (x^*)^2 \cdot \chi^{\prime 2}\left(1, \frac{ \left( x^*\E[\theta_T] - y^*\right)^2 }{\Var(\theta_T) (x^*)^2} \right)
\end{align*}

There are two major difficulties in applying the above results for designing robust data value notions specific to such a simplified setting:
\begin{enumerate}
    \item The generalized chi-squared distribution is known for intractable probability density or cumulative distribution \citep{davies1980algorithm, das2021method}. 
    \item To design noise-structure-specific robust data value notion for a dataset $N$ of size $n$, we need to understand the noise distribution for every subset $S \subseteq N$, which introduces an exponentially large computational burden. While one might be able to resort to numerical methods to approximate the intractable probability density, the computational costs incurred by modeling the performance distribution on exponential subsets are prohibitive. 
    
\end{enumerate}



Therefore, even for such a very simple setting, there are significant difficulties in designing noise-structure-specific robust data value notions based on directly analyzing noise distribution. 
\paragraph{Difficulties in Extending to Batch Stochastic Gradient Descent.}
Furthermore, it is unclear how to extend the above analysis to batch stochastic gradient descent. 
For the case of batch stochastic gradient descent, while it is easy to see that the parameter in the first iteration $\theta_1$ is a Gaussian mixture, the distribution of parameters after the first iteration is intractable to analyze. 

Based on the above-identified difficulties, it is preferable to define the robustness in a noise-structure-agnostic way as we did in Section \ref{sec:databanzhaf}. 


\subsection{Why do we consider a $U$-structure-agnostic definition?}
\label{appendix:disc-U-agnostic}
This consideration shares the same reason as \textbf{II} and \textbf{III} in Appendix \ref{appendix:disc-noise-agnostic}. 
If the safety margin depends on the specific $U$, it means that we need to know about $U(S)$ for every $S \subseteq N$, which is computationally infeasible or even impossible. 
On the other hand, the definition of ``$\tau$-distinguishable'' utility functions characterizes the collection of $U$s such that the semivalue should be robust on, while also leading to tractable robustness analysis for semivalues.

\newpage

\section{ Proofs and Additional Theoretical Results }
\label{appendix:proof}

We provide a summary of the content in this section for the convenience of the readers. 
\begin{itemize}
    \item Appendix \ref{appendix:proof-in-main-text}: Proofs for the theorems appeared in the maintext. 
    \item Appendix \ref{appendix:extend-msr}: The MSR Estimator does not Extend to the Shapley Value and Other Known Semivalues. 
    \item Appendix \ref{appendix:msr-robustness}: Robustness of the MSR Estimator. 
    \item Appendix \ref{sec:l2-robustness}: Stability of Banzhaf value in $\ell_2$-norm. 
\end{itemize}

\subsection{Proofs for Theorem \ref{thm:safe-loo-shap}, \ref{thm:banzhaf-robustness}, \ref{thm:mc}, \ref{thm:gt}, \ref{thm:lowerbound} in the Main Text}
\label{appendix:proof-in-main-text}






We omit the parameters of $U, N$, or $w$ when it's clear from the context.

\subsubsection{The Safety Margin for the LOO error, the Shapley value, and the Banzhaf value}

\begin{lemma}
Given a semivalue with weight function $w(\cdot)$, we have 
\begin{align}
    \safe(\tau; w) = \tau
    \sqrt{
    \frac{ 
        \left(
        \sum_{k=1}^{n-1} {n-2 \choose k-1} \left(w(k)+w(k+1)\right)
        \right)^2
        }{ 
        \sum_{k=1}^{n-1} {n-2 \choose k-1} \left(w(k)+w(k+1)\right)^2
        }
    }
    \label{eq:safetymargin}
\end{align} 
for any $\tau>0$.
\label{lemma:safe}
\end{lemma}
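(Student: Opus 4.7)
The plan is to recognize $\safe_{i,j}(\tau;w)$ as a distance-from-$U$-to-a-hyperplane quantity and then optimize separately over $U$ and the coefficient structure. Since $D_{i,j}(\cdot;w)$ is a linear functional, the condition $D_{i,j}(U;w)\,D_{i,j}(\Uhat;w)\le 0$ carves out a closed half-space for $\Uhat$, so the inner minimization in Definition \ref{def:safety-margin} equals the distance from $U$ to the hyperplane $\{V : D_{i,j}(V;w)=0\}$. Expanding $\Delta^{(k)}_{i,j}(U)$ gives
\[
D_{i,j}(U;w) \;=\; \sum_{S\subseteq N\setminus\{i,j\}} \alpha_{|S|+1}\bigl[U(S\cup i) - U(S\cup j)\bigr], \qquad \alpha_k := w(k)+w(k+1),
\]
so the coefficient vector of $D_{i,j}$ is supported only on coordinates $U(S\cup i)$ and $U(S\cup j)$ with magnitude $\alpha_{|S|+1}$. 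The standard point-to-hyperplane formula then yields an expression of the form $|D_{i,j}(U;w)|/C$, where $C$ is the appropriate norm of this coefficient vector, itself a weighted sum of $\alpha_k^2$'s.

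For the outer minimization, I would use the identity $D_{i,j}(U;w) = \sum_{k=1}^{n-1}\alpha_k{n-2\choose k-1}\Delta^{(k)}_{i,j}(U)$ stated right before the lemma. Assuming $\alpha_k\ge 0$ (which holds for the semivalues of interest including LOO, Shapley, and Banzhaf), this linear functional is minimized over $\Utau$ when each binding constraint $\Delta^{(k)}_{i,j}(U)\ge \tau$ is tight, giving $\min_{U\in\Utau} D_{i,j}(U;w) = \tau\sum_{k=1}^{n-1}{n-2\choose k-1}\alpha_k$. I would verify feasibility of this minimum by exhibiting an explicit $U\in[0,1]^{2^n}$ that achieves $\Delta^{(k)}_{i,j}(U)=\tau$ for every $k$---for instance $U(T) = \tfrac{1+\tau}{2}$ if $T\ni i, T\not\ni j$, $\tfrac{1-\tau}{2}$ if $T\not\ni i, T\ni j$, and $\tfrac12$ otherwise. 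By the complete symmetry of the setup over the choice of pair $(i,j)$, the outer $\min_{i\ne j}$ in the definition of $\safe(\tau;w)$ is vacuous.

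Combining the two pieces and simplifying yields the claimed ratio. The principal technical obstacle I anticipate is the tightness step: the ``make every constraint tight'' reasoning uses non-negativity of the $\alpha_k$'s, so a fully general proof would need either a restriction to $w\ge 0$ or a short LP-duality argument, together with the observation that entries of $U$ indexed by subsets $T\supseteq\{i,j\}$ or $T\cap\{i,j\}=\emptyset$ do not enter $D_{i,j}$ at all and can thus be set arbitrarily in $[0,1]$ without affecting either the objective or the constraints. A secondary point to check carefully is the exact form of the norm being used on the coefficient vector, since the shape of the denominator $\sum_k{n-2\choose k-1}\alpha_k^2$ in the lemma dictates how the two coordinates associated to each $S$ should be combined when computing $C$.
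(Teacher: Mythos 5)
Your proposal follows essentially the same route as the paper's proof: the paper also reduces the inner minimization to a point-to-hyperplane distance (it writes $D_{i,j}(U;w)=a^TU$, sets $A=aa^T$, and derives $\min\norm{\Uhat-U}=\sqrt{|U^TAU|/(a^Ta)}=|a^TU|/\norm{a}$, which is exactly your distance formula), and it likewise performs the outer minimization by making every constraint $\Delta_{i,j}^{(k)}(U)\ge\tau$ tight. Two of your caveats are points the paper passes over silently. It asserts ``clearly'' that the minimum occurs at $\Delta_{i,j}^{(k)}(U)=\tau$ for all $k$, which indeed requires $w(k)+w(k+1)\ge 0$: for suitably mixed-sign weights one can drive $D_{i,j}(U;w)$ to zero while staying inside $\Utau$, making the margin $0$, so the restriction is necessary for the lemma to hold at all (no LP-duality argument can rescue the fully general statement). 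And the paper never exhibits a feasible minimizer, whereas your explicit $U$ taking values $\frac{1\pm\tau}{2}$ and $\frac12$ settles feasibility for $\tau\le 1$ and confirms that the coordinates with $T\supseteq\{i,j\}$ or $T\cap\{i,j\}=\emptyset$ are irrelevant.

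Your ``secondary point'' about the norm is in fact the one place where a careful execution of your plan diverges from the statement as printed. The coefficient vector $a$ has \emph{two} nonzero coordinates for each $S\subseteq N\setminus\{i,j\}$, namely $a[S\cup i]=\alpha_{|S|+1}$ and $a[S\cup j]=-\alpha_{|S|+1}$, so
\begin{align*}
a^Ta \;=\; 2\sum_{k=1}^{n-1}\binom{n-2}{k-1}\bigl(w(k)+w(k+1)\bigr)^2,
\end{align*}
whereas the paper's expansion writes $a^Ta=\sum_{S\subseteq N\setminus\{i,j\}}f^2(S)$, dropping the $j$-side contributions. Carrying your computation through honestly therefore yields the right-hand side of (\ref{eq:safetymargin}) divided by $\sqrt{2}$; as a sanity check, for $n=2$ the normalization forces $w(1)+w(2)=2$, and the distance from the tight $U$ to the hyperplane is $\tau/\sqrt{2}$, not the $\tau$ the lemma's formula gives. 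Since this factor is uniform over all weight functions $w$, none of the comparative conclusions downstream change: the ranking in Theorem \ref{thm:safe-loo-shap} and the optimality of the Banzhaf value in Theorem \ref{thm:banzhaf-robustness} (whose Cauchy--Schwarz step operates only on the ratio appearing in (\ref{eq:safetymargin})) are unaffected, but each of the absolute margins stated there should be scaled by $1/\sqrt{2}$.
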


\begin{proof}
For any $\tau>0$ and any pair of $(i, j)$, we denote 
\begin{align}
\safe_{i, j}(\tau; w) = \min_{U \in \Utau} \min_{\Uhat \in \{\Uhat: D_{i, j}(U; w) D_{i, j}(\Uhat; w) \le 0\}} \norm{\Uhat - U} \nonumber
\end{align}
as the minimum amount of noise that is required to reverse the ranking of $(i, j)$ among all utility functions that $\tau$-distinguish $(i, j)$. 
Thus, the safety margin of the semivalue $w$ is 
\begin{align}
    \safe(\tau; w) = \min_{i \ne j} \safe_{i, j}(\tau; w) \nonumber
\end{align}

Note that $D_{i, j}(U; w)$ can be written as a dot product of $U$ and a column vector $a \in \R^{2^n}$
\begin{align}
    D_{i, j}(U; w) = a^T U \nonumber
\end{align}
where each entry of $a$ corresponds to a subset $S \subseteq N$. We use $a[\cdot]$ to denote the value of $a$'s entry corresponds to $S$.  
For all $S \subseteq N\setminus \{i, j\}$, $a[S \cup i] = w(|S|+1) + w(|S|+2)$ and $a[S \cup j] = -(w(|S|+1) + w(|S|+2))$, and for all other subsets $a[S]=0$. 
Let the perturbation $x = \Uhat - U$ and matrix $A = a a^T$. 
\begin{align*}
    D_{i, j}(U; w) D_{i, j}(\Uhat; w) 
    &= (a^T U) (a^T \Uhat) \\
    &= (a^T U)^T (a^T \Uhat) \\ 
    &= U^T a a^T \Uhat \\
    &= U^T A \Uhat \\
    &= U^T A (U + x)
\end{align*}

Thus, if $D_{i, j}(U; w) D_{i, j}(\Uhat; w) \le 0$, the size of the perturbation $x$ must be at least
\begin{align}
    \norm{x} 
    &\ge \frac{|U^T A U|}{\norm{U^T A}} \nonumber \\
    &= \frac{|U^T A U|}{\sqrt{U^T A A U}} \nonumber \\
    &= \frac{|U^T A U|}{ \sqrt{a^T a} \sqrt{|U^T A U|} } \label{eq:temp} \\
    &= \sqrt{ \frac{|U^T A U|}{a^T a} } \nonumber
\end{align}
where (\ref{eq:temp}) is because 
$AA = (a a^T)(a a^T) = a (a^Ta) a^T = (a^Ta) a a^T = (a^Ta) A$. 
This lower bound is achievable when we set $x$ on the direction of $U^T A$. 
Therefore, we have 
\begin{align*}
\safe_{i, j}(\tau; w) = \min_{U \in \Utau} \sqrt{ \frac{|U^T A U|}{a^T a} }
\end{align*}

To make the notations less cumbersome, denote $f(S) = w(|S|+1) + w(|S|+2)$, and $g(S) = U(S\cup i) - U(S \cup j)$. By expanding the expression, we have 
\begin{align}
    \frac{|U^T A U|}{a^T a}
    &= \frac{\left|
    \sum_{S_1 \subseteq N \setminus \{i, j\} }
    \sum_{S_2 \subseteq N \setminus \{i, j\} }
    f(S_1)f(S_2)g(S_1)g(S_2)\right|
    }{ \sum_{S \subseteq N \setminus \{i, j\} } f^2(S) }\nonumber \\
    &= \frac{ \left| \left(
    \sum_{S_1 \subseteq N \setminus \{i, j\} }
    f(S_1)g(S_1) \right)
    \left(
    \sum_{S_2 \subseteq N \setminus \{i, j\} }
    f(S_2)g(S_2)\right) \right|
    }{ \sum_{S \subseteq N \setminus \{i, j\} } f^2(S) }\nonumber \\
    &= \frac{
    \left(
    \sum_{S \subseteq N \setminus \{i, j\} } f(S)g(S)
    \right)^2
    }{\sum_{S \subseteq N \setminus \{i, j\} } f^2(S)} \nonumber \\
    &= \frac{ 
    \left(
    \sum_{k=1}^{n-1} \left(w(k)+w(k+1)\right) 
    \sum_{S \subseteq N \setminus \{i, j\}, |S|=k-1}
    \left( U(S\cup i) - U(S\cup j) \right)
    \right)^2
    }{ 
    \sum_{k=1}^{n-1} {n-2 \choose k-1} \left(w(k)+w(k+1)\right)^2
    } \nonumber \\
    &= \frac{ 
    \left(
    \sum_{k=1}^{n-1} {n-2 \choose k-1} \left(w(k)+w(k+1)\right) {n-2 \choose k-1}^{-1}
    \sum_{S \subseteq N \setminus \{i, j\}, |S|=k-1}
    \left( U(S\cup i) - U(S\cup j) \right)
    \right)^2
    }{ 
    \sum_{k=1}^{n-1} {n-2 \choose k-1} \left(w(k)+w(k+1)\right)^2
    } \label{eq:lowerbound}
\end{align}

Clearly, the minimum of (\ref{eq:lowerbound}) is achieved when 
\begin{align*}
    {n-2 \choose k-1}^{-1}
    \sum_{S \subseteq N \setminus \{i, j\}, |S|=k-1}
    \left( U(S\cup i) - U(S\cup j) \right) = \tau
\end{align*}
for all $k$, i.e., 
$\safe_{i, j}(\tau; w) = 
\tau
\sqrt{
\frac{ 
    \left(
    \sum_{k=1}^{n-1} {n-2 \choose k-1} \left(w(k)+w(k+1)\right)
    \right)^2
    }{ 
    \sum_{k=1}^{n-1} {n-2 \choose k-1} \left(w(k)+w(k+1)\right)^2
    }
}$. 
This holds for every pair of data points $(i, j)$, which leads to our conclusion where $\safe(\tau; w) = 
\tau
\sqrt{
\frac{ 
    \left(
    \sum_{k=1}^{n-1} {n-2 \choose k-1} \left(w(k)+w(k+1)\right)
    \right)^2
    }{ 
    \sum_{k=1}^{n-1} {n-2 \choose k-1} \left(w(k)+w(k+1)\right)^2
    }
}$. 
\end{proof}

\paragraph{The Safety Margin for the LOO error and the Shapley value. }

\begin{customthm}{\ref{thm:safe-loo-shap}}[restated]
For any $\tau>0$, Leave-one-out error ($w_{\loo}(k) = n \iden[k=n]$) achieves $\safe(\tau; w_{\loo})=\tau$, and Shapley value ($w_{\shap}(k) = {n-1 \choose k-1}^{-1}$) achieves $\safe(\tau; w_{\shap}) = \tau \frac{n-1}{\sqrt{\sum_{k=1}^{n-1} {n-2 \choose k-1}^{-1} }}$. 
\end{customthm}
\begin{proof}
By plugging in $w_{\loo}(k) = n \iden[k=n]$ to (\ref{eq:safetymargin}), we have $\safe(\tau; w_{\loo})=\tau$. 
By plugging in $w_{\shap}(k) = {n-1 \choose k-1}^{-1}$ to (\ref{eq:safetymargin}), we have $\safe(\tau; w_{\shap}) = \tau \frac{n-1}{\sqrt{\sum_{k=1}^{n-1} {n-2 \choose k-1}^{-1} }}$. 
\end{proof}

\paragraph{Banzhaf Value Achieves the Largest Safety Margin.}

\begin{customthm}{\ref{thm:banzhaf-robustness}}[restated]
For any $\tau>0$, the Banzhaf value ($w(k)=\frac{n}{2^{n-1}}$) achieves the largest safety margin $\safe(\tau; w)=\tau 2^{n/2-1}$ among all semivalues. 
\end{customthm}
\begin{proof}
By Lemma \ref{lemma:safe}, we want to find the optimal semivalue weight function $w$ that maximizes 
$\frac{ 
    \left(
    \sum_{k=1}^{n-1} {n-2 \choose k-1} \left(w(k)+w(k+1)\right)
    \right)^2
    }{ 
    \sum_{k=1}^{n-1} {n-2 \choose k-1} \left(w(k)+w(k+1)\right)^2
}$. Notice that 
\begin{align}
    \frac{ 
    \left(
    \sum_{k=1}^{n-1} {n-2 \choose k-1} \left(w(k)+w(k+1)\right)
    \right)^2
    }{ 
    \sum_{k=1}^{n-1} {n-2 \choose k-1} \left(w(k)+w(k+1)\right)^2
    }
    &= \frac{ 
    \left(
    \sum_{k=1}^{n-1} \sqrt{{n-2 \choose k-1}}\sqrt{{n-2 \choose k-1}} \left(w(k)+w(k+1)\right)
    \right)^2
    }{ 
    \sum_{k=1}^{n-1} {n-2 \choose k-1} \left(w(k)+w(k+1)\right)^2
    } \nonumber \\
    &\le \frac{ 
    \sum_{k=1}^{n-1} {n-2 \choose k-1}
    \sum_{k=1}^{n-1} {n-2 \choose k-1} \left(w(k)+w(k+1)\right)^2
    }{ 
    \sum_{k=1}^{n-1} {n-2 \choose k-1} \left(w(k)+w(k+1)\right)^2
    } \label{eq:cauchy} \\
    &= \sum_{k=1}^{n-1} {n-2 \choose k-1} \nonumber \\
    &= 2^{n-2} \nonumber
\end{align}
where (\ref{eq:cauchy}) is due to Cauchy-Schwarz inequality. 

Note that this upper bound is achievable whenever $w(|S|+1) + w(|S|+2)$ is a constant due to the equality condition of Cauchy-Schwarz, which the weight function of Banzhaf value clearly satisfies. 
Therefore, the Banzhaf value achieves the largest $\safe(\tau; w)$ among all possible semivalues. 
\end{proof}

\textbf{Further Discussion of Theorem \ref{thm:banzhaf-robustness}.}
The safety margin is the largest noise in utility that can be tolerated such that the ranking of \emph{exact} semivalues calculated from clean utility match with that of \emph{exact} semivalues calculated from a noisy utility. However, calculating exact semivalues for a given utility function is NP-hard in general, and in practice, one often resorts to evaluating the utility function at limited sampled subsets and then using these limited samples to approximate semivalues. Hence, a natural question to ask is whether we can characterize the maximally-tolerable utility noise on the \emph{limited} sampled subsets such that the ranking of \emph{approximate} semivalues calculated from the clean utility samples align with that of \emph{approximate} semivalues calculated from the noisy samples. However, one issue with this type of characterization is that the ``safety margin'' in this case depends on \emph{both} the expression of the semivalue (i.e., $w$ that parameterizes the semivalue), as well as the underlying estimation algorithm for that semivalue. Since different semivalues have different estimation algorithms, such a result for different semivalues is not really comparable. 
On the other hand, our result in Theorem~\ref{thm:banzhaf-robustness} lifts the dependence on the underlying estimation algorithm. As a consequence, it allows one to compare the robustness between different semivalues.

We also note that, the Banzhaf value is \emph{not} the unique semivalue that achieves the maximal robustness in the setup of Theorem \ref{thm:banzhaf-robustness}. Any semivalues with a weight function $w$ s.t. $w(k)+w(k+1)$ is a constant also achieve the same safety margin. 
Such a semivalue must have $w(1) = w(3) = w(5) = \ldots$ and $w(2) = w(4) = w(6) = \ldots$. However, there's no natural explanation for why the semivalue should weigh odd and even cardinalities differently. Hence, the Banzhaf value is the only ``reasonable'' semivalue with maximal robustness. 

\subsubsection{Sample Complexity of Simple MC and MSR Estimator.}

\begin{customthm}{\ref{thm:mc}}[restated]
$\widehat \phi_\mc$ is an $(\eps, \delta)$-approximation to the exact Banzhaf value in $\ell_2$-norm with $O(\frac{n^2}{\eps^2}\log(\frac{n}{\delta}))$ calls of $U(\cdot)$, and in $\ell_\infty$-norm with $O(\frac{n}{\eps^2}\log(\frac{n}{\delta}))$ calls of $U(\cdot)$. 
\end{customthm}

\begin{proof}
Let $\mS = \{S_1, \ldots, S_m\}$ be the samples used for computing $\widehat \phi_\mc(i)$. Since the marginal contribution $U(S \cup i)-U(S)$ is always bounded between $[-1, 1]$, by Hoeffding, we have
\begin{align*}
    \Pr \left[ \left|\widehat \phi_\mc(i) - \phi(i)\right| \ge \eps \right] 
    \le 2 \exp \left( -2m \eps^2 \right)
\end{align*}
which holds for every $i \in N$. 

Thus, with union bound, for $\ell_2$-norm we have 
\begin{align*}
    \Pr_{\widehat \phi_\mc} \left[ \norm{\widehat \phi_\mc - \phi}_2 \ge \eps \right] &= 1 - \Pr_{\widehat \phi_\mc} \left[ \norm{\widehat \phi_\mc - \phi}_2^2 \le \eps^2 \right] \\
    &\le 1 - \Pr_{\widehat \phi_\mc} \left[ \cap_i \left|\widehat \phi_\mc(i) - \phi(i)\right| \le \eps / \sqrt{n} \right] \\
    &= \Pr_{\widehat \phi_\mc} \left[ \cup_i \left|\widehat \phi_\mc(i) - \phi(i)\right| \ge \eps / \sqrt{n} \right] \\
    &\le \sum_{i=1}^n \Pr_{\widehat \phi_\mc} \left[ \left|\widehat \phi_\mc(i) - \phi(i)\right| \ge \eps / \sqrt{n} \right] \\
    &\le 2 n \exp \left( -2m \eps^2 / n \right)
\end{align*}
By setting $2n \exp \left( -2m \eps^2 / n \right) \le \delta$, we get $m \ge \frac{n}{2\eps^2}\log\left(\frac{2n}{\delta}\right) = O\left(\frac{n}{\eps^2}\log(\frac{n}{\delta})\right)$. However, this $m$ only corresponds to the number of samples used to estimate a single $\phi(i)$, so the total number of samples required is $O\left(\frac{n^2}{\eps^2}\log(\frac{n}{\delta})\right)$. 

For $\ell_\infty$-norm we have 
\begin{align*}
    \Pr_{\widehat \phi_\mc} \left[ \norm{\widehat \phi_\mc - \phi}_\infty \ge \eps \right] 
    &= \Pr_{\widehat \phi_\mc} \left[ \cup_i \left|\widehat \phi_\mc(i) - \phi(i)\right| \ge \eps \right] \\
    &\le \sum_{i=1}^n \Pr_{\widehat \phi_\mc} \left[ \left|\widehat \phi_\mc(i) - \phi(i)\right| \ge \eps \right] \\
    &\le 2 n \exp \left( -2m \eps^2 \right)
\end{align*}
By setting $2n \exp \left( -2m \eps^2 \right) \le \delta$, we get $m \ge \frac{1}{2\eps^2}\log\left(\frac{2n}{\delta}\right) = O\left(\frac{1}{\eps^2}\log(\frac{n}{\delta})\right)$. However, this $m$ only corresponds to the number of samples used to estimate a single $\phi(i)$, so the total number of samples required is $O\left(\frac{n}{\eps^2}\log(\frac{n}{\delta})\right)$. 
\end{proof}




\begin{customthm}{\ref{thm:gt}}[restated]
$\widehat \phi_\gt$ is an $(\eps, \delta)$-approximation to the exact Banzhaf value in $\ell_2$-norm with $O\left(\frac{n}{\eps^2}\log(\frac{n}{\delta})\right)$ calls of $U(\cdot)$, and in $\ell_\infty$-norm with $O\left(\frac{1}{\eps^2}\log(\frac{n}{\delta})\right)$ calls of $U(\cdot)$. 
\end{customthm}
\begin{proof}
Since $\mS = \{ S_1, \ldots, S_m \}$ each i.i.d. drawn from $\unif(2^{N})$, it is easy to see that the size of sampled subsets that include data point $i$ follows binomial distribution $|\Sowni| \sim \bin(m, 0.5)$, and $|\Sowni| = m - |\Snotowni|$. 

We first define an alternative estimator 
\begin{align*}
    \widetilde{\phi}(i) = 
    \frac{1}{m/2} \sum_{S \in \Sowni} U(S) - \frac{1}{m/2} \sum_{S \in \Snotowni} U(S)
\end{align*}
which is independent of $|\Sowni|$ and $|\Snotowni|$. 
When both $|\Sowni|$ and $|\Snotowni| > 0$, we have
\begin{align}
    \left| \widehat \phi(i) - \widetilde{\phi}(i) \right| 
    &= \left| \left( \frac{1}{|\Sowni|} - \frac{1}{m/2} \right) \sum_{S \in \Sowni} U(S) - \left( \frac{1}{|\Snotowni|} - \frac{1}{m/2} \right) \sum_{S \in \Snotowni} U(S)
    \right| \nonumber \\
    &\le \left| 1 - \frac{2|\Sowni|}{m} \right| + \left| 1 - \frac{2|\Snotowni|}{m} \right| \label{eq:Ubound} \\
    &= 2 \left| 1 - \frac{2|\Sowni|}{m} \right| \label{eq:merge} \\
    &= \frac{4}{m} \left| |\Sowni| - \frac{m}{2} \right| \nonumber
\end{align}
where (\ref{eq:Ubound}) is due to $U(S) \le 1$ and (\ref{eq:merge}) is due to $|\Sowni| = m - |\Snotowni|$. 
When one of $|\Sowni|$ and $|\Snotowni|=0$, this upper bound also clearly holds. 

Since $|\Sowni| \sim \bin(m, 0.5)$, by Hoeffding inequality we have
\begin{align*}
    \Pr \left[ \left| |\Sowni| - \frac{m}{2} \right| \ge \Delta \right]
    \le 2 \exp \left(- \frac{2\Delta^2}{m}\right)
\end{align*}

Hence, with probability at least $1-2 \exp \left(- \frac{2\Delta^2}{m}\right)$, we have 
\begin{align*}
    \left| \widehat \phi(i) - \widetilde{\phi}(i) \right| 
    \le \frac{4\Delta}{m}
\end{align*}

\newcommand{\sign}{\mathrm{sign}}

Since 
\begin{align*}
    \widetilde{\phi}(i) 
    &= \frac{2}{m} \left( \sum_{S \in \Sowni} U(S) - \sum_{S \in \Snotowni} U(S) \right) \\
    &= \frac{1}{m} \sum_{S \in \mS} 2U(S) \sign(i, S)
\end{align*}
where $\sign(i, S) = 2\iden[i \in S]-1 \in \{\pm 1\}$. 
Thus, $2U(S) \sign(i, S) \in [-2, 2]$ and we can apply Hoeffding to bound the tail of $|\widetilde{\phi}(i)-\phi(i)|$:
\begin{align*}
    \Pr \left[ |\widetilde{\phi}(i)-\phi(i)| \ge t \right]
    \le 
    2 \exp \left( - \frac{mt^2}{8} \right)
\end{align*}

Now we bound $|\widehat \phi(i) - \phi(i)|$ as follows:
\begin{align*}
    &\Pr \left[ |\widehat \phi(i) - \phi(i)| \ge \eps \right] \\
    &= \Pr \left[ |\widehat \phi(i) - \phi(i)| \ge \eps | \left| |\Sowni| - \frac{m}{2} \right| \le \Delta \right] \Pr \left[ \left| |\Sowni| - \frac{m}{2} \right| \le \Delta \right] \\
    &~~~~+ \Pr \left[ |\widehat \phi(i) - \phi(i)| \ge \eps | \left| |\Sowni| - \frac{m}{2} \right| > \Delta \right] \Pr \left[ \left| |\Sowni| - \frac{m}{2} \right| > \Delta \right]  \\
    &\le \Pr \left[ |\widehat \phi(i) - \phi(i)| \ge \eps | \left| |\Sowni| - \frac{m}{2} \right| \le \Delta \right] 
    + 2 \exp \left(- \frac{2\Delta^2}{m}\right) \\
    &\le \Pr \left[ |\widetilde{\phi}(i) - \phi(i)| \ge \eps-\frac{4\Delta}{m} | \left| |\Sowni| - \frac{m}{2} \right| \le \Delta \right] 
    + 2 \exp \left(- \frac{2\Delta^2}{m}\right) \\
    &\le \frac{ \Pr \left[ |\widetilde{\phi}(i) - \phi(i)| \ge \eps-\frac{4\Delta}{m} \right] }{ 1 - 2 \exp \left(- \frac{2\Delta^2}{m}\right) } + 2 \exp \left(- \frac{2\Delta^2}{m}\right) \\
    &\le \frac{ 2 \exp \left( -\frac{1}{8} m (\eps - \frac{4\Delta}{m})^2 \right) }{ 1 - 2 \exp \left(- \frac{2\Delta^2}{m}\right) } + 2 \exp \left(- \frac{2\Delta^2}{m}\right) \\
    &\le 3 \exp \left( -\frac{1}{8} m \left(\eps - \frac{4\Delta}{m}\right)^2 \right) 
    + 2 \exp \left(- \frac{2\Delta^2}{m}\right)
\end{align*}
where the last inequality holds whenever $1 - 2 \exp \left(- \frac{2\Delta^2}{m}\right) \ge \frac{2}{3}$. 

We can then optimize this bound by setting $-\frac{1}{8} m \left(\eps - \frac{4\Delta}{m}\right)^2 = - \frac{2\Delta^2}{m}$, where we obtain $\Delta = \frac{m\eps}{8}$, and the bound becomes 
\begin{align*}
    \Pr \left[ |\widehat \phi(i) - \phi(i)| \ge \eps \right] 
    &\le 
    3 \exp \left( -\frac{1}{8} m \left(\eps - \frac{4\Delta}{m}\right)^2 \right) 
    + 2 \exp \left(- \frac{2\Delta^2}{m}\right) \\
    &= 5 \exp \left( - \frac{m\eps^2}{32} \right)
\end{align*}

By union bound, we have 
\begin{align*}
    \Pr_{\widehat \phi_\gt} \left[ \norm{\widehat \phi_\gt - \phi}_2 \ge \eps \right] \le 5n\exp \left( - \frac{m\eps^2}{32n} \right)
\end{align*}
and 
\begin{align*}
    \Pr_{\widehat \phi_\gt} \left[ \norm{\widehat \phi_\gt - \phi}_\infty \ge \eps \right] \le 5n\exp \left( - \frac{m\eps^2}{32} \right)
\end{align*}
By setting $\delta \le 5n\exp \left( - \frac{m\eps^2}{32n} \right)$, we obtain the sample complexity $O\left( \frac{n}{\eps^2} \log(\frac{n}{\delta}) \right)$ for $\ell_2$ norm, and by setting $\delta \le 5n\exp \left( - \frac{m\eps^2}{32} \right)$, we obtain the sample complexity $O\left( \frac{1}{\eps^2} \log(\frac{n}{\delta}) \right)$ for $\ell_\infty$-norm. 
\end{proof}

\subsubsection{Lower Bound for the Banzhaf Value Estimator}
\label{appendix:lower-bound}

\begin{customthm}{\ref{thm:lowerbound}}[restated]
Every (possibly randomized) Banzhaf value estimation algorithm that achieves $(\eps, \delta)$-approximation in $\ell_\infty$-norm for constant $\delta \in (0, 1/2)$ has sample complexity at least $\Omega(\frac{1}{\eps})$. 
\end{customthm}
\begin{proof}
To show the lower bound of the sample complexity for Banzhaf value estimation, we use \emph{Yao's minimax principle}: to show a lower bound on a randomized algorithm, it suffices to define a distribution on some family of instances and show a lower bound for deterministic algorithms on this distribution. 

Fix $\eps \in (0, 1)$. We define the collection of instance $\I_0$ as all utility functions $U$ such that $U(S)=U(S \cup n)$ for all $S \subseteq [n-1]$.  
We define the collection of instance $\I_1$ as all utility functions $U$ s.t. there are exactly $2^{n-1} (2\eps)$ of the $S \subseteq [n-1]$ has $U(S)=0, U(S \cup n)=1$, and for all other $S$ we have $U(S)=U(S \cup n)$. 
We define a distribution over $\I_0 \cup \I_1$ by first randomly picking $\I_0$ or $\I_1$ with probability 1/2, and then picking a utility function from the selected instance class uniformly at random. 

For any $U \in \I_0$, we have $\phi(n; U)=0$, and for any $U \in \I_1$, we have $\phi(n; U) = \frac{2^{n-1} (2\eps)}{2^{n-1}} = 2\eps$. 
Thus, in order to achieve $\norm{\widehat\phi - \phi}_\infty < \eps$, the estimator must be able to distinguish between whether the utility function is from $\I_0$ or $\I_1$. 
For this, it needs to identify at least one $S \subseteq [n-1]$ s.t. $U(S)=0, U(S \cup n)=1$. 
However, since those $S$ are chosen uniformly at random, no matter what sampling strategy the algorithm has, each query succeeds with probability at most $2^{n-1} (2\eps) / 2^{n-1} = 2\eps$. 
Thus, for $m$ queries, the total failure probability is at least $(1-2m\eps)/2$. 
To make the failure probability at most $\delta$, we need number of samples $m$ s.t. $(1-2m\eps)/2 \le \delta$, which leads to the lower bound $m \ge \frac{1-2\delta}{2\eps}$. Thus, we have $m \in \Omega(\frac{1}{\eps})$. 
\end{proof}

\newpage

\subsection{MSR Estimator does not Extend to the Shapley Value and Other Known Semivalues} 
\label{appendix:extend-msr}

In this section, we provide proof and more discussion about why the existence of the MSR estimator is a unique advantage of Banzhaf value. 

\paragraph{Numerical Instability. }
It is easy to see that semivalue can be written as the expectation of \emph{weighted} marginal contribution
\begin{align}
\phi_{\semi}\left(i ; U, N, w\right) 
&:= \frac{1}{n} \sum_{k=1}^{n} w(k) \sum_{S \subseteq N \setminus i, |S|=k-1} \left[ U(S \cup i) - U(S) \right] \nonumber \\
&= \E_{S \sim \unif(2^{N \setminus i})} \left[
\frac{2^{n-1}w(|S|+1)}{n} \left(  U(S \cup i) - U(S) \right) \right] \nonumber  \\
&= \E_{S \sim \unif(2^{N \setminus i})} \left[ \frac{2^{n-1}w(|S|+1)}{n}  U(S \cup i) \right] - \E_{S \sim \unif(2^{N \setminus i})} \left[\frac{2^{n-1}w(|S|+1)}{n}U(S)\right] \nonumber  \\
&= \E_{S \sim \unif( \{S \in 2^{N}: S \own i \} ) } \left[ \frac{2^{n-1}w(|S|)}{n}  U(S) \right] - \E_{S \sim \unif(2^{N \setminus i})} \left[\frac{2^{n-1}w(|S|+1)}{n}U(S)\right] \label{eq:conditional}
\end{align}

Hence, a straightforward way to design MSR estimator for arbitrary semivalue is to to sample $\mS = \{ S_1, \ldots, S_m \}$ each i.i.d. drawn from $\unif(2^{N})$, and estimate $\phi(i)$ as
\begin{align*}
    \widehat \phi_\gt(i) = \frac{1}{|\Sowni|} \sum_{S \in \Sowni} \frac{2^{n-1}w(|S|)}{n} U(S) - \frac{1}{|\Snotowni|} \sum_{S \in \Snotowni} \frac{2^{n-1}w(|S|+1)}{n} U(S)
\end{align*}

For Shapley value, $w(|S|) = {n-1 \choose |S|-1}^{-1}$, which makes the MSR estimator for Shapley value becomes 
\begin{align*}
    \widehat \phi_\gt(i) = \frac{1}{|\Sowni|} \sum_{S \in \Sowni} \frac{1}{n} {n-1 \choose |S|-1}^{-1} U(S) - \frac{1}{|\Snotowni|} \sum_{S \in \Snotowni} \frac{1}{n} {n-1 \choose |S|}^{-1} U(S)
\end{align*}
which is numerically unstable for large $n$ due to the combinatorial coefficients. 

\paragraph{Impossible to construct a special sampling distribution for MSR.} 
The MSR estimator for Banzhaf value samples from $\unif(2^N)$ since for a random set $S \sim \unif(2^N)$, we have its conditional distribution $S | S \notown i \sim \unif(2^{N \setminus i})$ and $S | S \own i \sim \unif( \{S \in 2^{N}: S \own i \} )$, which exactly matches the two distributions the expectation in (\ref{eq:conditional}) is taken over for Banzhaf value. 
For a semivalue with weight function $w$, can we design a similar distribution $\D$ over $2^N$ so that 
$
\phi_{\semi}\left(i ; U, N, w\right) = \E_{S \sim \D|\D \own i} \left[ U(S) \right] - \E_{S \sim \D|\D \notown i} \left[ U(S) \right]
$? The answer is unfortunately negative. 
Note that in order to write $\phi_{\semi}\left(i ; U, N, w\right)$ in this way, we must have 
\begin{align*}
    &\Pr[ \D =  S | i \in S ] = \frac{1}{n} w(|S|+1) \\
    &\Pr[ \D =  S | i \notin S ] = \frac{1}{n} w(|S|) 
\end{align*}
for any $i \in N$. 
Now, we consider a particular $S$ s.t. $i \notin S, j \in S$. Denote $x = \Pr[i \notin \D]$, $y = \Pr[j \notin \D]$, and $k = |S|+1$. 
By Bayes theorem, we have
\begin{align*}
    \Pr[ \D = S | i \notin S, j \in S ]
    &= \frac{ \Pr[j \in \D | \D = S, i \notin \D] \Pr[\D = S|i \notin \D] }{ \Pr[j \in \D] } \\
    &= \frac{w(k-1)}{n(1-y)} \\
    &= \frac{ \Pr[i \notin \D | \D = S, j \in \D] \Pr[\D = S|j \in \D] }{ \Pr[i \notin \D] } \\
    &= \frac{w(k)}{nx}
\end{align*}
Thus we have $w(k-1) x = w(k)(1-y)$. 
Similarly, consider a $S'$ of the same size s.t. $i \in S, j \notin S$, we obtain $w(k-1) y = w(k)(1-x)$. 

Given 
\begin{align*}
    w(k-1) x &= w(k)(1-y) \\
    w(k-1) y &= w(k)(1-x)
\end{align*}
If $x=y$, then we have $x = \frac{w(k-1)}{w(k-1)+w(k)}$ which clearly depends on $k$ unless $w(1), w(2), \ldots, w(n)$ is a geometric series ($w(k-1)+w(k)$ cannot be 0 for all $k$, and $x$ can also not be 0). The only known semivalue-based data valuation method that satisfies this property is the Banzhaf value.

If $x \ne y$, then we have $w(k-1) (x-y) = w(k)(x-y)$, which clearly leads to $w(k-1)=w(k)$ where Banzhaf value is still the only choice.

\newpage

\subsection{Robustness of MSR Estimator}
\label{appendix:msr-robustness}

\textbf{Robustness of MSR Estimator Under Noisy Utility Function.} As discussed in Section~\ref{sec:utility_stochastic}, the utility function $U$ is re-defined as the expected model performance due to the stochasticity of the underlying learning algorithm. 
Hence, the actual estimator of the Banzhaf value that we build is based upon the noisy variant $\Uhat$: 
\begin{align}
    \tildephi_\gt(i) = \frac{1}{|\Sowni|} \sum_{S \in \Sowni} \Uhat(S) - \frac{1}{|\Snotowni|} \sum_{S \in \Snotowni} \Uhat(S).
\end{align}

Hence, it is interesting to understand the impact of noisy utility function evaluation on the sample complexity of the MSR estimator.


\begin{theorem}
When $\|U - \Uhat\|_2 \le \gamma$, $\tildephi_\gt$ is $(\eps + \frac{\gamma \sqrt{n}}{2^{n/2-1}}, \delta)$-approximation in $\ell_2$-norm with $O(\frac{n}{\eps^2} \log(\frac{n}{\delta}))$ calls, and $(\eps + \frac{\gamma}{2^{n/2-1}}, \delta)$-approximation in $\ell_\infty$-norm with $O(\frac{1}{\eps^2} \log(\frac{n}{\delta}))$ calls to $\Uhat$. 
\label{thm:gt-under-noisy-utility}
\end{theorem}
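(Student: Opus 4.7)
The plan is to split the total error via the triangle inequality into an \emph{estimation} error and a \emph{noise-propagation} error:
\begin{align*}
\|\tildephi_\gt - \phi_\banz(\cdot; U)\|
\;\le\; \|\tildephi_\gt - \phi_\banz(\cdot; \Uhat)\|
\;+\; \|\phi_\banz(\cdot; \Uhat) - \phi_\banz(\cdot; U)\|.
\end{align*}
Since $\Uhat$ is also $[0,1]$-valued (it is an accuracy observed on one training run), $\tildephi_\gt$ is exactly the MSR estimator of Theorem \ref{thm:gt} applied to the utility function $\Uhat$. So the first term is at most $\eps$ with probability $1-\delta$ using $O(n/\eps^2 \log(n/\delta))$ calls in $\ell_2$ and $O(1/\eps^2 \log(n/\delta))$ calls in $\ell_\infty$, by directly reapplying Theorem \ref{thm:gt}.

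The main work is to control the second, deterministic, term using only the global noise bound $\|U-\Uhat\|_2 \le \gamma$. By linearity of the Banzhaf value in the utility argument, I would write $\phi_\banz(i;U)-\phi_\banz(i;\Uhat) = \phi_\banz(i; U-\Uhat)$, and express this as an inner product
\begin{align*}
\phi_\banz(i; U-\Uhat) \;=\; \tfrac{1}{2^{n-1}} \,\langle a_i,\; U-\Uhat\rangle,
\end{align*}
where $a_i\in\{-1,+1\}^{2^n}$ equals $+1$ on subsets containing $i$ and $-1$ otherwise. Then $\|a_i\|_2=\sqrt{2^n}$, and Cauchy--Schwarz gives the coordinatewise bound
\begin{align*}
\bigl|\phi_\banz(i;U)-\phi_\banz(i;\Uhat)\bigr| \;\le\; \tfrac{\sqrt{2^n}}{2^{n-1}}\,\gamma \;=\; \tfrac{\gamma}{2^{n/2-1}}.
\end{align*}
This yields the additive $\ell_\infty$ error $\gamma/2^{n/2-1}$; aggregating over the $n$ coordinates yields the additive $\ell_2$ error $\gamma\sqrt{n}/2^{n/2-1}$. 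Combining with the estimation-error bound from the first step finishes the proof.

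The main conceptual point (and the only step that is not routine) is the Cauchy--Schwarz bound in the second term: it exploits the fact that the Banzhaf coefficients are uniform $\pm 1/2^{n-1}$, so $\|a_i\|_2$ scales as $2^{n/2}$ rather than $2^{n-1}$, yielding the exponentially small Lipschitz factor $1/2^{n/2-1}$. The remaining technical care is just checking that $\Uhat \in [0,1]$ so that the concentration arguments in the proof of Theorem \ref{thm:gt} apply verbatim with $\Uhat$ in place of $U$; the random-sample sizes $|\Sowni|,|\Snotowni|$ behave identically because the sampling distribution $\unif(2^N)$ and the sample sets do not depend on which utility we evaluate.
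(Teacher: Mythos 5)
Your proof is correct, but it takes a genuinely different route from the paper's. The paper decomposes at the level of \emph{estimators}: it compares $\tildephi_\gt$ with the clean-utility MSR estimator $\widehat\phi_\gt$ built on the \emph{same} samples, bounds $|\tildephi_\gt(i)-\widehat\phi_\gt(i)|$ by the empirical average of $|U(S)-\Uhat(S)|$ over the sampled subsets, notes $\E_S\left[|U(S)-\Uhat(S)|\right] = 2^{-n}\norm{U-\Uhat}_1 \le \gamma/2^{n/2}$, and then runs a second Hoeffding argument (with the same conditioning on $\left| |\Sowni| - m/2 \right| \le \Delta$ as in the proof of Theorem \ref{thm:gt}) to show this average concentrates, producing the bias $\gamma/2^{n/2-1}$; the remaining term $|\widehat\phi_\gt(i)-\phi(i)|$ is Theorem \ref{thm:gt}. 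You instead decompose at the level of \emph{exact values}: $\tildephi_\gt$ is precisely the MSR estimator targeting $\phi_\banz(\cdot;\Uhat)$, so Theorem \ref{thm:gt} applies verbatim (given $\Uhat \in [0,1]$, which the paper also implicitly assumes when invoking Hoeffding), and the leftover term $\phi_\banz(\cdot;U)-\phi_\banz(\cdot;\Uhat)$ is deterministic, handled by linearity plus Cauchy--Schwarz with the $\pm 1$ coefficient vector, yielding the same per-coordinate bias $\gamma/2^{n/2-1}$. Your route is cleaner: it requires no extra concentration argument for the noise and involves only one probabilistic event, and your deterministic step is exactly the Lipschitz property the paper proves separately in Appendix \ref{sec:l2-robustness} --- whose operator-norm form in fact gives the sharper $\ell_2$ bias $\gamma/2^{n/2-1}$ without the $\sqrt{n}$ factor, showing the theorem's stated $\ell_2$ error term is not tight. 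What the paper's sample-level decomposition buys is that it never needs the exact Banzhaf value of $\Uhat$ to be a meaningful target, which would matter if each evaluation of $\Uhat(S)$ drew fresh randomness rather than $\Uhat$ being a fixed perturbed function; under the theorem's stated hypothesis ($\Uhat$ a fixed function with $\norm{U-\Uhat}_2 \le \gamma$), the two readings coincide and your argument is sound.
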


\add{The theorem above shows that our MSR algorithm has the same sample complexity in the presence of noise in $\Uhat$, with a small extra irreducible error since typically $\gamma \propto \sqrt{2^n}$. }

Recall that
\begin{align*}
    \tildephi_\gt(i) = \frac{1}{|\Sowni|} \sum_{S \in \Sowni} \Uhat(S) - \frac{1}{|\Snotowni|} \sum_{S \in \Snotowni} \Uhat(S) 
\end{align*}

\begin{customthm}{\ref{thm:gt-under-noisy-utility}}
When $\|U - \Uhat\|_2 \le \gamma$, $\tildephi_\gt$ is $(\eps + \frac{\gamma \sqrt{n}}{2^{n/2-1}}, \delta)$-approximation in $\ell_2$-norm with $O(\frac{n}{\eps^2} \log(\frac{n}{\delta}))$ calls, and $(\eps + \frac{\gamma}{2^{n/2-1}}, \delta)$-approximation in $\ell_\infty$-norm with $O(\frac{1}{\eps^2} \log(\frac{n}{\delta}))$ calls to $\Uhat$. 
\end{customthm}
\begin{proof}

Note that for each $i \in N$, 
\begin{align*}
    \left| \tildephi(i) - \phi(i) \right|
    \le \left| \tildephi(i) - \widehat \phi(i) \right|
    + \left| \widehat \phi(i) - \phi(i) \right|
\end{align*}
From Theorem \ref{thm:gt}, we have 
\begin{align*}
    \Pr_{\widehat \phi} \left[\left| \widehat \phi(i) - \phi(i) \right| \ge \Delta \right]
    \le 5 \exp\left(-\frac{m}{32}\Delta^2\right)
\end{align*}
Now we bound $\left| \tildephi(i) - \widehat \phi(i) \right|$. 
\begin{align*}
    \left| \tildephi(i) - \widehat \phi(i) \right| 
    &= \left| \frac{1}{|\Sowni|} \sum_{S \in \Sowni} \left(U(S)-\Uhat(S)\right) - \frac{1}{|\Snotowni|} \sum_{S \in \Snotowni} \left(U(S)-\Uhat(S)\right) \right| \\
    &\le \frac{1}{|\Sowni|} \sum_{S \in \Sowni} \left|U(S)-\Uhat(S)\right| 
    + \frac{1}{|\Snotowni|} \sum_{S \in \Snotowni} \left|U(S)-\Uhat(S)\right|
\end{align*}

Given $\norm{U - \Uhat} \le \gamma$, we bound $\left| \tildephi(i) - \widehat \phi(i) \right|$ as follows: 
\begin{align}
    &\Pr \left[ \left| \tildephi(i) - \widehat \phi(i) \right| \ge \eps \right] \nonumber \\
    &\le \Pr \left[ \frac{1}{|\Sowni|} \sum_{S \in \Sowni} \left|U(S)-\Uhat(S)\right| 
    + \frac{1}{|\Snotowni|} \sum_{S \in \Snotowni} \left|U(S)-\Uhat(S)\right| \ge \eps \right] \nonumber \\
    &\le \Pr \left[ \frac{1}{|\Sowni|} \sum_{S \in \Sowni} \left|U(S)-\Uhat(S)\right| 
    + \frac{1}{|\Snotowni|} \sum_{S \in \Snotowni} \left|U(S)-\Uhat(S)\right| \ge \eps | \left| |\Sowni| - \frac{m}{2} \right| \le \Delta \right] + 2 \exp \left(- \frac{2\Delta^2}{m}\right) \nonumber \\
    &\le \frac{ 
    \Pr \left[ \frac{2}{m} \sum_{S \subseteq \mS} \left| U(S) - \Uhat(S) \right| \ge \eps - \frac{4\Delta}{m} \right]
    }{1 - 2 \exp \left(- \frac{2\Delta^2}{m}\right) } + 2 \exp \left(- \frac{2\Delta^2}{m}\right) 
    \label{eq:cont}
\end{align}

By $\norm{U - \Uhat} \le \gamma$, we have 
\begin{align*}
    \E[|U(S)-\widehat{U}(S)|] = 
    \frac{1}{2^n} \sum_{S \subseteq N} \left|U(S) - \Uhat(S)\right| = \frac{1}{2^n} \norm{U - \Uhat}_1 
    \le \frac{\sqrt{2^{n}}}{2^n} \norm{U - \Uhat}
    = \frac{\gamma}{2^{n/2}}
\end{align*}

Set $\eps' = \eps - \frac{\gamma}{2^{n/2-1}}$.
Thus 
\begin{align*}
    (\ref{eq:cont}) &= 
    \frac{ 
    \Pr \left[ \frac{2}{m} \sum_{S \subseteq \mS} \left| U(S) - \Uhat(S) \right| - \frac{\gamma}{2^{n/2-1}} \ge \eps' - \frac{4\Delta}{m} \right]
    }{1 - 2 \exp \left(- \frac{2\Delta^2}{m}\right) } + 2 \exp \left(- \frac{2\Delta^2}{m}\right)  \\
    &\le \frac{ 
    \Pr \left[ \frac{2}{m} \sum_{S \subseteq \mS} \left| U(S) - \Uhat(S) \right| - 2 \E[|U(S)-\widehat{U}(S)|] \ge \eps' - \frac{4\Delta}{m} \right]
    }{1 - 2 \exp \left(- \frac{2\Delta^2}{m}\right) } + 2 \exp \left(- \frac{2\Delta^2}{m}\right)  \\
    &\le \frac{ 
    \Pr \left[ \left| \frac{2}{m} \sum_{S \subseteq \mS} \left| U(S) - \Uhat(S) \right| - \frac{\gamma}{2^{n/2-1}} \right| \ge \eps' - \frac{4\Delta}{m} \right]
    }{1 - 2 \exp \left(- \frac{2\Delta^2}{m}\right) } + 2 \exp \left(- \frac{2\Delta^2}{m}\right) \\
    &\le \frac{ 
    2 \exp \left( -\frac{1}{8} m \left(\eps' - \frac{4\Delta}{m}\right)^2 \right)
    }{1 - 2 \exp \left(- \frac{2\Delta^2}{m}\right) } + 2 \exp \left(- \frac{2\Delta^2}{m}\right) \\
    &\le 5 \exp \left(-\frac{m}{32} (\eps')^2\right)
\end{align*}

Thus, we have 
\begin{align*}
    \Pr_{\widehat \phi} \left[\left| \widehat \phi(i) - \tildephi(i) \right| \ge \eps + \frac{\gamma}{2^{n/2-1}} \right]
    \le 5 \exp\left(-\frac{m}{32}\eps^2\right)
\end{align*}

Therefore, 
\begin{align*}
    \Pr \left[ \left| \tildephi(i) - \phi(i) \right| \ge \eps + \frac{\gamma}{2^{n/2-1}} \right] 
    &\le \Pr \left[ \left| \tildephi(i) - \widehat \phi(i) \right| + \left| \widehat \phi(i) - \phi(i) \right| \ge \eps + \frac{\gamma}{2^{n/2-1}} \right] \\
    &= 1 - \Pr \left[ \left| \tildephi(i) - \widehat \phi(i) \right| + \left| \widehat \phi(i) - \phi(i) \right| \le \eps + \frac{\gamma}{2^{n/2-1}} \right] \\
    &\le 1 - \Pr \left[ \left| \tildephi(i) - \widehat \phi(i) \right| \le \frac{\eps}{2} + \frac{\gamma}{2^{n/2-1}} \land 
    \left| \widehat \phi(i) - \phi(i) \right| \le \frac{\eps}{2} \right] \\
    &\le \Pr \left[ \left| \tildephi(i) - \widehat \phi(i) \right| \ge \frac{\eps}{2} + \frac{\gamma}{2^{n/2-1}} \right] + 
    \Pr \left[\left| \widehat \phi(i) - \phi(i) \right| \ge \frac{\eps}{2} \right] \\
    &= 10 \exp \left( - \frac{m}{128} \eps^2 \right)
\end{align*}

By union bound, we have 
\begin{align*}
    \Pr \left[ \norm{\tildephi_\gt - \phi}_2 \ge \eps + \frac{\gamma \sqrt{n}}{2^{n/2-1}} \right] \le 10 n \exp \left( - \frac{m\eps^2}{128n} \right)
\end{align*}
and 
\begin{align*}
    \Pr \left[ \norm{\tildephi_\gt - \phi}_\infty \ge \eps + \frac{\gamma}{2^{n/2-1}} \right] \le 10 n \exp \left( - \frac{m\eps^2}{128} \right)
\end{align*}

By setting $\delta \le  10 n \exp \left( - \frac{m\eps^2}{128n} \right)$, we obtain the sample complexity $O\left( \frac{n}{\eps^2} \log(\frac{n}{\delta}) \right)$ for $\ell_2$-norm, 
and by setting $\delta \le  10 n \exp \left( - \frac{m\eps^2}{128} \right)$, we obtain the sample complexity $O\left( \frac{1}{\eps^2} \log(\frac{n}{\delta}) \right)$ for $\ell_\infty$-norm. 
\end{proof}

\newpage 

\subsection{Stability of Banzhaf value in $\ell_2$-norm}
\label{sec:l2-robustness}

As mentioned previously, we can alternatively view a semivalue as a function $\phi: \R^{2^n} \rightarrow \R^n$ which takes a utility function $U \in \R^{2^n}$ as input, and output the values of data points $\phi(U) \in \R^n$. By taking this functional view, a natural robustness measure for semivalue $\phi(\cdot; w)$ is its Lipschitz constant $L$, which is defined as the smallest constant such that 
\begin{align*}
    \norm{\phi(U; w) - \phi(\widehat U; w)} 
    \le 
    L \norm{U - \widehat U}
\end{align*}
for all possible pairs of $U$ and $\widehat U$. 

\begin{theorem}
Among all semivalues, Banzhaf value ($w(k) = \frac{n}{2^{n-1}}$) achieves the smallest Lipschitz constant $L = \frac{1}{2^{n/2-1}}$. In other words, for the Banzhaf value we have
\begin{align*}
    \norm{\phi_{\banz}(U) - \phi_{\banz}(\widehat U; w)} \le \frac{1}{2^{n/2-1}} \norm{U - \widehat U}
\end{align*}
for all possible pairs of $U$ and $\widehat U$, and $L=\frac{1}{2^{n/2-1}}$ is the smallest constant among all semivalues. 
\end{theorem}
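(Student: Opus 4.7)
The plan is to view each semivalue $\phi(\cdot; w)$ as a linear map $U \mapsto M_w U$ from $\R^{2^n}$ to $\R^n$. Its best Lipschitz constant is then exactly the operator (spectral) norm $\|M_w\|_{\mathrm{op}}$, so the theorem amounts to minimizing this operator norm over the admissible weights. Reading off coefficients directly from the semivalue definition, the matrix has entries $(M_w)_{i,T} = w(|T|)/n$ when $i \in T$ and $(M_w)_{i,T} = -w(|T|+1)/n$ when $i \notin T$. Establishing the theorem then reduces to two separate tasks: (a) evaluating $\|M_w\|_{\mathrm{op}}$ at the Banzhaf weight, and (b) proving a matching lower bound for every other admissible $w$.

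For (a), with $w(k) = n/2^{n-1}$ the rows of $M_w$ reduce to $(1/2^{n-1})\chi_i$, where $\chi_i(T) := 2\ind[i \in T] - 1 \in \{\pm 1\}$. A routine four-way case split on the membership of $i,j$ in $T$ gives $\chi_i \cdot \chi_j = 2^n \delta_{ij}$, so $M_w M_w^T = (1/2^{n-2}) I$ and every singular value equals $1/2^{n/2-1}$. This delivers the Banzhaf Lipschitz bound in the statement.

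For (b), I would combine two standard facts: the top eigenvalue of a PSD matrix dominates its average $\mathrm{tr}(\cdot)/n$, and Cauchy--Schwarz against the semivalue normalization $\sum_k \binom{n-1}{k-1} w(k) = n$. A short diagonal computation, using $k\binom{n}{k} = n \binom{n-1}{k-1}$ twice (once directly, once after the reindexing $k \mapsto k+1$ in the ``$i \notin T$'' block), yields
\[
\mathrm{tr}(M_w M_w^T) \;=\; \frac{2}{n}\sum_{k=1}^{n}\binom{n-1}{k-1} w(k)^2.
\]
Cauchy--Schwarz then gives $n^2 \le 2^{n-1}\sum_k \binom{n-1}{k-1} w(k)^2$, hence $\mathrm{tr}(M_w M_w^T) \ge n/2^{n-2}$, and finally $\|M_w\|_{\mathrm{op}}^2 \ge \mathrm{tr}(M_w M_w^T)/n \ge 1/2^{n-2}$, i.e.\ $L(w) \ge 1/2^{n/2-1}$ for every semivalue.

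The equality cases line up exactly: Cauchy--Schwarz is tight only when $w(k)$ is constant in $k$, and the normalization then pins the constant to $n/2^{n-1}$; simultaneously, the trace-versus-top-eigenvalue bound is tight only when $M_w M_w^T \propto I$, which holds at the same Banzhaf weight. Hence Banzhaf is the unique semivalue attaining the optimal Lipschitz constant. The main obstacle I expect is the trace calculation: one has to track the ``$i \in T$'' and ``$i \notin T$'' contributions separately and then line them up under a common summation index via the $k \mapsto k+1$ shift, and it is precisely the resulting symmetry between $w(k)$ and $w(k+1)$ that forces Banzhaf to be extremal.
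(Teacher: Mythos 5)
Your proposal is correct, and while it shares the paper's overall framing---viewing the semivalue as a linear map $U \mapsto M_w U$, identifying the optimal Lipschitz constant with the operator norm $\sqrt{\lambda_{\max}(M_w M_w^T)}$, and applying Cauchy--Schwarz against the normalization $\sum_{k=1}^{n}\binom{n-1}{k-1}w(k)=n$---your handling of the spectral step is genuinely different. The paper computes the Gram matrix exactly: its diagonal entries are $d_1 = \frac{2}{n^2}\sum_{k=1}^{n}\binom{n-1}{k-1}w(k)^2$ and its off-diagonal entries are $d_2 = \frac{1}{n^2}\sum_{k=0}^{n-2}\binom{n-2}{k}\left(w(k+2)-w(k+1)\right)^2 \ge 0$, so that $M_w M_w^T = (d_1-d_2)\,I + d_2\,\mathbf{1}\mathbf{1}^T$ has top eigenvalue $d_1+(n-1)d_2$, which is then minimized term by term (Cauchy--Schwarz for $d_1$, nonnegativity for $d_2$, both tight exactly at constant $w$). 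You bypass the off-diagonal computation entirely: the bound $\lambda_{\max}\ge \mathrm{tr}(M_wM_w^T)/n$ together with your trace formula $\mathrm{tr}(M_wM_w^T)=\frac{2}{n}\sum_{k=1}^{n}\binom{n-1}{k-1}w(k)^2$ (which I verified, and which equals $n\,d_1$, consistent with the paper) yields $L(w)\ge 1/2^{n/2-1}$ for every admissible $w$, and your orthogonality computation $\chi_i\cdot\chi_j = 2^n\delta_{ij}$ correctly gives $M_wM_w^T = 2^{-(n-2)}I$ at the Banzhaf weight, matching the bound. Your route is shorter and more elementary---the $d_2$ computation is the fiddliest part of the paper's proof and you never need it---whereas the paper's exact diagonalization buys strictly more information, namely the closed-form Lipschitz constant $\sqrt{d_1+(n-1)d_2}$ of \emph{every} semivalue, not just the extremal one. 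Your equality-case analysis is also sound: attaining $L=1/2^{n/2-1}$ forces both of your inequalities to be tight, and tightness of Cauchy--Schwarz alone already forces $w$ constant, with the normalization pinning $w\equiv n/2^{n-1}$; so both proofs in fact deliver uniqueness of the minimizer (not claimed in the theorem statement), which is a notable contrast with the safety-margin result, where any $w$ with $w(k)+w(k+1)$ constant is also optimal.
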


\begin{proof}

Recall that a semivalue has the following representation
\begin{align*}
\phi_{\semi}\left(i ; U, w\right) 
&:= \frac{1}{n} \sum_{k=1}^{n} w(k) \sum_{S \subseteq N \setminus \{i\}, |S|=k-1} \left[ U(S \cup i) - U(S) \right]
\end{align*}

An interesting observation about semivalue is that the transformation $\phi: \R^{2^n} \rightarrow \R^n$ is always a linear transformation. 
Thus, for every semivalue, we can define \emph{Semivalue matrix} $S_n \in \R^{n \times 2^n}$ where $\phi(U) = S_n U$. 
We denote the $i$th row of $S_n$ as $(S_n)_i$, and the entry in the $i$th row corresponding to subset $S$ as $(S_n)_{i, S}$. 
It is not hard to see that 
\begin{align*}
    (S_n)_{i, S} &= \frac{1}{n} w(|S|) \text{  if } i \in S \\
    (S_n)_{i, S} &= -\frac{1}{n} w(|S|+1) \text{  if } i \notin S
\end{align*}

The Lipschitz constant of $\phi$ is thus equal to the operator norm of matrix $S_n$, which is the square root of the largest eigenvalue of matrix $S_n S_n^T$. Now we compute the eigenvalue of matrix $S_n S_n^T$.

For matrix $S_n S_n^T$, its diagonal entry is
\begin{align*}
    d_1 &= \sum_{S \in 2^n, i \in S} \frac{1}{n^2} w^2(|S|) +  \sum_{S \in 2^n, i \notin S} \frac{1}{n^2} w^2(|S|+1) \\
    &= \frac{1}{n^2} \left[ \sum_{k=1}^n {n-1 \choose k-1} w^2(k) + \sum_{k=1}^n {n-1 \choose k-1} w^2(k) \right] \\
    &= \frac{2}{n^2} \sum_{k=1}^n {n-1 \choose k-1} w^2(k)
\end{align*}

and its non-diagonal entry is 
\begin{align*}
    d_2 &= 
    \sum_{k=2}^n {n-2 \choose k-2} \left(\frac{1}{n} w(k) \right)^2 + 2 \sum_{k=1}^{n-1} {n-2 \choose k-1} \left(-\frac{1}{n^2} w(k) w(k+1) \right) +
    \sum_{k=0}^{n-2} {n-2 \choose k} \left(\frac{1}{n} w(k+1) \right)^2 \\
    &= \frac{1}{n^2} \sum_{k=0}^{n-2} {n-2 \choose k} \left[ w^2(k+2) - 2w(k+1)w(k+2) + w^2(k+1) \right] \\
    &= \frac{1}{n^2} \sum_{k=0}^{n-2} {n-2 \choose k} 
    \left( w(k+2) - w(k+1) \right)^2
\end{align*}

Therefore we can write $S_n S_n^T = (d_1 - d_2)\ind_n + d_2 \mathbf{1}_n$ where $\ind_n \in \R^{n \times n}$ is the identity matrix and $\mathbf{1}_n \in \R^n$ is all-one matrix. 
The two eigenvalues are $d_1 + (n-1)d_2$ and $d_1 - d_2$. 
Since $d_2 \ge 0$, the top eigenvalue is $d_1 + (n-1)d_2$. Therefore, our goal is to find weight function $w$ such that
\begin{align*}
    &\min_w d_1 + (n-1)d_2 \\
    &\text{subject to } \sum_{k=1}^{n}{n-1 \choose k-1} w(k)=n
\end{align*}
that is, we want to solve
\begin{align*}
    &\min_w  \frac{2}{n^2} \sum_{k=1}^n {n-1 \choose k-1} w(k)^2 + \frac{n-1}{n^2} \sum_{k=1}^{n-1} {n-2 \choose k-1} \left( w(k) - w(k+1) \right)^2  \\
    &\text{subject to } \sum_{k=1}^{n}{n-1 \choose k-1} w(k)=n
\end{align*}
Note that by Cauchy-Schwarz inequality, 
\begin{align*}
    n^2 
    &= \left( \sum_{k=1}^{n}{n-1 \choose k-1} w(k) \right)^2 \\
    &= \left( \sum_{k=1}^{n}\sqrt{ {n-1 \choose k-1} } \sqrt{ {n-1 \choose k-1} }  w(k) \right)^2 \\
    &\le \left( \sum_{k=1}^{n}{n-1 \choose k-1} \right)
    \left( \sum_{k=1}^{n}{n-1 \choose k-1} w(k)^2 \right) \\
    &= 2^{n-1} \sum_{k=1}^{n}{n-1 \choose k-1} w(k)^2
\end{align*}
Thus the first term in the objective function is lower bounded by $\frac{2}{2^{n-1}}$, which is achieved when $w(1) = \ldots = w(n) = \frac{n}{2^{n-1}}$, and this is also where the minimum of the second term achieved, i.e., just 0. 
Thus, the minimum possible $\min_w = d_1 + (n-1)d_2 = \frac{1}{2^{n-2}}$, and thus the operator norm of $S_n = \sqrt{\frac{1}{2^{n-2}}} = \frac{1}{2^{n/2-1}}$. 
\end{proof}

\newpage

\section{ Experiment Settings \& Additional Experimental Results }
\label{appendix:experiment}

We provide a summary of the content in this section for the convenience of the readers. 
\begin{itemize}
    \item Appendix \ref{appendix:experiment-figuretwo}: Experiment Settings for Figure \ref{fig:checkvar-value} and \ref{fig:topk} in the main text.
    \item Appendix \ref{appendix:settings-efficiency}: Experiment Settings for Sample Efficiency Experiment in Section \ref{sec:eval-efficiency}. 
    \item Appendix \ref{appendix:settings-app}: Experiment Settings for Applications in Section \ref{sec:eval-applications}. 
    \item Appendix \ref{appendix:additional-rank-stab}: Experiment Settings and Additional Results for Rank Stability Experiment in Section \ref{sec:eval-ranking}.
    \item Appendix \ref{appendix:settings-ranking}: Additional Results for Rank Stability Experiment on Tiny Datasets. 
    \item Appendix \ref{appendix:randomized-smoothing}: Additional Results for Rank Stability on Gradient Descent with Randomized Smoothing. 
\end{itemize}

\subsection{ Experiment Settings for Figure \ref{fig:checkvar-value} and \ref{fig:topk} in the Main Text. }
\label{appendix:experiment-figuretwo}

We estimate the LOO error, Shapley, and Banzhaf value on a size-2000 CIFAR10 dataset where we randomly flip the label of 10\% data points. We use the state-of-the-art estimator for the Shapley and Banzhaf value for Figure \ref{fig:checkvar-value} (b) and (c), i.e., Permutation sampling for the Shapley value and our MSR estimator for Banzhaf value. For both the Shapley and Banzhaf value, we set the number of samples as 50,000 where the sampling distributions depend on the corresponding estimators. 
We compute the LOO with its exact formula but with noisy utility scores, and we align the number of (potentially repeated) samples also as 50,000. For each sample $S$, we train 5 models on it with different random seeds, and obtain 5 noisy versions of $U(S)$. We then compute 5 different LOO/Shapley/Banzhaf values for 20 randomly selected CIFAR10 images (including 5 mislabeled images), and draw the corresponding box-plot in Figure \ref{fig:checkvar-value}. 
The learning architecture we use is a standard CNN adapted from PyTorch tutorial\footnote{\url{https://pytorch.org/tutorials/beginner/blitz/cifar10_tutorial.html}}, with batch size $32$, learning rate $10^{-3}$ and Adam optimizer for training. 

In Figure \ref{fig:topk}, we compare the stability of different data valuation techniques in maintaining a consistent set of top-influence data points.  Specifically, we follow the same protocol as the setting for Figure \ref{fig:checkvar-value} and compute 5 different versions of data value scores for each data point. 
We then count the percentage of data points that are consistently ranked in the top or bottom-$k\%$ across all the runs. 

\begin{remark}
    The results in Figure \ref{fig:checkvar-value} and \ref{fig:topk} in the main text also depend on the robustness of data valuation methods' corresponding estimator. We use the best-known estimators for each data valuation method in the experiment to reduce the impact of the estimation procedure as much as possible.
\end{remark}

\subsection{Experiment Settings for Sample Efficiency Experiment in Section \ref{sec:eval-efficiency}}
\label{appendix:settings-efficiency}

For Figure \ref{fig:convergence} (a), we use a synthetic dataset with only 10 data points. To generate the synthetic dataset, we sample 10 data points from a bivariate Gaussian distribution where the means are $0.1$ and $-0.1$ on each dimension, and the covariance matrix is the identity matrix. The labels are assigned to be the sign of the sum of the two features. The utility of a subset is the test accuracy of the model trained on the subset. A logistic regression classifier trained on the 10 data points achieves around 80\% test accuracy. 
We show the Banzhaf value estimate for \emph{one} data point in Figure \ref{fig:convergence} (a). 

For Figure \ref{fig:convergence} (b), we use a size-500 MNIST dataset. The Relative Spearman Index is computed by the Spearman Index of the ranking of the value estimates between the current iteration, and the ranking of the value estimates when given additional 1000 samples. 
The learning architecture we use is LeNet \citep{lecun1989handwritten}, with batch size $32$, (initial) learning rate $10^{-3}$ and Adam optimizer for training.

\subsection{Experiment Settings for Applications in Section \ref{sec:eval-applications}}
\label{appendix:settings-app}

\begin{table}[t]
\centering
\begin{tabular}{@{}cc@{}}
\toprule
\textbf{Dataset} & \textbf{Source}                        \\ \midrule
MNIST            & \citet{lecun1998mnist}                \\
FMNIST           & \citet{xiao2017fashion}               \\
CIFAR10          & \citet{krizhevsky2009learning}        \\
Click            & \url{https://www.openml.org/d/1218}  \\
Fraud            & \citet{dal2015calibrating}            \\
Creditcard       & \citet{yeh2009comparisons}            \\
Vehicle          & \citet{duarte2004vehicle}             \\
Apsfail          & \url{https://www.openml.org/d/41138} \\
Phoneme          & \url{https://www.openml.org/d/1489}  \\
Wind             & \url{https://www.openml.org/d/847}   \\
Pol              & \url{https://www.openml.org/d/722}   \\
CPU              & \url{https://www.openml.org/d/761}   \\
2DPlanes         & \url{https://www.openml.org/d/727}   
\\ \bottomrule
\end{tabular}
\caption{A summary of datasets used in Section \ref{sec:eval-applications}'s experiments.}
\label{tb:datasets}
\end{table}

\subsubsection{Datasets \& Models}
\label{appendix:settings-dataset}

A comprehensive list of datasets and sources is summarized in Table \ref{tb:datasets}. 
Similar to the existing data valuation literature \citep{ghorbani2019data, kwon2021beta, jia2019towards, wang2021improving}, we preprocess datasets for the ease of training. 
For Fraud, Creditcard, Vehicle, and all datasets from OpenML, we subsample the dataset to balance positive and negative labels. For these datasets, if they have multi-class, we binarize the label by considering $\ind[y=1]$.  
For the image dataset CIFAR10, we follow the common procedure in prior works \citep{ghorbani2019data, jia2019towards, kwon2021beta}: we extract the penultimate layer outputs from the pre-trained ResNet18 \citep{he2016deep}. The pre-training is done with the ImageNet dataset \citep{deng2009imagenet} and the weight is publicly available from PyTorch. We choose features from the class of Dog and Cat. The extracted outputs have dimension 512. For the image dataset MNIST and FMNIST, we directly train on the original data format, which is a more challenging setting compared with the previous literature. 

For MNIST and FMNIST, we use LeNet \citep{lecun1989handwritten}, with batch size $128$, (initial) learning rate $10^{-3}$ and Adam optimizer for training. For CIFAR10 dataset, we use a two-layer MLP where there are 256 neurons in the hidden layer, with activation function ReLU, with batch size $128$, (initial) learning rate $10^{-3}$ and Adam optimizer for training. For the rest of the datasets, we use a two-layer MLP where there are 100 neurons in the hidden layer, with activation function ReLU, (initial) learning rate $10^{-2}$, and Adam optimizer for training. 
We use batch size $128$ for Creditcard, Apsfail, Click, and CPU dataset, and batch size $32$ for the rest of datasets. 


\subsubsection{Experiment Settings}
\label{appendix:settings-hyper}

For MNIST, FMNIST, and CIFAR10, we consider the number of data points being valued as 2000. 
For Click dataset, we consider the number of data points to be valued as 1000. 
For Phoneme dataset, we consider the number of data points to be valued as 500. 
For the rest of the datasets, we consider the number of data points to be valued as 200. 
For each data value we show in Table \ref{tb:weighted-sample} and \ref{tb:mislabel-detection}, we use the corresponding state-of-the-art estimator to estimate them (for Data Shapley, we use Permutation Sampling; for Data Banzhaf, we use our MSR estimator; for Beta Shapley, we use the Monte Carlo estimator by \citet{kwon2021beta}). 
We stress that the Monte Carlo estimator by \citet{kwon2021beta} is not numerically stable when the training set size $> 500$, so for datasets with $> 500$ data points (MNIST, FMNIST, CIFAR10, and Click), we omit the results for Beta Shapley. We set the number of samples to estimate Data Banzhaf, Data Shapley, and Beta Shapley as 100,000. 

All of our experiments are performed on Tesla P100-PCIE-16GB GPU.

\paragraph{Learning with Weighted Samples. }
For each estimated data value, we normalize it to $[0, 1]$ by $\frac{\texttt{value}-\texttt{min}}{\texttt{max}-\texttt{min}}$. 
Let $\phi(i)$ be the \emph{normalized} value for data point $i$. We compare the test accuracy of a weighted risk minimizer $f_\phi$ defined as 
\begin{align}
    f_\phi := \argmin_f \sum_{i \in N} \phi(i) \mathrm{loss}_f(i)
\end{align}
where $\mathrm{loss}_f(i)$ denote the loss of $f$ on data point $i \in N$.

\paragraph{Noisy Label Detection. }
We flip 10\% of the labels by picking an alternative label from the rest of the classes uniformly at random.

\begin{figure}[h]
    \centering
    \includegraphics[width=\columnwidth]{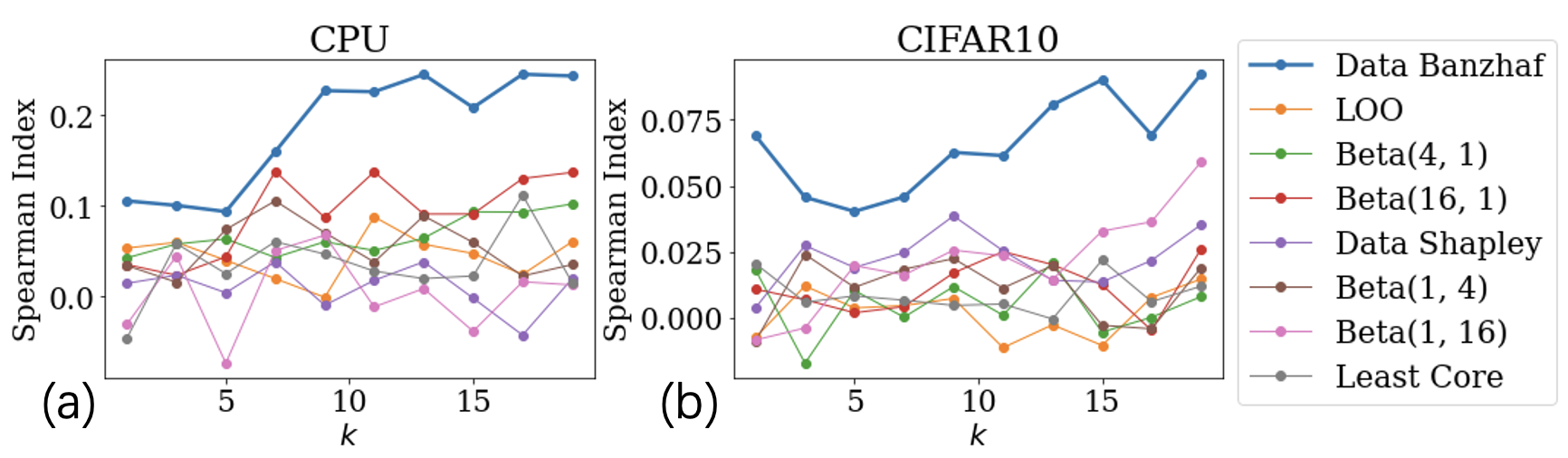}
    \caption{
    The stability of data value ranking measured by Spearman index between the ranking of ``ground-truth'' data value and the ranking of data value estimated from noisy utility scores, where (a) is on CPU dataset and (b) is on CIFAR10 dataset (same as Figure \ref{fig:full} except for the additional curve for the Least core). 
    }
    \label{fig:rank-stab-natural}
\end{figure}

\subsection{Experiment Settings and Additional Results for Rank Stability Experiment in Section \ref{sec:eval-ranking}}
\label{appendix:additional-rank-stab}


In this section, we describe the detailed settings and additional results on comparing the rank stability of different data values on natural datasets. We also evaluate an additional baseline, the \emph{least cores}, another existing data value notion which is \emph{not} a semivalue but also originates from cooperative game theory (see the description in Appendix \ref{appendix:related-work}). The estimation algorithm for the least core is the Monte Carlo algorithm from \citet{yan2020ifyoulike}.  

\paragraph{Settings. }
We experiment on `CPU' (200 data points) and `CIFAR10' (500 data points) datasets from Table \ref{tb:datasets}. 
The data preprocessing procedure, model training hyperparameters, and the estimation algorithm for semivalues are the same as what have described in Appendix \ref{appendix:settings-dataset} and \ref{appendix:settings-hyper}. The perturbations of the model performance scores are caused by the randomness in neural network initialization and mini-batch selection in SGD. 

The tricky part of the experiment design is that we need to find a way to adjust the scale of the perturbation caused by a natural stochastic learning algorithm. 
Our preliminary experiments show that the variance of performance scores does not have a clear dependency on the training hyperparameters such as mini-batch sizes. 
To solve this challenge, we design the following procedure to control the magnitude of the perturbation with a single parameter $k$:
\begin{enumerate}
    \item Sample $m$ data subsets $S_1$, \ldots, $S_{m}$ (the subset sampling distributions depend on specific semivalue estimators). 
    \item For each subset $S_i$, we execute $\widehat{U}(S_i)$ for $k$ times and obtain $k$ independent performance score samples $u_1, \ldots, u_{k} \sim \widehat{U}(S_i)$. We compute $\widetilde{U}_k(S_i) = \frac{1}{k} \sum_{j=1}^{k} u_{j}$. 
    \item Estimate the semivalue with the corresponding estimators based on samples $\widetilde{U}_k(S_1), \ldots, \widetilde{U}_k(S_m)$.
\end{enumerate}
In other words, $k$ is the number of runs that we execute a stochastic learning algorithm in order to estimate the expected utility on a given subset. With a larger $k$, the noise in the estimated utility will become smaller.
Since it is infeasible to compute the ground-truth data values for this experiment, we approximate the ground-truth by setting $k=50$ (called \emph{reference data value} in the caption of figures). 
As $k$ increases, the difference between $\widehat{U}_k(S_i)$ and the approximated ground-truth scores will be smaller with high probability. 
We set the budget of samples used to estimate the semivalues as $m = 2000$ (same for the ground-truth) for all semivalues for a fair comparison. 
It is worth noting that in this case, the rank stability is not just related to the property of the data value notion, but also the corresponding estimator. 

\paragraph{Results. }
We plot the Spearman index between the approximated ground-truth data value ranking and the estimated data value ranking with different $k$s in Figure \ref{fig:rank-stab-natural}, where (b) is the same figure we show in the main text (with the additional baseline of the Least core). As we can see, Data Banzhaf once again outperforms all other data value notions on both datasets. It achieves better rank stability than others by a large margin for a wide range of $k$s.

\subsection{Additional Results for Ranking Stability Experiment on Tiny Datasets}
\label{appendix:settings-ranking}

The ranking stability results in Section \ref{sec:eval-ranking} and Appendix \ref{appendix:additional-rank-stab} do not compute the exact data value but use the corresponding estimation algorithms. In this section, we present an additional result of ranking stability on tiny datasets when we are able to compute the exact data value. 

Similar to the evaluation protocol for sample efficiency comparison, we experiment on a synthetic dataset with a scale ($10$ data points) that we can compute the exact ranking for different data value notions. 
We use the same synthetic dataset as in the sample efficiency experiment in Section \ref{sec:eval-efficiency}. 
The performance score of a subset is the test accuracy of the Logistic regression model trained on the subset. 
\add{In Figure \ref{fig:rank-stab-gaussian}, we plot the Spearman index between the ranking of exact data values and the ranking of data values computed from noisy utility scores.}
\add{For each noise scale $\sigma$ on the x-axis of Figure \ref{fig:rank-stab-gaussian} (a), we add random Gaussian noise $\N(0, \sigma \textbf{I})$ to perturb the performance score. That is, $\widehat{U} = U + \N(0, \sigma \textbf{I})$. We then compute the Spearman index between the ranking of exact data values (derived from $U$) and the ranking of data values derived from noisy utility scores $\widehat{U}$. We repeat this procedure 20 times and take the average Spearman index for each point in the figure. }

\add{
The main considerations behind the design choices of synthetic dataset and Gaussian noise addition are the following:
\begin{packeditemize}
    \item In order to rule out the influence of estimation error, we would like to compute the \emph{exact} ranking of data points in terms of different data value notions, which means that we can only use a toy example with $\le 15$ data points. In this case, it does not make sense to use SGD for training. 
    \item According to our preliminary experiment results, the variance of performance scores does not have a clear dependency on the SGD's training hyperparameters such as mini-batch sizes. The relationship between performance variance and training hyperparameters is an interesting direction for future work. 
\end{packeditemize}
}


\begin{figure}[t]
    \centering
    \includegraphics[width=0.5\columnwidth]{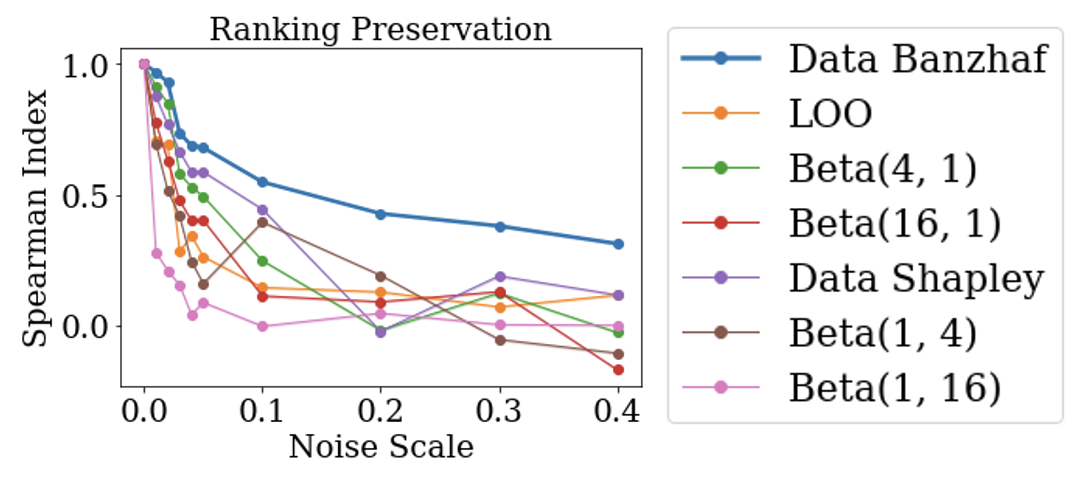}
    \caption{Impact of the noise in utility scores on the stability of data value ranking measured by Spearman index between the ranking of exact data value and the ranking of data value estimated from noisy utility scores.}
    \label{fig:rank-stab-gaussian}
\end{figure}



\subsection{Additional Results for Rank Stability on Gradient Descent with Randomized Smoothing}
\label{appendix:randomized-smoothing}

\newcommand{\loss}{\mathrm{loss}}

In our experiment, we mainly use SGD and its variants as the test case since SGD is arguably the most frequently used stochastic learning algorithm nowadays. However, the robustness guarantee derived in our theory (Section \ref{sec:databanzhaf}) is agnostic to the structure of perturbation, which means that it applies to perturbations caused by arbitrary kinds of learning algorithms. Therefore, we expect to get similar rank stability results when experimenting with other kinds of stochastic learning algorithms. For completeness, we perform an additional ranking stability experiment with another useful stochastic learning algorithm, \emph{gradient descent with randomized smoothing} \citep{duchi2012randomized}.

We use $\loss(\theta, N) = \sum_{i \in N} \loss(\theta, i)$ to denote the loss of model with parameter $\theta$ on the dataset $N$. For regular gradient descent, at iteration $t$, the model is updated as
\begin{align}
    \theta_{t+1} = \theta_t - \eta \g \loss(\theta_t, N)
\end{align}
where $\g$ denotes the derivative with respect to $\theta$. 
In contrast to the regular gradient descent, the randomized smoothing technique convolves Gaussian noise with the original learning loss function and the model is updated instead by ``smoothed gradient'':
\begin{align}
    \theta_{t+1} = \theta_t - \eta \frac{1}{\ell} \sum_{j=1}^\ell 
    \g \loss(\theta_t + \alpha \N(0, \textbf{I}), N)
\end{align}

We compare the rank stability of different semivalues on the CPU dataset and the CIFAR10 dataset, with exactly the same experiment setting as in Appendix \ref{appendix:additional-rank-stab}, except for replacing SGD-based training with the gradient descent with randomized smoothing technique. We set $\ell=1$ to introduce larger randomness, and the smoothing radius $\alpha$ to be equal to the learning rate. 
The results are shown in Figure \ref{fig:randomized-smoothing}. 
As we can see, Data Banzhaf once again outperforms all other data value notions when the sources of performance score perturbation are changed from SGD to gradient descent with randomized smoothing. 

\begin{figure}[t]
    \centering
    \includegraphics[width=\columnwidth]{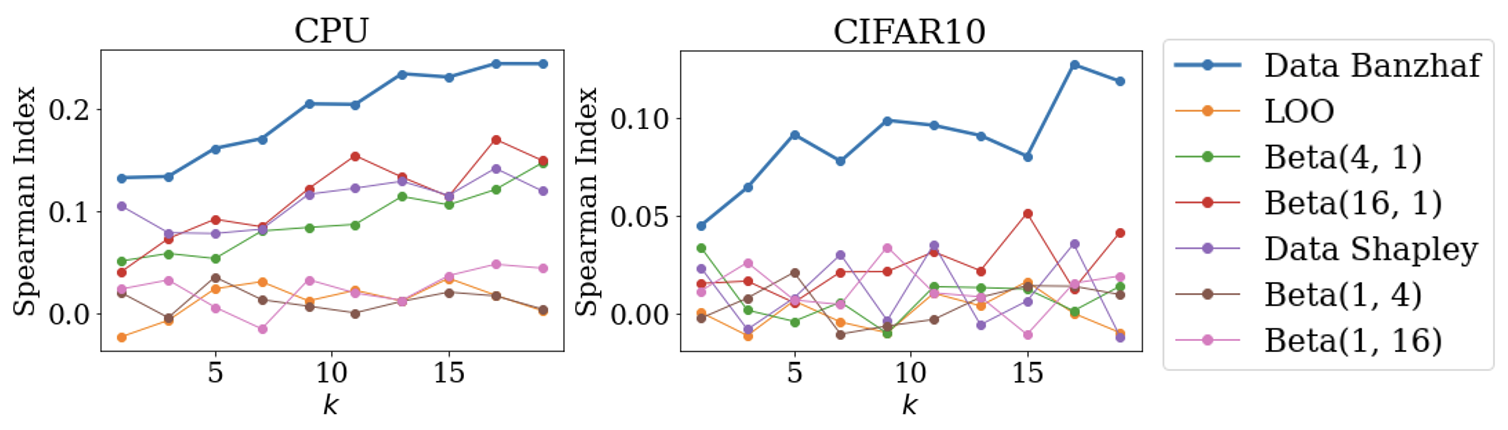}
    \caption{
    The stability of data value ranking measured by Spearman index between the ranking of reference data value and the ranking of data value estimated from noisy utility scores, where (a) is on CPU dataset and (b) is on CIFAR10 dataset. 
    }
    \label{fig:randomized-smoothing}
\end{figure}

\end{document}